\documentclass[opre,nonblindrev]{informs3aa}
\OneAndAHalfSpacedXI

\usepackage{times}
\usepackage{amsmath,amssymb}
\usepackage{graphicx}
\usepackage{needspace}
\usepackage{placeins}
\usepackage{url}
\usepackage{natbib}
\bibpunct[, ]{(}{)}{,}{a}{}{,}
\def\bibfont{\small}

\usepackage[colorlinks=true,breaklinks=true,bookmarks=false,
  urlcolor=blue,citecolor=blue,linkcolor=blue]{hyperref}

\theoremstyle{TH}
\newtheorem{theorem}{Theorem}
\newtheorem{proposition}[theorem]{Proposition}
\newtheorem{corollary}[theorem]{Corollary}
\newtheorem{lemma}[theorem]{Lemma}
\let\INFORMSproof\proof
\let\INFORMSendproof\endproof
\renewenvironment{proof}[1][Proof]
  {\INFORMSproof{#1.}}
  {\unskip\nobreak\hfill\ensuremath{\square}\INFORMSendproof}
\EquationsNumberedThrough

\newcommand{\IdealPathCumulativeProofSpace}{\Needspace{6\baselineskip}}

\begin{document}

\RUNAUTHOR{Xiao and Jia}
\RUNTITLE{Ideal Paths for Approximating Logistic Gradient Descent Trajectories at Large Initialization}
\TITLE{Ideal Paths for Approximating Logistic Gradient Descent Trajectories at Large Initialization}

\ARTICLEAUTHORS{\AUTHOR{Junjie Xiao$^{1,2}$, Huiwen Jia$^{2}$}
\AFF{$^{1}$Department of Mathematics, Peking University\\
$^{2}$Department of Industrial Engineering and Operations Research, University of California, Berkeley}
}

\ABSTRACT{

Modern training on a new task often starts from a previously trained model  rather than from scratch, raising the question of how this initialization affects the subsequent training trajectory. Classical implicit-bias results characterize the direction selected by prolonged training, but this direction alone does not provide information regarding the intermediate behavior.  We address this question through a geometric approximation of full-batch logistic gradient descent (GD) trajectories on strictly linearly separable data, with large initialization of scale $R$ motivated by prior training.  From any limiting normalized initial position, we use minimum-norm projection rules to construct a unique continuous \textit{ideal path} consisting of finitely many linear segments. The path has two stages: negative-margin correction followed by minimum-margin growth. We prove that, after an explicit two-stage time reparameterization, the fixed-step GD trajectory divided by $R$ converges uniformly to this path on every fixed parameter interval as $R\to\infty$. Further, our quantitative error bounds account for initialization perturbations and the transition between stages. This approximation provides asymptotic formulas for peak evaluation loss and cumulative training loss. In particular, peak evaluation loss can grow linearly in $R$ even when both endpoint losses tend to zero. The cumulative losses in the correction and margin-growth stages, normalized by $R^2$ and $R$, respectively, converge to explicit limits. Experiments on controlled geometries and fixed image features complement our theoretical results.

}

\KEYWORDS{logistic regression; gradient descent; large initialization; ideal paths; training dynamics}

\maketitle

\section{Introduction}
\label{sec:introduction}

Training on new tasks nowadays often starts from a previously trained model rather than from scratch. This raises the question of how initialization shapes the subsequent training trajectory. For separable logistic regression, the parameter norm diverges while its direction approaches the Euclidean maximum-margin separator \citep{soudry2018implicit}; optimization geometry and model homogeneity also shape asymptotic margin selection \citep{gunasekar2018geometry,lyu2020margin}. Reusing such parameters on another task provides a natural motivation for studying gradient descent at large initialization. Understanding the intervening trajectory can be of vital importance because evaluation loss can increase substantially during training even when it is small at both endpoints. However, the limiting direction alone does not determine this trajectory or the losses encountered along it.

As a first step toward addressing this question, we study full-batch
logistic gradient descent on fixed, strictly linearly separable data,
starting from $Ru_0+e_R$ as $R\to\infty$ and $\|e_R\|/R\to0$.
Here, $R>0$ controls the initialization scale.
The limiting normalized initial position $u_0$ is arbitrary and may
have negative margins.
We normalize the trajectory by $R$, rather than by its evolving norm,
to retain changes in both norm and direction.

Characterizing this trajectory requires accounting for changes in both
gradient direction and magnitude.
For logistic loss, large negative margins yield nearly constant
gradient contributions, whereas large positive margins yield
exponentially small ones.
The dominant samples can change as margin signs and orderings change,
and their relative contributions at zero margins and ties must be
determined in the limit.
Moreover, negative-margin correction and subsequent margin growth
occur on different time scales, requiring a stage-dependent
rescaling of iteration time.

We introduce an \emph{ideal path}, a
geometric approximation to the normalized GD trajectory at large
initialization.
Starting from $u_0$, the path first corrects any negative margins,
moving toward the region where all margins are nonnegative.
As positive margins grow, samples with the smallest margins govern
its subsequent motion.
Within each stage, the path follows straight segments and can turn
when margin signs or the set of smallest-margin samples change.
A time reparameterization accounts for the different rates of motion
in the two stages.
The path can be constructed directly from the initial position and
weighted data, independently of a GD simulation.

\subsection{Contributions}

Our main contributions are as follows.
\begin{itemize}

    \item We give a finite geometric construction of an \emph{ideal path}
    for logistic gradient descent at large initialization. The path
    consists of two stages: it first corrects negative margins and then
    grows the minimum margin. In both stages, the trajectory is
    piecewise linear and is determined by a minimum-norm projection rule
    (see Section~2).

    \item We show that this geometric path approximates the actual
    fixed-step GD trajectory. Under a two-stage time reparameterization,
    the GD trajectory normalized by $R$ converges uniformly to the ideal
    path on every fixed parameter interval, with error
    \[
        O\!\left(\frac{\|e_R\|}{R} + R^{-1/2}\right).
    \]
    The bound keeps track of perturbations in the initialization as well
    as the location and timing of the transition between the two stages.
    Under a standard smoothness-based step-size condition, the
    $R^{-1/2}$ term improves to $\log(2+R)/R$ (see Section~3).

    \item We use the path approximation to characterize quantities that
    depend on the full training trajectory rather than only its endpoint.
    In particular, evaluation loss can grow to order $R$ at an
    intermediate point even when both endpoint losses vanish. We also
    obtain different leading scales for cumulative training loss in the
    two stages: $R^2$ during negative-margin correction and $R$
    afterwards (see Section~4).

    \item We compare these predictions with fixed-step GD on controlled
    examples, random separable instances, and fixed image features. As
    the initialization scale increases, the observed trajectories
    approach the ideal paths, and the experiments also match the
    predicted behavior of peak evaluation loss and cumulative training
    loss (see Section~5).

\end{itemize}

\subsection{Related Work}
\label{sec:related-work}

Our work is closely related to three streams of literature: implicit bias and margin geometry, initialization scale and training trajectories, and sequential training and path observables.

\paragraph{Implicit bias and margin geometry.}
\citet{soudry2018implicit} establish convergence toward the Euclidean maximum-margin direction for logistic regression on separable data. \citet{nacson2019convergence} analyze how loss tails and step sizes influence margin convergence. Optimization geometry and model homogeneity shape implicit bias \citep{gunasekar2018geometry,lyu2020margin}. \citet{ji2021primaldual} characterize implicit bias through a smoothed-margin dual problem, relating GD to mirror descent in dual variables. For a broad class of convex decreasing losses, \citet{ji2020regularization} connect GD with norm-constrained risk minimization: both paths approach the same direction whenever either directional limit exists.

\paragraph{Initialization scale and training trajectories.}
\citet{woodworth2020kernel} show how initialization scale controls the transition between kernel and rich regimes in diagonal linear networks. \citet{moroshko2020initialization} study how initialization scale and training accuracy jointly determine implicit bias in diagonal linear classification under gradient flow. In the small-initialization limit, \citet{berthier2023incremental} characterizes successive coordinate activation in diagonal linear regression, while \citet{pesme2023saddle} describe saddle visits and the intervening motion through a time reparameterization. For matrix sensing with small initialization, \citet{jin2023incremental} show that GD successively approaches solutions of increasing rank. We study large initialization in linear logistic GD, obtaining a continuous geometric path and quantitative approximation for fixed-step updates through correction and subsequent margin growth.

\paragraph{Sequential training and path observables.}
Continual linear classification connects training geometry with forgetting. \citet{evron2023continual} study successive tasks trained to convergence on exponential loss with regularization toward the preceding parameter; the scaled task solutions approach sequential maximum-margin projections as regularization vanishes. \citet{jung2025continual} analyze unregularized GD with a fixed update budget per task, proving convergence toward the joint maximum-margin direction for cyclic or random sequences of jointly separable tasks, together with loss and forgetting bounds. Our analysis follows the trajectory within a task as its initialization scale grows, relating intermediate positions to evaluation-loss peaks and losses accumulated in GD time.

\section{Construction of the Ideal Path}
\label{sec:ideal-path}

We first specify logistic GD and its initialization, then construct the ideal path through negative-margin correction (Stage~1) and minimum-margin growth (Stage~2).

\subsection{Logistic GD at large initialization}
\label{sec:large-initialization}

Let $z_1,\ldots,z_n\in\mathbb R^d$ be fixed signed feature vectors, with each input multiplied by its label in $\{-1,+1\}$. Sample $i$ has margin $z_i^\top w$. For fixed weights $p_i>0$ with $\sum_i p_i=1$, the logistic loss is
\begin{equation*}
L(w)=\sum_{i=1}^n p_i\log\bigl(1+e^{-z_i^\top w}\bigr).
\end{equation*}
We assume strict separability: there is a vector $a$ such that $z_i^\top a>0$ for every $i$. Full-batch gradient descent with a fixed step size $\eta>0$ gives
\begin{equation*}
w_{k+1}=w_k+\eta\sum_{i=1}^n p_iq(z_i^\top w_k)z_i,
\qquad q(m)=\frac{1}{1+e^m}.
\end{equation*}
We study a family of initializations
\begin{equation}
w_0=Ru_0+e_R,\qquad R\longrightarrow\infty,\qquad
\frac{\|e_R\|}{R}\longrightarrow0,
\label{eq:initialization}
\end{equation}
where $u_0\in\mathbb R^d$ is fixed and may have negative or zero margins. The data, weights, and step size remain fixed. We observe $w_k/R$, so the initial positions approach $u_0$ while the paths retain changes in both norm and direction.

Such initializations arise after prolonged, unregularized GD on a previous separable linear logistic task. From a fixed initialization and with a sufficiently small fixed step size, the parameter $v_{T_{\mathrm{old}}}$ has diverging norm and a limiting Euclidean maximum-margin direction \citep[Theorem~3]{soudry2018implicit}. Taking $R=\|v_{T_{\mathrm{old}}}\|$ and $u_0$ as that limiting unit direction gives~\eqref{eq:initialization}; $\|e_R\|/R$ is the remaining directional error.

For a fixed negative margin $m$, $q(Rm)$ approaches one; for a fixed positive $m$, it is asymptotic to $e^{-Rm}$. Figure~\ref{fig:two-stage-path} follows the resulting two stages in a single example.

\subsection{Stage 1: Negative-Margin Correction}
\label{sec:correcting-margins}

Measure time by $\tau=\eta k/R$. Away from zero margins, the limiting velocity (the path derivative with respect to $\tau$) is the sum $\sum_{i:z_i^\top x<0}p_i z_i$. Every sample with a negative margin contributes its weighted feature vector. While the margin signs remain unchanged, this sum is constant and the path follows a straight line.

At a zero margin, the corresponding sample can contribute a fraction of its weight. The velocity is the vector of smallest Euclidean norm among
\begin{equation}
\sum_{i:z_i^\top x<0}p_i z_i
+\sum_{i:z_i^\top x=0}p_i\theta_i z_i,
\qquad 0\le\theta_i\le1.
\label{eq:correction-velocity}
\end{equation}
This rule determines whether the path leaves the boundary or travels along it. In Figure~\ref{fig:two-stage-path}(a), both samples initially contribute. When the second margin reaches zero, that sample ceases to contribute, and the path turns to correct the remaining error.

These rules have a compact description through the total negative margin,
\begin{equation}
H(x)=\sum_i p_i[-z_i^\top x]_+,\qquad
\dot y(\tau)\in-\partial H(y(\tau)),\qquad y(0)=u_0,
\label{eq:correction-flow}
\end{equation}
where $[t]_+=\max\{t,0\}$ and $\partial H$ is the convex subdifferential. This flow selects exactly the minimum-norm velocity in~\eqref{eq:correction-velocity} \citep[Proposition~2.2]{santambrogio2017gradient}. The value of $H$ decreases along the path until it reaches zero. Under strict separability, this happens in finite time. Write
\begin{equation*}
C=\{x:z_i^\top x\ge0\ \text{for all }i\},\qquad
\tau_*=\inf\{\tau:y(\tau)\in C\},\qquad E=y(\tau_*).
\end{equation*}
The correction flow stops at $E$, but GD continues; Stage~2 describes its subsequent motion on a different time scale.

\begin{figure}[htbp]
\centering
\includegraphics[width=\linewidth]{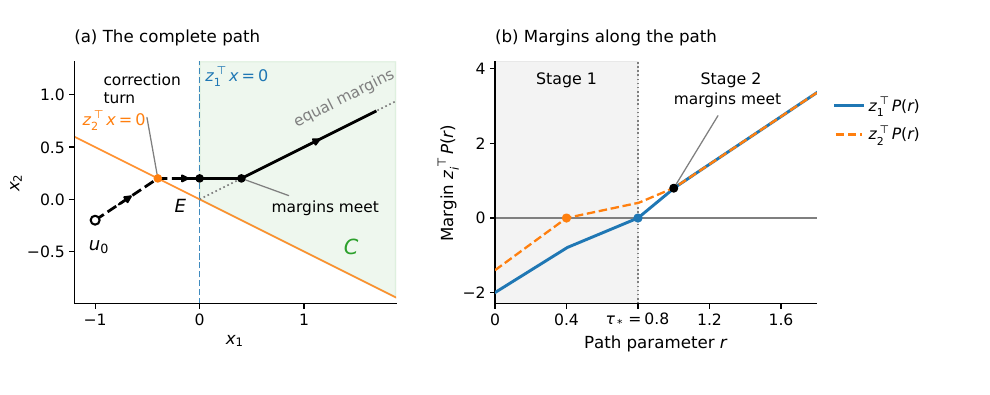}
\caption{\textbf{The two-stage ideal path.}
The analytic example uses $z_1=(2,0)$, $z_2=(1,2)$, $p_1=p_2=1/2$, and $u_0=(-1,-1/5)$.
(a) The dashed correction path turns when the second margin reaches zero and reaches the shaded nonnegative region $C$ at $E=(0,1/5)$. The solid continuation follows $z_1$ until the margins meet, then follows their equal-margin line with velocity $(8/5,4/5)$.
(b) The same path's margins, $z_i^\top P(r)$, show the correction turn at $r=2/5$, entry at $\tau_*=4/5$, and persistent equality from $r=1$. The horizontal axis uses the two-stage parameter defined in~\eqref{eq:complete-path}.}
\label{fig:two-stage-path}
\end{figure}

\subsection{Stage 2: Minimum-Margin Growth}
\label{sec:smallest-margins}

As positive margins grow, the gradient contributions become small, although their relative weights may change. Let $Q(w)=\sum_i p_iq(z_i^\top w)$. After entry into $C$, assign each GD step an increment $\eta Q(w_k)/R$ in the local time parameter $s$. The normalized displacement per unit $s$ is the weighted average $\sum_i p_iq(z_i^\top w_k)z_i/Q(w_k)$.

At a point with positive normalized margins, a fixed gap between two margins produces an exponentially large ratio between their contributions as $R$ grows. Thus a unique smallest-margin sample determines the limiting velocity. The path follows this feature vector until another sample attains the same minimum margin.

When several samples share the smallest margin, their relative contributions determine how the path continues. Define
\begin{equation*}
b(x)=\min_i z_i^\top x,\qquad I(x)=\{i:z_i^\top x=b(x)\}.
\end{equation*}
We call the samples in $I(x)$ active. The path velocity is the minimum-norm point in the convex hull of the active feature vectors:
\begin{equation}
v(x)=\operatorname*{arg\,min}_{v\in\operatorname{conv}\{z_i:i\in I(x)\}}\|v\|,
\qquad
\dot x_E(s)=v(x_E(s)),\qquad x_E(0)=E.
\label{eq:margin-flow}
\end{equation}
The equation holds almost everywhere. Equivalently, $x_E$ is the convex subgradient flow of $-b$, which gives a unique continuation from $E$.

The projection in~\eqref{eq:margin-flow} can select a single vector or a combination. Samples that remain tied gain margin at the same rate; those that gain margin faster than the minimum leave the active set. In the example, $z_1$ initially determines the motion from $E$. Once the two margins meet, the velocity becomes $3z_1/5+2z_2/5$, which keeps them equal. For a fixed entry point $E$, this geometric rule does not involve the sample weights. The weights still affect the complete path through the correction stage and its endpoint.

\subsection{Geometric Properties of the Ideal Path}
\label{sec:complete-path}

The two stages meet at the same position $E$. We combine them into the \emph{ideal path}
\begin{equation}
P(r)=
\begin{cases}
y(r),&0\le r\le\tau_*,\\
x_E(r-\tau_*),&r>\tau_*.
\end{cases}
\label{eq:complete-path}
\end{equation}
The first stage uses $\tau=r$ and the second uses the local time $s=r-\tau_*$. The corresponding increments in rescaled time are $\eta/R$ and $\eta Q(w_k)/R$, respectively. Both stages use the normalization $w_k/R$ and the same fixed-step GD updates; only the time parameter changes.

\Needspace{5\baselineskip}
\begin{proposition}[Geometry of the complete path]
\label{prop:complete-path}
Under the assumptions of Section~\ref{sec:large-initialization}, the correction stage reaches $C$ in finite time. The two flows determine a unique continuous path $P$ on $[0,\infty)$, with finitely many changes of velocity. Between changes, the path is a straight segment traversed at constant speed.
\end{proposition}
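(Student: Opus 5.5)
Both stages are subgradient flows of polyhedral convex functions, $H$ for Stage~1 and $-b$ for Stage~2. We handle each with the same three-part argument: existence and uniqueness, finite-time entry into $C$, and a finite piecewise-linear structure.

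\textbf{Existence, uniqueness, continuity.} Both $H$ and $-b$ are finite convex functions on $\mathbb R^d$. By the Brezis--K\=omura theory of gradient flows of convex functions \citep{santambrogio2017gradient}, each inclusion $\dot y\in-\partial H(y)$, $\dot x\in-\partial(-b)(x)$ has a unique absolutely continuous solution from any initial point. The right derivative equals the minimum-norm element of the negative subdifferential. For $H$ this element is exactly \eqref{eq:correction-velocity}. For $-b$ we have $\partial(-b)(x)=-\operatorname{conv}\{z_i:i\in I(x)\}$, so the selection is $v(x)$ in \eqref{eq:margin-flow}. Gluing $y$ on $[0,\tau_*]$ to $x_E$ at the common point $E$ gives a continuous and unique $P$, once $\tau_*<\infty$ is shown.

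\textbf{Finite entry time.} We use the standard identity $\frac{d}{d\tau}H(y)=-\|\dot y\|^2$ a.e. It then suffices to bound the minimum-norm subgradient away from zero outside $C$. Let $a$ be the separating vector and set $\gamma=\min_i z_i^\top a/\|a\|>0$. For $y\notin C$, the set $N(y)=\{i:z_i^\top y<0\}$ is nonempty. Any element $g=\sum_{N}p_iz_i+\sum_{z_i^\top y=0}\theta_ip_iz_i$ then satisfies
\[
\|g\|\ge g^\top a/\|a\|\ge \gamma\min_i p_i=:c>0.
\]
Hence $H(y(\tau))\le H(u_0)-c^2\tau$ while $y\notin C$. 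Since $H\ge0$ and $H=0$ exactly on $C$, this gives $\tau_*\le H(u_0)/c^2$.

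\textbf{Finitely many straight segments.} The same argument covers both flows. The space $\mathbb R^d$ is partitioned into finitely many relatively open polyhedral cells. For $H$, a cell fixes the sign pattern of $z_i^\top x$. For $-b$, a cell fixes the ordering and tie pattern of the margins. On each cell the subdifferential is a fixed polytope, so the minimum-norm selection $v$ is a constant vector. A standard property of subgradient flows says that once the trajectory lies in a cell with $v$ tangent to its affine hull, it moves along a straight line at constant speed $\|v\|$ until it reaches the cell's relative boundary. The key monotonicity step is to show that the velocity never leaves a cell and returns to it. For $H$, the set of nonpositive-margin indices can only shrink: a sample with $z_i^\top y\ge0$ keeps a nonnegative margin, by the lazy-selection property of the minimum-norm element, which is the content of the "ceases to contribute" description. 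For $-b$, $b$ is strictly increasing, since $v\neq0$ because every $z_i^\top v>0$. Tied samples stay tied, because minimum-norm projection in the convex hull of the active set gives equal rates $z_i^\top v=\|v\|^2$ on the support. Indices that are added to $I$ are never removed in a way that cycles.

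\textbf{Main obstacle and eventual stabilization.} The hard step is the monotonicity of the active set in Stage~2. Ties with $z_i^\top v>\|v\|^2$ leave immediately, while others join when margins meet. We must prove that the combinatorial state takes each value at most once, and that after finitely many changes the active set stabilizes. We will argue this through a Lyapunov ordering. Whenever a new index joins, the pair $(|I|,\ \|v\|)$ changes so that the quantity $\|v\|$ (the distance from $0$ to the active hull) is nonincreasing. It strictly decreases whenever the active hull gains a vertex that becomes part of the support. Since there are finitely many subsets of $\{1,\dots,n\}$, only finitely many changes can occur. After the last change, $v$ is constant, so the path is a ray. This final state corresponds to the max-margin-type face. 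The Stage~1 count is immediate, because $|N(y)|$ decreases.
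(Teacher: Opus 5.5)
Your existence, uniqueness and finite-entry arguments are fine; the entry bound $\tau_*\le H(u_0)/(\gamma p_*)^2$ is exactly what the paper obtains. The Stage~1 finiteness argument, however, rests on a false monotonicity. It is not true that a sample with $z_i^\top y\ge0$ keeps a nonnegative margin, nor that $|N(y)|$ decreases.

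Here is a counterexample. Take $z_1=(1,0)$ and $z_2=(-\tfrac12,1)$, which are separated by $a=(0.1,1)$. Take $p_1=0.1$, $p_2=0.9$ and $u_0=(0.1,-10)$.
\begin{itemize}
\item Initially $N=\{2\}$ and the velocity is $p_2z_2=(-0.45,0.9)$.
\item So $z_1^\top y$ falls from $0.1$ to zero at $\tau=2/9$, while $z_2^\top y=-9.8$ there.
\item At that point, the minimum-norm element of $\{p_2z_2+\theta p_1z_1:0\le\theta\le1\}$ would need $\theta=4.5$ without the constraint, so it is clipped to $\theta=1$.
\item This gives $v=(-0.35,0.9)$ with $z_1^\top v<0$. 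Sample~1 becomes negative and $N$ grows to $\{1,2\}$.
\end{itemize}
The ``lazy selection'' property keeps a zero margin on the boundary only when its coefficient lies strictly inside $[0,1]$. When the coefficient saturates at $1$, the margin is pushed negative. So the ``immediate'' Stage~1 count does not follow, and your cell decomposition gives no bound on how often the sign pattern can change.

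The missing idea is one you already half-use in Stage~2; the paper applies it to both stages. At each event, the previous velocity $v_-$ remains an admissible element of the new set:
\begin{itemize}
\item For correction, a margin arriving at zero had fixed coefficient $0$ or $1$, and this is replaced by the interval $[0,1]$, which contains it.
\item For margin growth, the previously tied support remains tied.
\end{itemize}
The new velocity is therefore the projection of $0$ onto a convex set containing $v_-$. This gives $\|v_-\|^2\ge\|v_+\|^2+\|v_--v_+\|^2$, so every actual change strictly lowers the speed. Each sign pattern (respectively, each active set) determines exactly one candidate velocity, so only finitely many changes can occur. Velocity-preserving events cannot accumulate either, because each margin (respectively, each gap $z_i^\top x-b(x)$) is affine on a segment and vanishes at most once.

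You should also state this step explicitly in Stage~2. Your claim that the combinatorial state takes each value at most once is not proved there, and you do not need it.
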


Within each stage, a change of velocity lowers the speed, and only finitely many velocities are possible. The path can therefore be constructed segment by segment using the two minimum-norm rules. It stays continuous at $E$, although its velocity may change.

If $u_0\in C$, then $\tau_*=0$ and $E=u_0$. The continuation is also well defined when $E=0$, so the same construction covers both cases.

\section{Convergence Results}
\label{sec:path-convergence}

\subsection{Uniform Convergence of GD Trajectories}
\label{sec:uniform-convergence}

We now compare the ideal path with the normalized GD trajectory under the two-stage time parameterization. For each initialization scale $R$, let
\begin{equation*}
\kappa_R=\min\{k\ge0:w_k\in C\},\qquad
\tau_R=\frac{\eta\kappa_R}{R}
\end{equation*}
be the index of the first GD iterate in $C$ and its rescaled iteration time. Lemma~\ref{lem:app-finite-entry} ensures that $\kappa_R$ is finite. We assign rescaled times to the iterates:
\begin{equation}
r_{R,k}=
\begin{cases}
\eta k/R,&k\le\kappa_R,\\[2pt]
\displaystyle\tau_R+\frac{\eta}{R}
\sum_{j=\kappa_R}^{k-1}Q(w_j),&k>\kappa_R,
\end{cases}
\qquad P_R(r_{R,k})=\frac{w_k}{R},
\label{eq:actual-path}
\end{equation}
and interpolate $P_R$ linearly between consecutive nodes. The node times increase without bound, so this defines a path for every $r\ge0$. Its time scale changes at the actual entry time $\tau_R$. The ideal path changes at $\tau_*$; the difference between these times is part of the approximation error.

\begin{theorem}[Convergence of the complete training path]
\label{thm:path-convergence}
Under the assumptions of Section~\ref{sec:large-initialization}, let $P_R$ be the path in~\eqref{eq:actual-path} and $P$ the ideal path in~\eqref{eq:complete-path}. For every fixed $T>0$, there is a constant $C_T$, depending only on the data, weights, $u_0$, $\eta$, and $T$, such that, for all sufficiently large $R$,
\begin{equation}
\sup_{0\le r\le T}\|P_R(r)-P(r)\|
\le C_T\left(\frac{\|e_R\|}{R}+\frac{1}{\sqrt R}\right).
\label{eq:whole-path-bound}
\end{equation}
If, in addition,
\begin{equation}
0<\eta\Lambda\le2,\qquad
\Lambda=\frac14\lambda_{\max}\left(\sum_i p_i z_i z_i^\top\right),
\label{eq:step-condition}
\end{equation}
then $R^{-1/2}$ in~\eqref{eq:whole-path-bound} can be replaced by $\log(2+R)/R$, with a possibly different constant $C_T$.
\end{theorem}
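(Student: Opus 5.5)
The proof splits at the entry time and treats the two stages separately, then glues them using the continuity of the ideal path at $E$. Both stages rest on one principle: each flow is a subgradient flow of a convex, piecewise-linear function, so it is stable under perturbations of the initial point and of the velocity field. I will use the standard estimate for subgradient flows of convex functions. If $\dot y\in-\partial H(y)$ and a perturbed curve satisfies $\dot{\tilde y}=g+\epsilon(t)$ with $g$ in an $\delta$-enlarged subdifferential of $H$ at $\tilde y$, then $\frac{d}{dt}\|\tilde y-y\|^2\le 2\delta+2\|\epsilon\|\|\tilde y-y\|$. This follows from monotonicity of $\partial H$, and it gives $\|\tilde y(t)-y(t)\|\lesssim \|\tilde y(0)-y(0)\|+\sqrt{\delta t}+\int\|\epsilon\|$. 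The $\sqrt{\delta}$ term is where $R^{-1/2}$ comes from in the general bound.

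\emph{Stage 1.} Write $x_k=w_k/R$. Then $x_{k+1}=x_k+\frac{\eta}{R}\sum_i p_i q(Rz_i^\top x_k)z_i$. I compare the step with $-\partial H(x_k)$ in an enlarged sense. The coefficient $q(Rm)$ lies within $e^{-R|m|}$ of $\mathbf 1[m<0]$, and it is a valid $\theta_i\in[0,1]$ whenever $m=0$. So the step direction $g_k$ belongs to $-\partial_\delta H(x_k)$ with $\delta=O(\sup_m |m|\,q(R|m|))=O(1/R)$, since $\sup_m m e^{-Rm}\le 1/(eR)$. Summing the discrete version of the inequality over $k\le RT/\eta$ steps of size $\eta/R$ adds a discretization term $O((\eta/R)^2)$ per step. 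Together with the initial error $\|e_R\|/R$, this gives
\[
\sup_{r\le \min(\tau_R,T)}\|P_R(r)-y(r)\|\lesssim \|e_R\|/R+R^{-1/2}.
\]
For the entry times, strict separability supplies the estimate $H(x)\ge c\,\mathrm{dist}(x,C)$ with $C$ polyhedral, and $H$ decreases at a rate bounded below along $y$ until $\tau_*$. Both facts together yield $|\tau_R-\tau_*|\lesssim$ the same error, so that $\|x_{\kappa_R}-E\|$ is of the same order. Finite entry of the GD iterates is Lemma~\ref{lem:app-finite-entry}.

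\emph{Stage 2.} After entry, I use the time $s$ with increments $\eta Q(w_k)/R$. The normalized step is $x_{k+1}-x_k=\frac{\eta Q(w_k)}{R}\,\bar z_k$, where $\bar z_k=\sum_i \pi_{k,i}z_i$ and the weights are $\pi_{k,i}=p_iq(Rz_i^\top x_k)/Q(w_k)$. Because $q(Rm)\approx e^{-Rm}$ for $m\ge0$, these weights are essentially a softmax of $-R z_i^\top x_k$. Hence $\bar z_k\in\partial_\delta(-b)(x_k)$ with $\delta=O(\log(n/\min p_i)/R)$: samples whose margin exceeds $b+\delta$ carry total weight $e^{-R\delta}$-small. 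The step size in $s$ is at most $\eta/R$, so the same monotonicity argument gives error $\|x_{\kappa_R}-E\|+O(\sqrt{T\delta})+O(\sqrt{\eta T/R})$ against $x_{E}$. The flow $x_{E'}$ is $1$-Lipschitz in its starting point $E'$, so the error at entry propagates without amplification. The two reparameterized clocks are offset by $\tau_R-\tau_*$ and the path speeds are bounded by $\max\|z_i\|$, so the offset contributes linearly. This completes \eqref{eq:whole-path-bound}.

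\emph{Refinement under \eqref{eq:step-condition}.} The $\sqrt{\cdot}$ losses come only from using monotonicity crudely. Proposition~\ref{prop:complete-path} says $P$ is piecewise linear with finitely many segments, so I would instead argue segment by segment. On each segment the active set is fixed, and the GD iterates stay within $O(\log R/R)$ of the relevant face or tie set. Here the step condition enters: it makes $L$ (and its restriction near a kink) behave so that one GD step cannot overshoot a face by more than $O(1/R)$ in normalized units, so the iterates hug the face instead of zigzagging with amplitude $\sqrt{\cdot}$. The tie weights then converge to the minimum-norm combination up to $O(\log R/R)$, since softmax ties at scale $R$ fix the margins to within $\log R/R$. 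At each of the finitely many junctions, the errors add with a constant factor. The \textbf{main obstacle} is exactly this tracking near kinks and ties: showing that the discrete iterates on a tie set or on $\partial C$ stay within $O(\log R/R)$ of it, rather than $O(R^{-1/2})$. I would first try a Lyapunov function of the form $\max_{i\in I}z_i^\top x-\min_{i\in I} z_i^\top x$ for the tie spread, contracted by the step condition. As a fallback I would accept the weaker general bound.
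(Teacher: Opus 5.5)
The main gap is that the second assertion of the theorem is not proved. You leave the $\log(2+R)/R$ refinement as an open obstacle with a fallback to $R^{-1/2}$, and your diagnosis puts the difficulty in the wrong stage.

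\emph{After entry, no step condition is needed.} GD in local time stays within a constant $C_0$ (unnormalized, uniformly in the entry point) of the negative gradient flow $U$ of the log-sum-exp potential (Lemma~\ref{lem:app-reference-comparison}). Then $U(Rs;RE)/R$ is compared with $x_E$ segment by segment in moving coordinates. The comparison point $h_*+d\log(2+R)$ makes the time-dependent potential exceed its lower bound by only $O(R^{-3})$ (Lemma~\ref{lem:app-reference-approximation}). This is the rigorous form of your ``softmax ties fix margins to within $\log R/R$'' heuristic, and it gives $O(\|w_{\kappa_R}/R-E\|+\log(2+R)/R)$ for every $\eta$.

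\emph{The step condition is used only in the correction stage, and for a different purpose.} If $\eta\Lambda\le2$, the averaged Hessian between two points has spectrum in $[0,\Lambda]$, so $I-\eta\nabla L$ is nonexpansive. GD is then compared not with $Ry(t/R)$ but with per-segment references $Ry(t/R)+h_{j,R}$. These satisfy $\|h_{j,R}\|=O(\log R)$ and $\|g(Ry(t/R)+h_{j,R})-v_j\|=O(R^{-2})$ away from segment endpoints. Building $h_{j,R}$ requires realizing any point of the zonotope $\{\sum_{i\in Z_j}p_i\theta_iz_i:\theta\in[0,1]^{Z_j}\}$ by logistic coefficients, including boundary points with some $\theta_i\in\{0,1\}$. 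The paper does this with an exposed face and a logarithmic push along the exposing vector (Lemma~\ref{lem:b5-logistic-balance}). Nonexpansiveness then makes all errors additive:
\begin{itemize}
\item $O(R^{-2})$ per step on the $O(R)$ good steps;
\item $O(1)$ per step on the $O(\log R)$ steps near segment endpoints;
\item $O(\log R)$ per change of reference;
\item exactly $\|e_R\|$ from the initialization.
\end{itemize}
Keeping iterates near a face or tie set, which is what your tie-spread Lyapunov function targets, does not control the velocity GD realizes there. That control is the actual issue.

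For the first assertion, your Stage~1 tracking is the paper's argument in other words: $\nabla F_R(x)\in\partial_{\log2/R}H(x)$ is exactly the sandwich $0\le L(Rx)/R-H(x)\le\log2/R$. Your direct $\varepsilon$-subgradient comparison with $x_E$ after entry is a legitimate alternative to the detour through $U$, although it only yields $R^{-1/2}$ there. Two steps are missing.

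\emph{Late entry.} Neither $H(x)\ge c\,\mathrm{dist}(x,C)$ nor the decrease of $H$ along $y$ controls how long GD itself needs to land exactly in $C$ once it is $\varepsilon$-close to $E$ with slightly negative margins. Tracking stopped at $\tau_R$ is circular here. You need a GD-side progress bound. For example, while some margin is negative, $Q(w_k)>p_*/2$ and $a^\top g(w_k)\ge\gamma Q(w_k)$, so $a^\top w_k/R$ grows at rate at least $\gamma p_*/2$ in $\tau$. Unstopped tracking against $y\equiv E$ on $[\tau_*,\tau_*+1]$ keeps it within $\varepsilon$ of $a^\top E$, which bounds the delay by $O(\varepsilon+1/R)$. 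The paper instead applies Lemma~\ref{lem:app-finite-entry} from the node $\lceil R\tau_*/\eta\rceil$.

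\emph{Post-entry transient.} Note the sign: the correct inclusion is $\bar z_k\in-\partial_\delta(-b)(x_k)$. With $\delta=O(1/R)$ it rests on the weighted gap $\sum_i\pi_{k,i}(z_i^\top x_k-b(x_k))$ being $O(1/R)$. Your softmax justification gives this only while normalized margins stay above $-O(1/R)$. Margins can turn negative again after $\kappa_R$, and for a fixed, possibly large $\eta$ you give no bound on that excursion. Lemma~\ref{lem:app-small-loss} supplies it: from any point of $C$, GD reaches $\{L\le\delta\}$ within a uniform number $N_0$ of steps and is a descent method with positive margins thereafter. Hence margins after entry never fall below $-\eta M^2N_0$.
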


The first term measures the normalized initialization error; the second measures the discrepancy between finite-scale GD and its geometric limit. Thus all initializations satisfying~\eqref{eq:initialization} approach the same path, uniformly on each fixed window.

\subsection{Proof Sketch}
\label{sec:convergence-mechanism}

The proof first compares the correction trajectory with the flow of the total negative margin. The scaled logistic loss satisfies
\begin{equation}
0\le\frac{L(Rx)}{R}-H(x)\le\frac{\log2}{R}
\qquad\text{for every }x.
\label{eq:loss-hinge-comparison}
\end{equation}
The time steps $\eta/R$ vanish as $R$ grows, even though $\eta$ is fixed. Convexity and bounded update directions then give uniform tracking of $y$ on each fixed interval of scaled iteration time, with error $O(\|e_R\|/R+R^{-1/2})$.

Next, strict separability controls arrival at $C$. Before ideal entry, $H$ decreases at a rate bounded away from zero, which limits how early GD can arrive. At ideal arrival, the tracking estimate bounds GD's remaining negative margins; separability controls the additional time needed to remove them. Together, these comparisons give
\begin{equation}
\left\|\frac{w_{\kappa_R}}R-E\right\|
+|\tau_R-\tau_*|
=O\left(\frac{\|e_R\|}{R}+R^{-1/2}\right).
\label{eq:entry-control}
\end{equation}
Thus GD passes between stages at a nearby position and a nearby time.

For the continuation, we compare GD in its local post-entry time with the flow of a smooth convex approximation to $-b$. This comparison remains valid through the short transition when some margins may become negative again. Convexity prevents the flow from amplifying its initial-position error. The two ideal stages also have bounded speed, so the entry-time difference contributes at most $M|\tau_R-\tau_*|$, where $M=\max_i\|z_i\|$. The same speed bound covers the interval when only one path has changed stages. Combining these comparisons with~\eqref{eq:entry-control} yields~\eqref{eq:whole-path-bound}.

Under~\eqref{eq:step-condition}, a GD update does not increase the distance between two parameter vectors \citep{bauschke2010baillon}. Comparing along the finitely many correction segments then improves the correction and entry estimates to $O(\|e_R\|/R+\log(2+R)/R)$. The same argument gives the stronger bound in Theorem~\ref{thm:path-convergence}.

\section{Path Functionals}
\label{sec:path-functionals}

\subsection{Convergence of Path Functionals}
\label{sec:functional-convergence}

A path functional assigns a value to an entire training trajectory. We now use path convergence to study two such quantities: peak evaluation loss and cumulative training loss.

Fix $S>\tau_*$ and stop at $N_R=\min\{k:r_{R,k}\ge S\}$. Write $\Gamma_R=((w_k,\eta k))_{k=0}^{N_R}$ for the actual trajectory, retaining its parameter values and GD times. We measure $\mathcal F[\Gamma_R]$ and choose a scale $a_R>0$.

\begin{proposition}[Convergence of path functionals]
\label{prop:functional-transfer}
Under the assumptions of Theorem~\ref{thm:path-convergence}, let $\Phi$ be a functional on continuous paths over $[0,S]$, continuous at $P$ in the uniform norm. If $\mathcal F[\Gamma_R]/a_R$ can be approximated by $\Phi[P_R]$ with an error tending to zero, then
\begin{equation*}
\frac{\mathcal F[\Gamma_R]}{a_R}
=\Phi[P_R]+o(1)
\longrightarrow\Phi[P].
\end{equation*}
\end{proposition}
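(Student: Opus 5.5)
The statement is a transfer principle, so the proof will be a short composition of three facts: the hypothesis on $\mathcal F$, uniform convergence of $P_R$ to $P$ on $[0,S]$, and continuity of $\Phi$ at $P$. I will write
\[
\frac{\mathcal F[\Gamma_R]}{a_R}-\Phi[P]
=\Bigl(\frac{\mathcal F[\Gamma_R]}{a_R}-\Phi[P_R]\Bigr)+\bigl(\Phi[P_R]-\Phi[P]\bigr).
\]
The first bracket tends to zero by hypothesis; this is the displayed identity $\mathcal F[\Gamma_R]/a_R=\Phi[P_R]+o(1)$. It remains to show $\Phi[P_R]\to\Phi[P]$.

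First I will make sure $P_R$ restricted to $[0,S]$ is an admissible argument of $\Phi$. By~\eqref{eq:actual-path}, $P_R$ is the linear interpolation of the nodes $r_{R,k}$, which increase without bound. Hence $P_R$ is a continuous path defined on all of $[0,S]$, and $N_R$ is finite. Only the nodes up to index $N_R$ (together with the interpolation from the preceding node) are needed to define $P_R$ on $[0,S]$. This matches the data retained in $\Gamma_R$, so $\Phi[P_R]$ is a functional of the observed trajectory.

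Next I will apply Theorem~\ref{thm:path-convergence} with $T=S$. For all sufficiently large $R$,
\[
\sup_{0\le r\le S}\|P_R(r)-P(r)\|\le C_S\bigl(\|e_R\|/R+R^{-1/2}\bigr).
\]
The right side tends to zero because $\|e_R\|/R\to0$ by~\eqref{eq:initialization}. Thus $P_R\to P$ uniformly on $[0,S]$, where $P$ is continuous by Proposition~\ref{prop:complete-path}.

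Continuity of $\Phi$ at $P$ in the uniform norm then gives $\Phi[P_R]\to\Phi[P]$. Formally: given $\varepsilon>0$, choose $\delta$ from continuity, then $R_0$ so that the sup-distance is below $\delta$ for $R\ge R_0$. Adding the two brackets completes the proof.

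The only delicate point is bookkeeping. Continuity is required only at the single path $P$, so no uniformity over a neighbourhood is needed. The stopping rule defining $N_R$ must also be consistent with restricting to $[0,S]$, which holds by the definition $N_R=\min\{k:r_{R,k}\ge S\}$. The substantive work, namely verifying the approximation hypothesis for peak evaluation loss and cumulative loss with the scales $R$, $R^2$ and $R$, is deferred to the specific functionals. It is not part of this proposition.
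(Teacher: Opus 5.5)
Your proposal is correct and follows the paper's proof. The paper likewise bounds $|\mathcal F[\Gamma_R]/a_R-\Phi[P]|$ by $\beta_R+|\Phi[P_R]-\Phi[P]|$, where $\beta_R\to0$ by hypothesis and the second term vanishes by continuity of $\Phi$ at $P$ together with the uniform path convergence of Theorem~\ref{thm:path-convergence}; it also records the quantitative form $\beta_R+\omega_\Phi(\varepsilon_R)$, which your $\varepsilon$--$\delta$ argument gives as well.
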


Applying this principle requires identifying $\Phi$ and verifying the scaled approximation. For cumulative quantities, this includes converting GD time to the path parameter.

\subsection{Peak Evaluation Loss}
\label{sec:maximum-functional}

We first study the maximum $\max_{0\le k\le N_R}f(w_k)$ of a scalar-valued function $f$ along the training trajectory.

\begin{corollary}
\label{cor:maximum-functional}
Under the assumptions of Theorem~\ref{thm:path-convergence}, suppose $f(Rx)/a_R\to f_\infty(x)$ locally uniformly, where $f_\infty$ is continuous. Then
\begin{equation*}
\frac1{a_R}\max_{0\le k\le N_R}f(w_k)
\longrightarrow
\max_{0\le r\le S}f_\infty(P(r)).
\end{equation*}
\end{corollary}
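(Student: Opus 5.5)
Everything reduces to Theorem~\ref{thm:path-convergence} and Proposition~\ref{prop:functional-transfer}, with $\Phi[\gamma]=\max_{0\le r\le S}f_\infty(\gamma(r))$ and $\mathcal F[\Gamma_R]=\max_{0\le k\le N_R}f(w_k)$. Since $P$ is continuous on $[0,S]$, its image $K=P([0,S])$ is compact. Let $K_1=\{x:\operatorname{dist}(x,K)\le1\}$. By the theorem with $T=S+1$ and the hypothesis $\|e_R\|/R\to0$, we have $\sup_{r\le S+1}\|P_R(r)-P(r)\|\to0$. Hence for large $R$ the whole normalized trajectory on $[0,S+1]$ lies in $K_1$. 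The function $f_\infty$ is uniformly continuous on $K_1$, so $\Phi$ is continuous at $P$ in the uniform norm, and $\Phi[P_R|_{[0,S]}]\to\Phi[P]$.

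Next I verify that $\mathcal F[\Gamma_R]/a_R$ is close to a maximum over the path. The iterates satisfy $w_k/R=P_R(r_{R,k})$ for $k\le N_R$. Since $r_{R,N_R-1}<S$, all nodes except possibly the last lie in $[0,S)$. To control the last node, I bound the node spacing. Consecutive nodes differ by $\eta/R$ or $\eta Q(w_j)/R\le\eta/R$, so $r_{R,N_R}\le S+\eta/R\le S+1$. Local uniform convergence on $K_1$ gives
\begin{equation*}
\frac{1}{a_R}\max_{k\le N_R}f(w_k)=\max_{k\le N_R}f_\infty(P_R(r_{R,k}))+o(1).
\end{equation*}

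It remains to compare this discrete maximum with $\max_{r\le S}f_\infty(P(r))$. In the upper direction, each node satisfies $\|P_R(r_{R,k})-P(\min(r_{R,k},S))\|\le\|P_R(r_{R,k})-P(r_{R,k})\|+\|P(r_{R,k})-P(S)\|$. The last term is at most $M\eta/R$ by the bounded speed of $P$ from Proposition~\ref{prop:complete-path}. Uniform continuity of $f_\infty$ then bounds the discrete maximum by $\Phi[P]+o(1)$.

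In the lower direction, for every $r\in[0,S]$ there is a node $r_{R,k}$ with $k\le N_R$ and $|r_{R,k}-r|\le\eta/R$. Then $\|P_R(r_{R,k})-P(r)\|\le\|P_R-P\|_\infty+M\eta/R\to0$ uniformly in $r$. This gives the reverse inequality $\ge\Phi[P]-o(1)$. Combining both directions yields the claim, in the form prescribed by Proposition~\ref{prop:functional-transfer}.

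The main obstacle, though mild, is the \emph{mesh control} in the last two steps. In Stage~2 the increments $\eta Q(w_j)/R$ could in principle be tiny, so it is not enough to note that every node of $P_R$ is some path point. Every $r\in[0,S]$ must be approached by a node, and this is what the bound $Q\le1$ together with the monotone, unbounded node times guarantees. I would state that bound explicitly. The only other care needed is applying local uniform convergence on the fixed compact set $K_1$ rather than on a set that varies with $R$.
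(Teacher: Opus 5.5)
Your proof is correct and follows essentially the same route as the paper. Both arguments use uniform path convergence and the $\eta/R$ mesh bound. The upper bound compares each node with $P(\min\{r_{R,k},S\})$, the lower bound uses a node within $\eta/R$ of a maximizer, and both are closed by the modulus of continuity of $f_\infty$ on a fixed compact set.

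One small inaccuracy concerns your claim that $P_R([0,S+1])$ lies in the $1$-neighborhood of $P([0,S])$. That need not hold, because the ideal path can travel a distance up to $M$ on $[S,S+1]$. You only use the nodes with $r_{R,k}\le S+\eta/R$, however, and these do lie in $K_1$ for large $R$. Alternatively, take $K=P([0,S+1])$, which is what the paper's compact ball effectively does.
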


Here $\Phi[\gamma]=\max_{0\le r\le S}f_\infty(\gamma(r))$, whose continuity follows from that of $f_\infty$. For a concrete choice, take finitely many fixed signed evaluation features $a_j$ with weights $\rho_j>0$, $\sum_j\rho_j=1$, and define
\begin{equation}
L_{\mathrm{ev}}(w)=\sum_j\rho_j\log(1+e^{-a_j^\top w}),\qquad
H_{\mathrm{ev}}(x)=\sum_j\rho_j[-a_j^\top x]_+.
\label{eq:evaluation-loss}
\end{equation}
The bound in~\eqref{eq:loss-hinge-comparison} gives $L_{\mathrm{ev}}(Rx)/R\to H_{\mathrm{ev}}(x)$ uniformly. Since $H_{\mathrm{ev}}$ is convex, its maximum on each straight segment occurs at an endpoint. The limiting peak is therefore determined by finitely many path vertices, including the stage-transition point $E$ and both endpoints. Its Lipschitz continuity also makes the normalized peak inherit the path error order.

For Figure~\ref{fig:functional-example}, append a zero third coordinate to the training vectors of Figure~\ref{fig:two-stage-path} and set $u_{0,3}=1$. With evaluation feature $a=(-2/5,1,-2/25)$ and $S=9/5$, the path crosses $a^\top x<0$ and then leaves. The normalized evaluation loss peaks at $1/25$ when $r=1$, while both endpoint margins are positive. Thus the actual peak is $R/25+o(R)$ even though the initial and final evaluation losses tend to zero.

\begin{figure}[htbp]
\centering
\includegraphics[width=\linewidth]{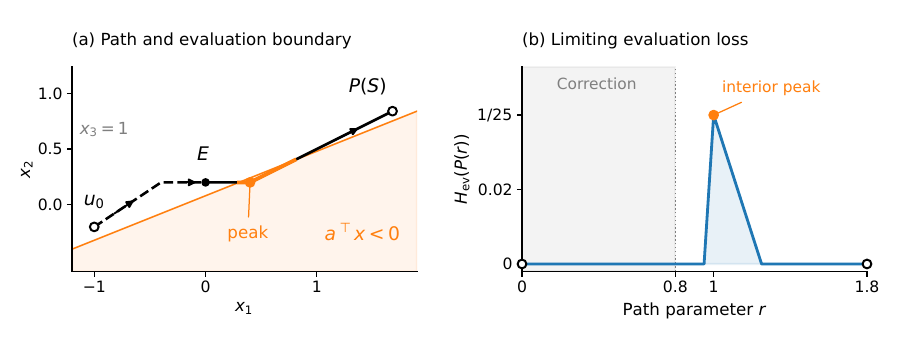}
\caption{\textbf{An intermediate peak in evaluation loss.}
The analytic example follows the training geometry of Figure~\ref{fig:two-stage-path} in the invariant plane $x_3=1$.
(a) The path crosses the shaded evaluation region $a^\top x<0$; correction is dashed and negative evaluation margins are marked in orange.
(b) The limiting normalized evaluation loss peaks at $1/25$, although it vanishes at both endpoints. Both panels are computed from the ideal path.}
\label{fig:functional-example}
\end{figure}

\subsection{Cumulative Training Loss}
\label{sec:cumulative-functional}

Unlike a maximum, cumulative loss depends on the GD time spent along the path. We sum the training loss in GD time $t=\eta k$ and split at the first iterate in $C$:
\begin{equation}
C_R^-=\eta\sum_{k=0}^{\kappa_R-1}L(w_k),\qquad
C_R^+=\eta\sum_{k=\kappa_R}^{N_R-1}L(w_k),\qquad
C_R=C_R^-+C_R^+.
\label{eq:cumulative-loss-definition}
\end{equation}
Since $S>\tau_*$, we have $N_R>\kappa_R$ for sufficiently large $R$. Writing $\Delta r_{R,k}=r_{R,k+1}-r_{R,k}$, the time change gives
\begin{equation*}
\frac{C_R^-}{R^2}=\sum_{k<\kappa_R}\Delta r_{R,k}\frac{L(w_k)}R,
\qquad
\frac{C_R^+}{R}=\sum_{k=\kappa_R}^{N_R-1}\Delta r_{R,k}\frac{L(w_k)}{Q(w_k)}.
\end{equation*}
The factor $1/Q$ accounts for the many updates needed to traverse the second stage. It compensates for the small training loss there.

\Needspace{10\baselineskip}
\begin{corollary}
\label{cor:cumulative-loss}
Under the assumptions of Theorem~\ref{thm:path-convergence}, for every fixed $S>\tau_*$,
\begin{equation}
\frac{C_R^-}{R^2}\longrightarrow\int_0^{\tau_*}H(y(r))\,dr,
\qquad
\frac{C_R^+}{R}\longrightarrow S-\tau_*.
\label{eq:cumulative-loss-limits}
\end{equation}
Here $H$ is the total negative margin defined in~\eqref{eq:correction-flow}.
In particular, $C_R/R^2\to\Phi[P]$, where the continuous path functional is
$\Phi[\gamma]=\int_0^S H(\gamma(r))\,dr$.
\end{corollary}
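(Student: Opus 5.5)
We treat the two stages separately and apply Proposition~\ref{prop:functional-transfer} to each, with scales $R^2$ and $R$. Theorem~\ref{thm:path-convergence} gives uniform convergence of $P_R$ to $P$ on $[0,S]$. Together with \eqref{eq:entry-control}, it also gives $\tau_R\to\tau_*$.

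\emph{Correction stage.} For $k<\kappa_R$ we have $\Delta r_{R,k}=\eta/R$. By \eqref{eq:loss-hinge-comparison}, $L(w_k)/R=H(w_k/R)+O(1/R)$. Since $w_k/R=P_R(r_{R,k})$, this gives
\[
\frac{C_R^-}{R^2}=\sum_{k<\kappa_R}\frac{\eta}{R}\,H\bigl(P_R(r_{R,k})\bigr)+O\!\left(\frac{\tau_R}{R}\right),
\]
and the error term vanishes because $\tau_R$ is bounded. The sum is a left Riemann sum of $r\mapsto H(P_R(r))$ over $[0,\tau_R]$ with mesh $\eta/R$. On this interval $P_R$ is piecewise linear, with increments of norm at most $\eta M/R$, and $H$ is Lipschitz with constant $M=\max_i\|z_i\|$. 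Hence the Riemann sum differs from $\int_0^{\tau_R}H(P_R(r))\,dr$ by $O(1/R)$.

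Next, $\sup_{[0,S]}|H(P_R)-H(P)|\le M\sup\|P_R-P\|\to0$. The integrand is bounded on $[0,S]$, and $|\tau_R-\tau_*|\to0$. Therefore
\[
\int_0^{\tau_R}H(P_R)\,dr\;\longrightarrow\;\int_0^{\tau_*}H(y(r))\,dr.
\]

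\emph{Margin-growth stage.} Here we must show that $\sum_{k=\kappa_R}^{N_R-1}\Delta r_{R,k}\,L(w_k)/Q(w_k)\to S-\tau_*$. The node times give
\[
\sum_{k=\kappa_R}^{N_R-1}\Delta r_{R,k}=r_{R,N_R}-\tau_R .
\]
By the definition of $N_R$, this equals $S-\tau_R+\Delta r_{R,N_R-1}$. Since $Q\le1$, each increment is at most $\eta/R$, so the sum tends to $S-\tau_*$. It therefore suffices to show that $L(w_k)/Q(w_k)\to1$ uniformly over all but a negligible weighted portion of these $k$.

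Pointwise, $\log(1+e^{-m})/q(m)\to1$ as $m\to+\infty$. The ratio is bounded above and below by positive constants when $m\ge -c$, and it is at most $1+O(|m|)$ for negative $m$. By the weighted mediant inequality, $L/Q$ lies between the minimum and maximum of the per-sample ratios, taken over samples with non-negligible weight.

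On $(\tau_*,S]$ the ideal path lies in $C$, and the minimum margin $b(P(r))$ grows at the positive speed given by the Stage~2 flow: since $\|v\|>0$ by separability, we have $\tfrac{d}{ds}b=\|v\|^2$. So for $r\ge\tau_*+\delta$ we get $b(P(r))\ge c_\delta>0$. Uniform convergence then gives margins of order at least $R c_\delta/2$ for $w_k$, so the ratio is $1+o(1)$ there.

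The transition window $[\tau_R,\tau_*+\delta]$ is the main obstacle, because margins may be near zero or slightly negative there. On this window we only need $L/Q$ bounded by a constant independent of $R$. Then its contribution is $O(\delta+|\tau_R-\tau_*|)$, and letting $\delta\to0$ after $R\to\infty$ finishes the argument.

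For this bound I would use the post-entry comparison from the proof of Theorem~\ref{thm:path-convergence}. Margins of $w_k$ stay above $-O(\|e_R\|+\sqrt R)$, which is $o(R)$, so each sample has $\log(1+e^{-m})/q(m)\le C(1+m_-)$. This is not obviously $O(1)$. I would try to bypass it by showing that GD, once it enters $C$, keeps all margins at least $-O(1)$. The reason is that a negative margin yields a contribution $q\approx1$ that dominates $Q$ and pushes that margin up. Failing that, I would use the weaker estimate that the samples with negative margins carry contribution $O(1)$ to $L$ in an $o(R)$-time window, so their share after multiplying by $1/Q$ still vanishes.

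\emph{Combination.} Finally, $C_R/R^2=C_R^-/R^2+(C_R^+/R)/R$. The second term tends to $0$, so $C_R/R^2\to\int_0^{\tau_*}H(y)\,dr$. This equals $\int_0^SH(P(r))\,dr$ because $H=0$ on $C$, which contains $P(r)$ for $r\ge\tau_*$.
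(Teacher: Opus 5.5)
Your correction-stage argument is correct and essentially matches the paper's. You integrate $H(P_R)$ up to $\tau_R$ and use $\tau_R\to\tau_*$; the paper integrates over $[0,S]$ and bounds the post-entry part by $H(P_R)\le B_0/R$. Splitting the second stage at $r=\tau_*+\delta$ is also workable, since beyond that point margins are at least $Rc_\delta/2$ and $1\le L/Q\le 1+e^{-Rc_\delta/2}$.

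The genuine gap is the step you flag and leave open. You need an $R$-independent bound on $L(w_k)/Q(w_k)$ for post-entry iterates in the transition window, which is equivalent to a bound on their negative margins. Neither of your routes supplies it.

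\begin{itemize}
\item The mechanism behind (a) is not valid. After entry, samples with margins near zero still have $q_j$ near $1/2$. The one-step change of margin $i$ is $\eta\bigl(p_iq_i\|z_i\|^2+\sum_{j\ne i}p_jq_jz_j^\top z_i\bigr)$. Its cross terms can be negative and dominate when $p_i$ is small, because strict separability does not exclude $z_j^\top z_i<0$. So a negative margin is not self-correcting step by step.
\item Route (b) fails quantitatively. The path bound only gives post-entry margins $\ge -R\,O(\varepsilon_R)$, with $\varepsilon_R\asymp\|e_R\|/R+R^{-1/2}$. Such negative margins can occur on an $r$-window of length $O(\varepsilon_R)$. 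Each step with a negative margin has $Q\ge p_*/2$, hence $\Delta r\ge\eta p_*/(2R)$, so there can be on the order of $R\varepsilon_R$ such steps. Each contributes $\eta L(w_k)/R$, with $L(w_k)$ up to order $R\varepsilon_R$. The resulting bound is of order $R\varepsilon_R^2\gtrsim 1$, which does not vanish and diverges when $\|e_R\|\gg\sqrt R$.
\end{itemize}

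The missing ingredient is the paper's global, uniform post-entry estimate (Lemmas~\ref{lem:app-small-loss} and~\ref{lem:app-cumulative-loss-gap}).

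\begin{itemize}
\item Start from any $v_0\in C$ and set $h_k=v_k-v_0$ and $m_i=z_i^\top v_k$.
\item Nonnegative initial margins give $h_k^\top g(v_k)\le\sum_ip_im_iq(m_i)\le 1/e$. Hence $\|h_N\|^2\le N(2\eta/e+\eta^2M^2)$.
\item Separability gives $a^\top h_N\ge\eta\gamma\sum_{k<N}Q(v_k)$.
\item Together these force GD into a small-loss region within $N_0$ steps, where $N_0$ is independent of $v_0$ and $R$. That region is invariant under the descent inequality and has all margins positive.
\item Before reaching it, margins drop by at most $\eta M^2$ per step. So post-entry margins never fall below $-B_0=-\eta M^2N_0$.
\end{itemize}

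This gives exactly the bound $L/Q\le\sup_{m\ge -B_0}\ell(m)/q(m)$ that your window argument needs. With it, the rest of your proposal, including the final combination, goes through.

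The paper then skips the $\delta$-splitting entirely. In the small-loss region $0\le L-Q\le Q^2/p_*$, and $\eta\sum Q^2$ is bounded by a constant. Hence $0\le\eta\sum_{k\ge\kappa_R}\bigl(L(w_k)-Q(w_k)\bigr)\le C_{\mathrm{gap}}$. This yields $0\le C_R^+/R-(r_{R,N_R}-\tau_R)\le C_{\mathrm{gap}}/R$ directly, with an explicit rate.
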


During correction, $L(Rx)/R\to H(x)$ turns the first sum into the integral. After entry, $L(w_k)$ can be replaced by $Q(w_k)$ with bounded total error in GD time, giving the second limit. The two stages thus contribute at different scales: after correction, small losses accumulate over many more updates.

\Needspace{9\baselineskip}
\section{Numerical Experiments}
\label{sec:experiments}

We test the trajectory and loss predictions using fixed-step, full-batch logistic GD initialized at $Ru_0$, constructing each ideal path independently from the data and $u_0$. Comparisons use the two-stage parameter in~\eqref{eq:actual-path} and an interval $[0,S]$ fixed across values of $R$. Path errors measure full parameter vectors. Maxima include every GD iterate through the stopping iterate; cumulative losses include the preceding updates.

\subsection{Trajectory Approximation}
\label{sec:experiments-paths}

We first use four equally weighted features spanning $\mathbb R^3$, with $\eta=1/2$ and a path that turns in both stages. This example tests approximation across segment changes and the transition between stages. Figure~\ref{fig:numerical-paths}(a) shows GD approaching the ideal geometry as $R$ grows. From $R=5$ to $R=80$, the uniform path error decreases from $0.0638$ to $0.0102$. This error measures agreement throughout the window, including the transitions. The analytic geometry of Figure~\ref{fig:two-stage-path}, tested on a shorter window, shows the same trend in Figure~\ref{fig:numerical-paths}(b).

We then vary the geometry beyond these two controlled examples. Across 40 random strictly separable problems with five samples in three dimensions, the median path error falls from $0.185$ at $R=5$ to $0.0370$ at $R=80$. Figure~\ref{fig:numerical-paths}(c) includes every instance. All comparisons retain the difference between actual and ideal entry times.

\begin{figure}[htbp]
\centering
\includegraphics[width=\linewidth]{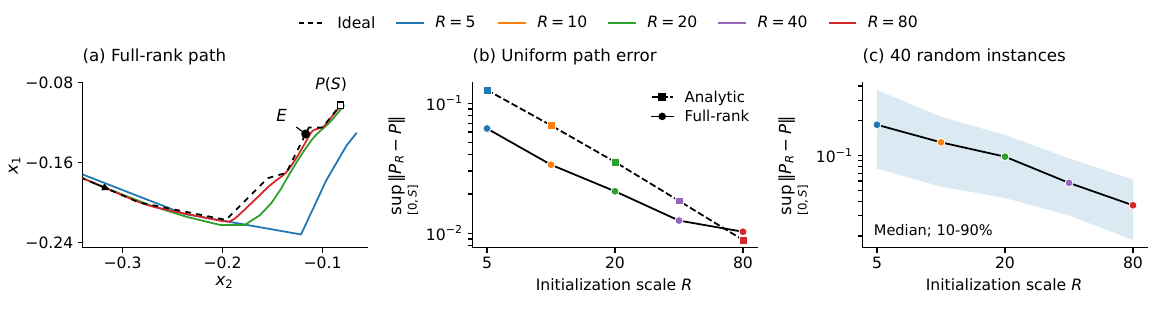}
\caption{\textbf{GD approaches the complete ideal path.}
(a) A coordinate projection of the four-sample example, omitting most of the initial approach to highlight the turns. Colored curves connect saved GD iterates divided by $R$; the dashed line shows the ideal path, with its entry $E$ and endpoint $P(S)$ marked.
(b) Uniform path errors for the analytic and four-sample examples.
(c) The median and 10th--90th percentile band over all 40 random instances. Errors in (b,c) use full parameter vectors over each problem's complete window $[0,S]$.}
\label{fig:numerical-paths}
\end{figure}

\Needspace{10\baselineskip}
\subsection{Peak and Cumulative Loss Predictions}
\label{sec:experiments-functionals}

We next test whether intermediate positions on the ideal path predict a temporary rise in evaluation loss. For the four-sample example, an evaluation feature excluded from the updates yields the interior peak in Figure~\ref{fig:numerical-functionals}(a). The peak occurs during correction, when the path enters and leaves the region of negative evaluation margin. The ideal path predicts a normalized peak of $0.09314$, compared with $0.09643$ at $R=80$. The corresponding unnormalized peak is $7.71$, while both endpoint losses are below $0.002$.

Cumulative loss additionally tests the conversion back to GD time. We split the sums at the first iterate in $C$ and include the step-size factor as in~\eqref{eq:cumulative-loss-definition}. For the same runs, the predictions are $C_R^-/R^2\to0.38277$ and $C_R^+/R\to0.10131$. At $R=80$, the observed values are $0.38876$ and $0.10421$. Figure~\ref{fig:numerical-functionals}(b) divides each cumulative loss by its predicted leading term. Ratios near one indicate agreement at each scale: small losses after entry are balanced by the many updates needed to traverse that part of the path.

\subsection{Fixed Features from Handwritten Digits}
\label{sec:experiments-digits}

To test the predictions in a higher-dimensional geometry, we use the 64 pixel features of the handwritten-digits dataset, divided by 16 and augmented with a bias coordinate \citep{alpaydin1998digits}. We train on 32 images of digits 5 and 6 with uniform weights and $\eta=1/\Lambda$. The initial direction is the unit maximum-margin separator of the 357 images of digits 3 and 8, also used for evaluation. This gives an initialization associated with a different classification task. Both problems are strictly separable.

We hold the data and initial direction fixed and vary only $R$. The runs at all five values $R\in\{5,10,20,40,80\}$ cover both stages on the fixed window. The uniform path error decreases from $0.1183$ to $0.01956$ in Figure~\ref{fig:numerical-functionals}(c). At $R=80$, the normalized evaluation peak is $0.035725$, close to the ideal prediction $0.035607$.

\begin{figure}[htbp]
\centering
\includegraphics[width=\linewidth]{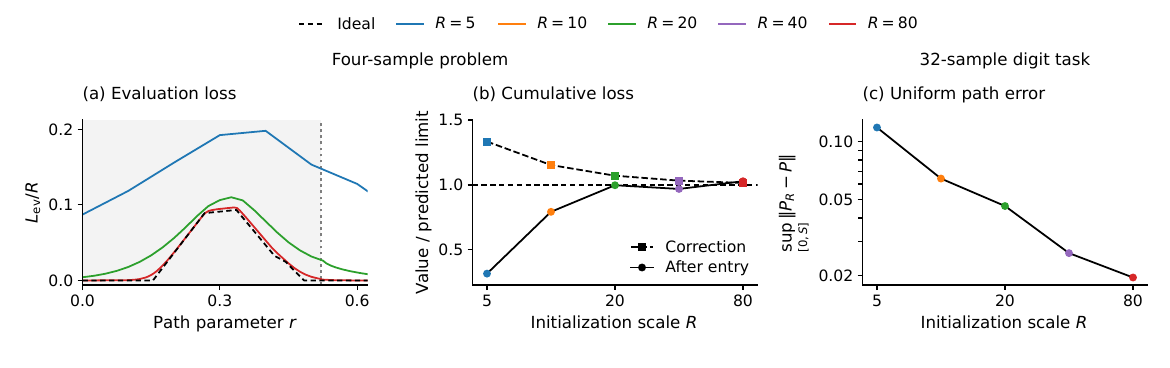}
\caption{\textbf{Functional predictions and a test on image features.}
Panels (a,b) use the four-sample example.
(a) Evaluation loss divided by $R$, compared with $H_{\mathrm{ev}}(P(r))$. Curves connect saved GD iterates; shading marks ideal correction.
(b) Cumulative training losses divided by $0.38277R^2$ before entry and $0.10131R$ after entry; the dashed line at one is the limit.
(c) Uniform path error for the 32-image digit task over its fixed window.}
\label{fig:numerical-functionals}
\end{figure}

\FloatBarrier
\Needspace{8\baselineskip}
\section{Conclusion}
\label{sec:conclusion}

The ideal path gives a finite geometric description of training at large initialization. For separable logistic GD, it connects negative-margin correction with subsequent minimum-margin growth. Our convergence bounds relate this construction to actual fixed-step updates on every fixed window of the two-stage parameter. The geometry and its relation to GD time yield leading-order limits for evaluation-loss peaks and cumulative training loss. These limits can thus be computed from the weighted data and limiting initial position.

\bibliography{references}
\bibliographystyle{informs2014}

\begin{APPENDICES}
\renewcommand*{\theHsection}{appendix.\Alph{section}}

\section{Geometry of the Ideal Path}
\label{app:ideal-geometry}

This section proves Proposition~\ref{prop:complete-path} and gives a finite procedure for constructing the ideal path.

We use the data and assumptions of Section~\ref{sec:large-initialization}. Choose a unit vector $a$ and $\gamma>0$ such that $a^\top z_i\ge\gamma$ for every $i$, and write $M=\max_i\|z_i\|$ and $p_*=\min_i p_i>0$.

\subsection{Negative-margin correction}
\label{app:correction-geometry}

The correction velocity is determined by the signs of the margins. Write
\[
N(x)=\{i:z_i^\top x<0\},\qquad Z(x)=\{i:z_i^\top x=0\}.
\]
Since $H(x)=\sum_i p_i[-z_i^\top x]_+$ is a finite convex function, its subdifferential is obtained by adding those of its summands. Thus
\begin{equation}
A(x):=-\partial H(x)
=\left\{\sum_{i\in N(x)}p_i z_i+
\sum_{i\in Z(x)}p_i\theta_i z_i:0\le\theta_i\le1\right\}.
\label{eq:app-correction-set}
\end{equation}
This is a nonempty compact convex set. Its point $v$ of smallest norm is unique, although the coefficients representing $v$ need not be unique.

\paragraph{Velocity selection at zero margins.}
For any minimizing coefficients in~\eqref{eq:app-correction-set}, optimality with respect to $\theta_i$ gives
\begin{equation}
z_i^\top v>0\ \Longrightarrow\ \theta_i=0,
\qquad
z_i^\top v<0\ \Longrightarrow\ \theta_i=1,
\qquad i\in Z(x).
\label{eq:app-zero-margin-rule}
\end{equation}
Indeed, the derivative of $\|v\|^2/2$ with respect to $\theta_i$ is $p_i z_i^\top v$. A positive derivative requires the lower endpoint of the interval $[0,1]$, and a negative derivative requires the upper endpoint.

Consider the line $x+tv$ for sufficiently small $t>0$. Margins that were nonzero retain their signs. A zero margin with positive derivative becomes positive, so its coefficient zero remains admissible; one with negative derivative becomes negative, so its coefficient one remains admissible. A zero margin with zero derivative stays on the boundary. Consequently,
\[
v\in A(x+tv)\subseteq A(x).
\]
The same vector therefore remains the minimum-norm point and the line solves $\dot y\in-\partial H(y)$. Section~\ref{app:finite-construction} shows that these local segments extend to a unique path for all times.

\subsection{Minimum-margin growth}
\label{app:margin-geometry}

The second stage follows the samples attaining the smallest margin. Since $-b(x)=\max_i(-z_i^\top x)$ is convex,
\[
-\partial(-b)(x)=\operatorname{conv}\{z_i:i\in I(x)\}.
\]
Let $v=v(x)$ be the minimum-norm point of this convex hull. Projection optimality gives
\begin{equation}
z_i^\top v\ge\|v\|^2\qquad(i\in I(x)).
\label{eq:app-margin-projection}
\end{equation}
Write $v=\sum_{i\in I(x)}\alpha_i z_i$, where $\alpha_i\ge0$ and $\sum_i\alpha_i=1$. The $\alpha_i$-weighted average of the left-hand side of~\eqref{eq:app-margin-projection} equals $\|v\|^2$. Every index with positive coefficient must therefore attain equality. In particular,
\[
v\in\operatorname{conv}\{z_i:i\in J\},\qquad
J=\{i\in I(x):z_i^\top v=\|v\|^2\}.
\]

Along $x+tv$, the samples in $J$ remain tied. The other samples in $I(x)$ gain margin faster and immediately leave the minimum. Initially inactive samples have a positive margin gap and cannot reach the minimum for sufficiently small $t>0$. Thus $J$ is the active set on this short segment, and the convex hull of its feature vectors still has $v$ as its minimum-norm point. The line solves the required subgradient equation. Repeating this construction gives existence directly, including when the initial point lies on the boundary of $C$; the finite extension is proved below.

All convex combinations of the feature vectors have inner product at least $\gamma$ with $a$. Hence $\gamma\le\|v(x)\|\le M$, and along the constructed path,
\begin{equation}
\frac{d}{ds}b(x_E(s))=\|v(x_E(s))\|^2\ge\gamma^2
\quad\text{for almost every }s.
\label{eq:app-minimum-margin-growth}
\end{equation}
In particular, $b(x_E(s))\ge b(E)+\gamma^2s$. Starting from any $E\in C$, the ideal continuation has strictly positive margins at every $s>0$.

\subsection{Proof of Proposition~\ref{prop:complete-path}}
\label{app:finite-construction}

\begin{proof}
The local arguments above show that the velocity persists until a new margin equality is reached. At such an event, several samples may arrive together; all of them enter the next projection problem.

\paragraph{Finitely many events.}
Let $v_-$ and $v_+$ denote the velocities immediately before and after an event within either stage. The old velocity remains feasible at the event point. For correction, a margin arriving at zero replaces its previously fixed coefficient by an interval containing that coefficient. For minimum-margin motion, the previously tied samples remain among those attaining the minimum. In both cases, the new velocity is the projection of zero onto a convex set containing $v_-$. Therefore
\begin{equation}
v_+^\top(v_--v_+)\ge0,
\qquad
\|v_-\|^2\ge\|v_+\|^2+\|v_--v_+\|^2.
\label{eq:app-velocity-decrease}
\end{equation}
Every actual change of velocity strictly decreases its norm.

There are only finitely many possible velocities: each sign pattern determines one correction projection, and each nonempty active subset determines one minimum-margin projection. Equation~\eqref{eq:app-velocity-decrease} thus permits only finitely many velocity changes in each stage. Events that leave the velocity unchanged cannot accumulate either. On a straight segment, each correction margin is affine; in the second stage, each gap $z_i^\top x-b(x)$ is affine by~\eqref{eq:app-minimum-margin-growth}. A nonconstant affine function reaches zero at most once, while a persistent zero requires no new event. Thus there are only finitely many events between velocity changes.

These observations extend the local solutions globally and exclude an accumulation of event times. The second stage eventually continues along a straight ray when no further margin can reach the minimum.

\paragraph{Finite arrival at the nonnegative region.}
Along each segment, the zero-margin rule also gives
\begin{equation}
\frac{d}{d\tau}H(y(\tau))=-\|\dot y(\tau)\|^2.
\label{eq:app-correction-energy}
\end{equation}
On the interior of each segment, the zero margins have zero derivative, and the derivative of $H$ is the negative weighted sum of the negative-margin derivatives. Taking the inner product of~\eqref{eq:app-correction-set} with the velocity gives~\eqref{eq:app-correction-energy}.

Outside $C$, at least one sample contributes its full weight. All other coefficients are nonnegative, so every $v\in A(x)$ satisfies
\[
a^\top v\ge\gamma\sum_{i\in N(x)}p_i\ge\gamma p_*.
\]
The selected velocity thus satisfies $\gamma p_*\le\|v\|\le M$. If the path remained outside $C$ for longer than $H(u_0)/(\gamma p_*)^2$, equation~\eqref{eq:app-correction-energy} would force $H$ below zero. Hence the arrival time is finite, with
\begin{equation}
\tau_*\le\frac{H(u_0)}{(\gamma p_*)^2},
\qquad
H(y(\tau))\ge(\gamma p_*)^2(\tau_*-\tau)
\quad(0\le\tau\le\tau_*).
\label{eq:app-correction-arrival}
\end{equation}
The second inequality follows by integrating~\eqref{eq:app-correction-energy} from $\tau$ to $\tau_*$. At any point in $C$, the set $A(x)$ contains zero, so the correction velocity is zero. The correction flow therefore remains at its arrival position $E$.
\par

\paragraph{Uniqueness.}
For either convex function $f=H$ or $f=-b$, let $x$ and $\widetilde x$ be two solutions, with $g=-\dot x\in\partial f(x)$ and $\widetilde g=-\dot{\widetilde x}\in\partial f(\widetilde x)$. The two subgradient inequalities imply
\[
(x-\widetilde x)^\top(g-\widetilde g)\ge0,
\qquad
\frac12\frac{d}{dt}\|x-\widetilde x\|^2\le0
\quad\text{almost everywhere}.
\]
Solutions with the same initial point coincide. More generally, their distance never exceeds its initial value. This proves uniqueness for both of the constructed flows.

\paragraph{Connecting the stages.}
The correction stage ends at $E$ and the second stage starts at $E$. Equation~\eqref{eq:complete-path} therefore gives a unique continuous path, with speed at most $M$ and finitely many changes of velocity. Each stage uses its own parameter, so~\eqref{eq:app-velocity-decrease} is a statement within a stage; it imposes no speed ordering across the connection.

If $u_0\in C$, then $\tau_*=0$ and the path starts directly with the second stage. If $E=0$, let $v_*$ be the minimum-norm point of $\operatorname{conv}\{z_1,\ldots,z_n\}$. Projection optimality gives $z_i^\top v_*\ge\|v_*\|^2$ for every $i$, and $v_*$ belongs to the convex hull of the equality indices. Hence $x_E(s)=sv_*$ solves the second-stage equation for all $s\ge0$. Strict separability gives $v_*\ne0$, so entry at $E=0$ also yields a nontrivial continuation.
\end{proof}

\subsection{Finite Construction of the Ideal Path}
\label{app:ideal-path-construction}

\paragraph{Computing the next event.}
The construction can be implemented using the two projection problems and positive arrival times. At a current position $x$ with selected velocity $v$, the next correction event occurs after
\begin{equation}
\Delta^-=\min_{i:(z_i^\top x)(z_i^\top v)<0}
\left(-\frac{z_i^\top x}{z_i^\top v}\right).
\label{eq:app-correction-event}
\end{equation}
Only currently nonzero margins enter this minimum. Zero margins either remain zero or leave immediately according to~\eqref{eq:app-zero-margin-rule}. Outside $C$, the candidate set is nonempty: otherwise the same velocity would persist forever and force $H$ below zero.

In the second stage, the next event occurs after
\begin{equation}
\Delta^+=\min_{\substack{i\notin I(x)\\z_i^\top v<\|v\|^2}}
\frac{z_i^\top x-b(x)}{\|v\|^2-z_i^\top v}.
\label{eq:app-margin-event}
\end{equation}
An empty minimum means $\Delta^+=\infty$ and the current line continues forever. Initially tied samples that leave the minimum have larger margin derivatives and cannot return while this velocity remains unchanged.

\Needspace{9\baselineskip}
\noindent\textbf{Constructing the complete ideal path.}
\begin{enumerate}
\item Start at $x=u_0$ and $r=0$ in the correction stage.
\item If the current stage is correction and $x\in C$, record $E=x$ and $\tau_*=r$, and switch to minimum-margin growth.
\item Compute the minimum-norm velocity from~\eqref{eq:app-correction-set} or~\eqref{eq:margin-flow}, and the corresponding duration from~\eqref{eq:app-correction-event} or~\eqref{eq:app-margin-event}.
\item Append $P(r+t)=x+tv$ for $0\le t\le\Delta$. If $\Delta=\infty$, finish with this ray. Otherwise set $x\leftarrow x+\Delta v$, $r\leftarrow r+\Delta$, and return to step 2, including every simultaneous arrival in the next projection.
\end{enumerate}
The procedure uses only the weighted data and $u_0$. Persistent equalities are handled by the projection itself, so no zero-duration steps are needed.

\section{Proof of Path Convergence}
\label{app:path-convergence}

We prove Theorem~\ref{thm:path-convergence} by keeping track of three quantities: the correction-path error, the entry-position error, and the difference between the two entry times. The estimates after entry are uniform over the actual nonnegative starting point. This allows the same argument to cover initialization perturbations and the case $E=0$.

\subsection{Basic estimates and path parameterization}
\label{app:basic-estimates}

Use $M,p_*,a,\gamma$ from Appendix~\ref{app:ideal-geometry}, and set
\[
g(w)=\sum_i p_iq(z_i^\top w)z_i=-\nabla L(w),\qquad
\delta_R=\|e_R\|/R.
\]
Constants below may depend on the fixed data, weights, $u_0$, $\eta$, and the indicated fixed time window, but not on $R$ or the direction of $e_R$. We work with $R\ge1$ and $\delta_R\le1$, as holds eventually under~\eqref{eq:initialization}.

For $\ell(m)=\log(1+e^{-m})$, direct scalar inequalities give
\begin{equation*}
0\le\ell(m)-[-m]_+\le\log2,
\qquad 0<q(m)\le\ell(m).
\end{equation*}
Consequently,
\begin{equation}
\gamma Q(w)\le a^\top g(w)\le\|g(w)\|\le M Q(w)\le M L(w),
\qquad Q(w)\le1.
\label{eq:app-gradient-basic}
\end{equation}
The Hessian also satisfies
\begin{equation}
0\preceq\nabla^2L(w)\preceq\frac14\sum_i p_i z_i z_i^\top\preceq\Lambda I,
\qquad \|\nabla^2L(w)\|\le M^2 L(w).
\label{eq:app-hessian-bounds}
\end{equation}
The first bound uses $q(1-q)\le1/4$; the second uses $q(1-q)\le\ell$. Since $|\ell'(m)|/\ell(m)\le1$, for every displacement $d$,
\begin{equation}
L(w+d)\le e^{M\|d\|}L(w).
\label{eq:app-loss-displacement}
\end{equation}

Lemma~\ref{lem:app-finite-entry} below proves $\kappa_R<\infty$. The nodes in~\eqref{eq:actual-path} then have positive increments at most $\eta/R$, and their interpolated path has speed at most $M$ in both stages. They cover the whole nonnegative parameter axis. Indeed, if $\sum_{k\ge\kappa_R}\eta Q(w_k)$ were finite, the sum of the update norms would be finite by~\eqref{eq:app-gradient-basic}. The iterates would converge to a finite vector $w$, where continuity would give $Q(w_k)\to Q(w)>0$, a contradiction.

\subsection{Correction-Stage Approximation}
\label{app:correction-tracking}

Let $Y_R$ be the normalized, linearly interpolated GD trajectory on the iteration-time grid $\tau_k=\eta k/R$, without stopping at entry.

\begin{lemma}[Correction-stage approximation]
\label{lem:app-correction-approximation}
For every fixed $T_0$,
\begin{equation}
\sup_{0\le\tau\le T_0}\|Y_R(\tau)-y(\tau)\|
\le\left(\delta_R^2+\frac{(2\log2+4\eta M^2)T_0}{R}\right)^{1/2}
\le\delta_R+\frac{C_{T_0}}{\sqrt R}.
\label{eq:app-correction-bound}
\end{equation}
\par
No step-size restriction is used here.
\end{lemma}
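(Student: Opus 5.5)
We compare the normalized GD iterates $x_k=w_k/R$ with the correction flow $y$ through a discrete evolution-variational-inequality argument for the convex function $f_R(x)=L(Rx)/R$. Note that $\nabla f_R(x)=\nabla L(Rx)=-g(Rx)$. The GD update then reads
\[
x_{k+1}=x_k-h\nabla f_R(x_k),\qquad h=\eta/R .
\]
So GD is explicit gradient descent on $f_R$ with step $h$, and $\|\nabla f_R\|\le M$ by~\eqref{eq:app-gradient-basic}. By~\eqref{eq:loss-hinge-comparison}, $f_R$ is uniformly within $\log 2/R$ of $H$. The plan is to show that $D_k=\|x_k-y(\tau_k)\|^2$ grows by at most $O(h/R+h^2)$ per step. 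Summing over the $T_0/h$ steps then gives the displayed bound.

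\textbf{Step 1: one-step expansion.} Write $y_k=y(\tau_k)$ and expand
\[
\|x_{k+1}-y_k\|^2=D_k-2h\,\nabla f_R(x_k)^\top(x_k-y_k)+h^2\|\nabla f_R(x_k)\|^2 .
\]
Convexity gives $-\nabla f_R(x_k)^\top(x_k-y_k)\le f_R(y_k)-f_R(x_k)$, and the last term is at most $h^2M^2$. Hence
\[
\|x_{k+1}-y_k\|^2\le D_k+2h\bigl(f_R(y_k)-f_R(x_k)\bigr)+h^2M^2 .
\]

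\textbf{Step 2: moving the flow from $y_k$ to $y_{k+1}$.} Since $y$ is the subgradient flow of $H$ with speed at most $M$, for any point $x$ and almost every $\tau$,
\[
\tfrac12\tfrac{d}{d\tau}\|x-y(\tau)\|^2=(x-y)^\top(-\dot y)\le H(x)-H(y(\tau)),
\]
using the subgradient inequality with $-\dot y\in\partial H(y)$. Apply this with $x=x_{k+1}$ and integrate over $[\tau_k,\tau_{k+1}]$. Because $H$ is $M$-Lipschitz and $y$ moves at most $Mh$ on this interval, we obtain
\[
\|x_{k+1}-y_{k+1}\|^2\le\|x_{k+1}-y_k\|^2+2h\bigl(H(x_{k+1})-H(y_k)\bigr)+O(h^2M^2).
\]

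\textbf{Step 3: combining.} Add the two inequalities and replace $H$ by $f_R$ at cost $\log 2/R$ each. This uses $H\le f_R\le H+\log2/R$. The remaining mismatch $f_R(x_{k+1})-f_R(x_k)$ is at most $M\|x_{k+1}-x_k\|\le hM^2$. Collecting terms,
\[
D_{k+1}\le D_k+\frac{2h\log 2}{R}+c\,h^2M^2 ,
\]
where $c$ is an absolute constant, which bookkeeping should bring to $4$. Summing over $k\le T_0/h$ gives
\[
D_k\le\delta_R^2+\frac{2\log2\,T_0}{R}+4M^2hT_0=\delta_R^2+\frac{(2\log2+4\eta M^2)T_0}{R}.
\]
The initial term is exact: $x_0-u_0=e_R/R$, so $D_0=\delta_R^2$.

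\textbf{Step 4: interpolation.} Between grid nodes, both $Y_R$ and $y$ move with speed at most $M$. The off-grid error is therefore at most $2Mh=O(1/R)\le O(R^{-1/2})$. To obtain exactly the stated constant, one can instead note that $\|Y_R(\tau)-y(\tau)\|^2$ along the linear interpolation satisfies the same differential inequality. The final inequality follows from $\sqrt{a+b}\le\sqrt a+\sqrt b$.

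The main obstacle is the constant bookkeeping in Steps 2–3. One has to arrange the evaluation points so that only $h^2M^2$-type errors appear and the $H$–$f_R$ swaps cost exactly $2\log2/R$ per unit time. No smallness of $\eta$ is needed, since the $h^2$ term is controlled by the gradient bound alone.
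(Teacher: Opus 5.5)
Your proof is correct and is essentially the paper's argument. It uses the same two convexity inequalities: convexity of $L(Rx)/R$ at the GD point, and the subgradient inequality for $H$ at the flow point. It also uses the same one-sided swaps via $0\le L(Rx)/R-H(x)\le\log2/R$ and the same $\eta M^2/R$ Lipschitz corrections, so it reproduces the constant $2\log2+4\eta M^2$ exactly.

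The only difference is bookkeeping. The paper differentiates $\|Y_R(\tau)-y(\tau)\|^2$ directly along the interpolation, with the GD velocity frozen at the left node, which is the alternative you mention in Step~4; this gives the first inequality at every $\tau$ at once. Your node-wise splitting instead needs that refinement, because the crude $2M\eta/R$ interpolation add-on only yields the second inequality. The simplest way to get the exact constant in your framework is to rerun Steps~1--3 with a partial step $s=\tau-\tau_k\le\eta/R$; this costs $2s\log2/R+4s^2M^2\le(2\log2+4\eta M^2)s/R$.
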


\begin{proof}[Proof]
Define $F_R(x)=L(Rx)/R$. Then
\[
0\le F_R(x)-H(x)\le\frac{\log2}{R},\qquad
\dot Y_R(\tau)=-\nabla F_R(Y_R(\tau_k))
\quad(\tau_k<\tau<\tau_{k+1}).
\]
Both $F_R$ and $H$ are $M$-Lipschitz, and $\|Y_R(\tau)-Y_R(\tau_k)\|\le\eta M/R$. Convexity, applied at $Y_R(\tau_k)$ for $F_R$ and at $y(\tau)$ for $H$, yields
\[
\begin{split}
\frac12\frac{d}{d\tau}\|Y_R-y\|^2
&\le F_R(y)-F_R(Y_R(\tau_k))+H(Y_R)-H(y)
   +M\|Y_R-Y_R(\tau_k)\|\\
&\le\frac{\log2+2\eta M^2}{R}.
\end{split}
\]
Integrating from $\|Y_R(0)-y(0)\|=\delta_R$ gives~\eqref{eq:app-correction-bound}.
\end{proof}

\subsection{Bounds on Entry Position and Time}
\label{app:entry-control}

A path estimate bounds the negative margins remaining near the ideal arrival time. The following bound uses their magnitude to bound the additional GD time.

\begin{lemma}[Finite entry from bounded negative margins]
\label{lem:app-finite-entry}
Suppose GD starts at $\widetilde w_0$ with $z_i^\top\widetilde w_0\ge-B$ for every $i$, where $B\ge0$. Let $\kappa$ be its first node in $C$. Put
\[
c_\eta=\log(2/p_*)+\eta M^2/2+1,
\qquad A_B=(2B+c_\eta)/\gamma.
\]
Then $\kappa<\infty$, and
\begin{equation}
\eta\sum_{k<\kappa}Q(\widetilde w_k)\le\frac{A_B^2}{2(B+1)},
\qquad
\eta\kappa\le\frac{A_B^2}{p_*(B+1)},
\qquad
\sum_{k<\kappa}\|\widetilde w_{k+1}-\widetilde w_k\|
\le\frac{MA_B^2}{2(B+1)}.
\label{eq:app-raw-entry-bound}
\end{equation}
In particular, all three quantities are bounded by a constant times $B+1$.
\end{lemma}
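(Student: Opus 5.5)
We will run a perceptron-style potential argument against a fixed comparison point deep inside the separable cone. Put
\[
u=\widetilde w_0+A_B a,
\]
where $a$ and $\gamma$ are as in Appendix~\ref{app:ideal-geometry}. Since $z_i^\top a\ge\gamma$ and $z_i^\top\widetilde w_0\ge-B$, every margin of $u$ satisfies $z_i^\top u\ge -B+\gamma A_B=B+c_\eta$. The potential is $\Phi_k=\|\widetilde w_k-u\|^2$, with $\Phi_0=A_B^2$. Expanding the update $\widetilde w_{k+1}=\widetilde w_k+\eta g(\widetilde w_k)$ gives
\[
\Phi_{k+1}=\Phi_k+2\eta\, g(\widetilde w_k)^\top(\widetilde w_k-u)+\eta^2\|g(\widetilde w_k)\|^2 .
\]
By \eqref{eq:app-gradient-basic} and $Q\le1$, the last term is at most $\eta^2M^2Q_k$, where $Q_k=Q(\widetilde w_k)$. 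The goal is the per-step decrease
\[
\Phi_{k+1}\le\Phi_k-2\eta(B+1)Q_k\qquad(k<\kappa).
\]
Telescoping and $\Phi\ge0$ then give the first bound in \eqref{eq:app-raw-entry-bound}.

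\textbf{Cross term.} Write $m_i=z_i^\top\widetilde w_k$ and $m_i'=z_i^\top u$, so that
\[
g^\top(\widetilde w_k-u)=\sum_ip_iq(m_i)(m_i-m_i').
\]
With $D=B+1+\eta M^2/2$, the per-step decrease would follow from
\[
\sum_ip_iq(m_i)(m_i-m_i')\le-\Bigl(D+\tfrac12\Bigr)Q_k ,
\]
which is slightly stronger than needed and leaves room for the error terms below. Indices with $m_i\le m_i'-D-1$ contribute at most $-(D+1)p_iq(m_i)$. The remaining indices have $m_i>m_i'-D-1\ge B+c_\eta-D-1=\log(2/p_*)-\eta M^2$, roughly. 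For these we use the scalar bound
\[
q(m)(m-m')\le q(m)\bigl(m-m'\bigr)_+ \lesssim e^{-(m'-D-1)}\cdot O(1),
\]
since $q(m)\,t\le e^{-m}t$ and $m\mapsto e^{-m}(m-c)$ is bounded by $e^{-c-1}$. We compare this with $Q_k$ using the key fact that, before entry, some margin is negative, so $Q_k\ge p_*q(0)=p_*/2$. The constant $\log(2/p_*)$ in $c_\eta$ is designed so that these exceptional contributions total at most $\tfrac12 Q_k$. We also absorb the leftover $-(D+1)$ versus $-D$ slack this way. Making this bookkeeping of constants exact, so that the threshold is precisely $c_\eta$, is the step I expect to be the main obstacle. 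If the constants do not close exactly, I will first enlarge the comparison point, i.e.\ increase $A_B$ slightly, and accept a slightly different constant.

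\textbf{Conclusion.} The per-step decrease holds for every $k<\kappa$, so for any finite $K\le\kappa$ we get $2\eta(B+1)\sum_{k<K}Q_k\le A_B^2$. Since $Q_k\ge p_*/2$ before entry, $\eta K\le A_B^2/(p_*(B+1))$ for every such $K$. Hence $\kappa$ is finite and obeys the second bound. The third bound follows from $\|\widetilde w_{k+1}-\widetilde w_k\|=\eta\|g\|\le\eta MQ_k$. Finally, $A_B^2/(B+1)=O(B+1)$, which gives the closing remark of the lemma.
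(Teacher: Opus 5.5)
Your overall scheme matches the paper's: the same comparison point $\widetilde w_0+A_Ba$, the same potential, and the same target decrease $\Phi_{k+1}\le\Phi_k-2\eta(B+1)Q_k$. Your concluding paragraph (telescoping, $Q_k>p_*/2$ before entry, $\|\widetilde w_{k+1}-\widetilde w_k\|\le\eta MQ_k$) is also correct. The gap is the cross-term estimate, and the bookkeeping you sketch fails structurally, not just in its constants.

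The threshold is $B+c_\eta-D-1=\log(2/p_*)-1$, not $\log(2/p_*)-\eta M^2$. So an index in your exceptional set $S_2=\{i:m_i>m_i'-D-1\}$ only satisfies $q(m_i)\le e\,p_*/2$, which is of the same order as the lower bound $Q_k>p_*/2$. By replacing $m_i-m_i'$ with $(m_i-m_i')_+$ on $S_2$, you discard the negative contributions of these samples. What you actually obtain is
\[
\sum_ip_iq(m_i)(m_i-m_i')\le-(D+1)Q_k+(D+1)\sum_{i\in S_2}p_iq(m_i)+(\text{positive part}),
\]
and the middle term carries the factor $D+1=B+2+\eta M^2/2$.

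To see that this cannot be absorbed, take one sample of weight $p_*$ at a slightly negative margin. Put the remaining weight on samples with $m_i'=B+c_\eta$ and $m_i$ just above $m_i'-D-1$. Then the sum over $S_2$ is a $B$-independent fixed fraction of $Q_k$, so the middle term exceeds $Q_k$ once $B$ is large, even though the true cross term is fine there.

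As long as the negative contributions on $S_2$ are discarded, no choice of split threshold repairs this for large $B$ with the stated $c_\eta$. Your fallback of enlarging the comparison point works only if $A_B\gamma$ grows by an amount of order $\log(B+2)$. That proves a variant that still has the $O(B+1)$ conclusion used downstream, but not the displayed bounds.

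The missing step is to cancel $B$ before estimating anything. You only need $\sum_ip_iq(m_i)(m_i-m_i'+D)\le0$. Since $q>0$, $m_i'\ge B+c_\eta$, and $B+c_\eta-D=\log(2/p_*)$, it suffices to prove the $B$-free inequality
\[
\sum_ip_iq(m_i)\,m_i\le Q_k\log(2/p_*)\qquad\text{whenever some }m_i<0 .
\]
The paper gets this from $m=\log\bigl((1-q(m))/q(m)\bigr)\le\log(1/q(m))$ together with Jensen's inequality for the concave function $x\mapsto x\log(1/x)$. These give $\sum_ip_iq(m_i)m_i\le Q_k\log(1/Q_k)<Q_k\log(2/p_*)$.

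A split also works if you make it in $m_i$ itself, at $L=\log(2/p_*)$:
\begin{itemize}
\item each term with $m_i>L$ satisfies $p_iq(m_i)(m_i-L)\le p_ie^{-m_i}(m_i-L)\le p_ip_*/(2e)$;
\item terms with $m_i\le L$ are nonpositive;
\item the negative-margin sample contributes less than $-\tfrac{p_*}{2}\log(2/p_*)$.
\end{itemize}
Since $\log(2/p_*)\ge\log2>1/e$, the total $\sum_ip_iq(m_i)(m_i-L)$ is negative. Either route closes with exactly the stated $c_\eta$, and your stronger target $-(D+\tfrac12)Q_k$ is unnecessary.
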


\begin{proof}
Use the comparison point $\bar w=\widetilde w_0+A_Ba$ and set $D_k=\|\widetilde w_k-\bar w\|^2$. Abbreviate $Q_k=Q(\widetilde w_k)$ and $g_k=g(\widetilde w_k)$. Writing a margin as $m=\log((1-q(m))/q(m))\le\log(1/q(m))$ and using the concavity of $q\log(1/q)$ gives
\[
\sum_i p_iq(z_i^\top\widetilde w_k)z_i^\top\widetilde w_k
\le Q_k\log(1/Q_k).
\]
The lower bounds on the initial margins and on $z_i^\top a$ imply
\[
(\widetilde w_k-\bar w)^\top g_k
\le Q_k\{\log(1/Q_k)+B-A_B\gamma\}.
\]
Expanding the squared distance after one update therefore yields
\[
D_{k+1}-D_k
\le2\eta Q_k\{\log(1/Q_k)+B-A_B\gamma+\eta M^2Q_k/2\}.
\]
Before entry, a negative margin contributes more than half its weight, so $Q_k>p_*/2$. The expression in braces is at most $-B-1$. Thus
\[
D_{k+1}-D_k\le-2\eta Q_k(B+1)\qquad(k<\kappa).
\]
If entry never occurred, the right-hand side would make $D_k$ negative. Summing up to the entry update and using $D_0=A_B^2$ proves the first bound in~\eqref{eq:app-raw-entry-bound}. The lower bound on $Q_k$ gives the time bound, and $\|\widetilde w_{k+1}-\widetilde w_k\|\le\eta M Q_k$ gives the length bound. If $\widetilde w_0\in C$, all three sums are zero.
\end{proof}

\begin{lemma}[Entry stability]
\label{lem:app-entry-stability}
Take $T_0=\tau_*+\eta+1$, and suppose that the left-hand side of~\eqref{eq:app-correction-bound} is at most $\varepsilon$.
Let $Y_R^{\mathrm{stop}}$ agree with $Y_R$ until $\tau_R$ and stay at $w_{\kappa_R}/R$ thereafter.
Then
\begin{equation}
\sup_{\tau\ge0}\|Y_R^{\mathrm{stop}}(\tau)-y(\tau)\|
+\left\|\frac{w_{\kappa_R}}R-E\right\|
+|\tau_R-\tau_*|
\le C\left(\varepsilon+\frac1R\right).
\label{eq:app-entry-joint}
\end{equation}
\par
\end{lemma}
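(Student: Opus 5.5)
We bound the three terms in \eqref{eq:app-entry-joint} in the order: early entry, late entry, entry position, and finally the stopped path. Throughout we use that $Y_R$ and $y$ are $M$-Lipschitz in $\tau$, the margin lower bound $H(y(\tau))\ge(\gamma p_*)^2(\tau_*-\tau)$ from \eqref{eq:app-correction-arrival}, and Lemma~\ref{lem:app-finite-entry}. We may assume $\varepsilon$ is below a fixed small constant; otherwise \eqref{eq:app-entry-joint} holds trivially, since all quantities are bounded by constants (Lemma~\ref{lem:app-finite-entry} applied from $w_0$ with $B=O(R)$ bounds $\tau_R$).

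\emph{Entry is not too early.} If $\tau_R<\tau_*$, then $Y_R(\tau_R)\in C$, so $H(Y_R(\tau_R))=0$. Since $H$ is $M$-Lipschitz and $\|Y_R(\tau_R)-y(\tau_R)\|\le\varepsilon$, we get $(\gamma p_*)^2(\tau_*-\tau_R)\le H(y(\tau_R))\le M\varepsilon$. Hence $\tau_*-\tau_R\le C\varepsilon$.

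\emph{Entry is not too late.} Let $k_0=\lceil R\tau_*/\eta\rceil$, so that $\tau_{k_0}\le\tau_*+\eta/R\le T_0$. Then $\|w_{k_0}/R-E\|\le\varepsilon+M\eta/R$. Because $E\in C$, every margin satisfies $z_i^\top w_{k_0}\ge -MR(\varepsilon+M\eta/R)=:-B$, with $B=O(R\varepsilon+1)$. If entry has not yet occurred, apply Lemma~\ref{lem:app-finite-entry} starting at $\widetilde w_0=w_{k_0}$. It gives additional GD time $\eta(\kappa_R-k_0)\le C(B+1)$, that is, rescaled time at most $C(B+1)/R=C(\varepsilon+1/R)$. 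It also gives path length in parameter space at most $C(B+1)$, so the normalized displacement is at most $C(\varepsilon+1/R)$. Hence $\tau_R-\tau_*\le C(\varepsilon+1/R)$ and $\|w_{\kappa_R}/R-w_{k_0}/R\|\le C(\varepsilon+1/R)$. These give $\|w_{\kappa_R}/R-E\|\le C(\varepsilon+1/R)$. In the early-entry case, the position bound follows instead from $\|Y_R(\tau_R)-y(\tau_R)\|\le\varepsilon$ together with $\|y(\tau_R)-E\|\le M|\tau_R-\tau_*|$.

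\emph{Stopped path.} For $\tau\le\min(\tau_R,\tau_*)$ we use the tracking bound $\varepsilon$ directly. For $\tau\ge\max(\tau_R,\tau_*)$, both paths are constant, and their difference is the position error just bounded. In the gap between the two entry times, one path is frozen while the other moves at speed at most $M$. The gap has length $|\tau_R-\tau_*|$, so the difference is at most the position error plus $M|\tau_R-\tau_*|$.

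The main obstacle is the late-entry step. Lemma~\ref{lem:app-finite-entry} must yield bounds linear in $B$, not quadratic. This holds because $A_B^2/(B+1)=O(B+1)$. We must also check that $\tau_{k_0}\le T_0$, so that the tracking estimate applies at the restart point. The choice $T_0=\tau_*+\eta+1$ guarantees this for $R\ge1$.
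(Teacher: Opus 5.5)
Your proposal is correct and follows essentially the same route as the paper. Early entry is excluded by the Lipschitz bound on $H$ together with \eqref{eq:app-correction-arrival}; late entry and the entry position come from restarting Lemma~\ref{lem:app-finite-entry} at the node $\lceil R\tau_*/\eta\rceil$ with $B=O(R\varepsilon+1)$; and the stopped path is handled by the speed bound $M$ across the gap between $\tau_R$ and $\tau_*$. Your preliminary reduction to small $\varepsilon$ is harmless but unnecessary, and since $y(\tau)=E$ for $\tau\ge\tau_*$ you could take $B=MR\varepsilon$ directly, as the paper does.
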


\begin{proof}[Proof]
At $j_* =\lceil R\tau_*/\eta\rceil$, the ideal correction flow has reached $E$, so every unnormalized GD margin is at least $-MR\varepsilon$. Lemma~\ref{lem:app-finite-entry}, applied from this node, gives
\[
(\tau_R-\tau_*)_+\le\frac\eta R+C\left(\varepsilon+\frac1R\right).
\]
This bounds the first entry time even if GD had already entered $C$ earlier. Conversely, if $\tau_R<\tau_*$, then $H(Y_R(\tau_R))=0$. Since $H$ is $M$-Lipschitz, equation~\eqref{eq:app-correction-arrival} gives
\[
(\gamma p_*)^2(\tau_*-\tau_R)\le H(y(\tau_R))\le M\varepsilon.
\]
The latter case is absent when $\tau_*=0$.

Before the earlier entry, use the path estimate. If GD enters early, use the speed bound for $y$ over the intervening interval. If GD enters late, its displacement after $j_*$ is controlled by the length bound in Lemma~\ref{lem:app-finite-entry}; the interval from $\tau_*$ to $\tau_{j_*}$ adds at most $\eta M/R$. These comparisons also control the two final positions. Thus~\eqref{eq:app-entry-joint} follows.
This deduction does not assume in advance that the actual entry lies in the comparison window.
\end{proof}
Combining Lemmas~\ref{lem:app-correction-approximation} and~\ref{lem:app-entry-stability} proves the basic entry bounds, including their dependence on $e_R$.

\subsection{Post-Entry Trajectory Approximation}
\label{app:post-entry}

We first compare GD with a smooth reference flow from any unnormalized initial point $w^{\mathrm{in}}\in C$. The constants in this comparison are independent of the norm and individual margins of $w^{\mathrm{in}}$.

\begin{lemma}[Uniform bound on iterations to reach small loss]
\label{lem:app-small-loss}
Set
\[
\delta=\min\{p_*/2,(4\eta M^2)^{-1}\},\qquad
D_\eta=2\eta/e+\eta^2M^2,
\qquad N_0=\left\lceil\frac{4D_\eta}{\eta^2\gamma^2\delta^2}\right\rceil+1.
\]
From every $w^{\mathrm{in}}\in C$, there is a node $j<N_0$ with $L(w_j)\le\delta$. For all $k\ge j$,
\begin{equation}
L(w_{k+1})\le L(w_k)-\frac\eta2\|g(w_k)\|^2,
\qquad L(w_k)\le\delta,
\label{eq:app-small-loss-descent}
\end{equation}
all margins are strictly positive, and
\begin{equation}
\eta\sum_{k\ge j}Q(w_k)^2\le\frac{2\delta}{\gamma^2}.
\label{eq:app-q-square-sum}
\end{equation}
\end{lemma}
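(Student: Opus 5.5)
The plan is to prove the three assertions in order: existence of an early small-loss node by a comparison-point argument like that of Lemma~\ref{lem:app-finite-entry}, then descent and positivity by local smoothness, and finally the $Q^2$-sum by telescoping.

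\emph{Reaching small loss.} Fix $w^{\mathrm{in}}\in C$ and the comparison point $\bar w=w^{\mathrm{in}}+Aa$, with $A>0$ chosen below. Set $D_k=\|w_k-\bar w\|^2$, $Q_k=Q(w_k)$ and $g_k=g(w_k)$.

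\begin{itemize}
\item Since $z_i^\top w^{\mathrm{in}}\ge0$ and $z_i^\top a\ge\gamma$, we have $z_i^\top(w_k-\bar w)\le z_i^\top w_k-A\gamma$.
\item The scalar bound $m\,q(m)\le1/e$ holds for every $m$: for $m\le0$ the product is nonpositive, and for $m>0$ we have $mq(m)\le me^{-m}\le1/e$.
\item Combining these gives $(w_k-\bar w)^\top g_k\le 1/e-A\gamma Q_k$. Expanding one update with $\|g_k\|\le M$ then yields
\[
D_{k+1}-D_k\le D_\eta-2\eta A\gamma Q_k .
\]
\end{itemize}

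Next, the key link between loss and $Q$: if $L(w_k)>\delta$, then $Q_k\ge\delta/2$. Suppose instead $Q_k<\delta/2\le p_*/4$. Then every $q(z_i^\top w_k)<1/2$, so all margins are positive. For $m>0$ we have $\ell(m)\le e^{-m}\le2q(m)$, hence $L(w_k)\le2Q_k<\delta$.

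Now suppose, for contradiction, that $L(w_k)>\delta$ for all $k<N$. Take $A=D_\eta/(\eta\gamma\delta)$. Summing the recursion with $Q_k\ge\delta/2$ gives
\[
0\le D_N\le A^2+N(D_\eta-\eta A\gamma\delta)=A^2,
\]
more precisely $0\le A^2-ND_\eta$. Hence $N\le A^2/D_\eta=D_\eta/(\eta^2\gamma^2\delta^2)<N_0$. Therefore some $j<N_0$ has $L(w_j)\le\delta$.

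\emph{Descent and positivity.} Induct on $k\ge j$ with the hypothesis $L(w_k)\le\delta$.

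\begin{itemize}
\item Since $p_i\ell(z_i^\top w_k)\le\delta\le p_*/2$, each $\ell<1/2<\log2$, so every margin is strictly positive.
\item Taylor's theorem along the segment $w_k+t\eta g_k$, $t\in[0,1]$, uses~\eqref{eq:app-hessian-bounds} together with~\eqref{eq:app-loss-displacement}. The displacement satisfies $\|\eta g_k\|\le\eta ML(w_k)\le\eta M\delta\le1/(4M)$. This gives
\[
L(w_{k+1})\le L(w_k)-\eta\|g_k\|^2\Bigl(1-\tfrac{\eta M^2}{2}e^{1/4}\delta\Bigr).
\]
\item Because $\eta M^2\delta\le1/4$, the bracket is at least $1-e^{1/4}/8>1/2$. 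This proves~\eqref{eq:app-small-loss-descent}, and in particular $L(w_{k+1})\le\delta$, which closes the induction.
\end{itemize}

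\emph{Square-summability.} Telescoping~\eqref{eq:app-small-loss-descent} gives $\frac\eta2\sum_{k\ge j}\|g_k\|^2\le L(w_j)\le\delta$. Combined with $\|g_k\|\ge\gamma Q_k$ from~\eqref{eq:app-gradient-basic}, this yields~\eqref{eq:app-q-square-sum}.

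The main obstacle is the first part, because there is no monotonicity before small loss is reached. That is why I use the distance-to-comparison-point potential together with the implication ``$L>\delta$ implies $Q\ge\delta/2$''. The choice of $A$ is exactly what produces the constant $N_0$, and it makes all constants independent of $w^{\mathrm{in}}$.
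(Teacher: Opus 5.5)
\textbf{The counting step fails as written.} The rest of your argument is the paper's in all essentials. You use the same bound $(w_k-w^{\mathrm{in}})^\top g_k\le 1/e$, the same link $Q_k<\delta/2\Rightarrow L(w_k)\le 2Q_k<\delta$, the same Taylor descent with Hessian bound $M^2e^{1/4}L(w_k)$ on the update segment, and the same telescoping. The only difference is packaging: you use the comparison-point potential of Lemma~\ref{lem:app-finite-entry}. The paper instead bounds $\|w_{N_0}-w^{\mathrm{in}}\|^2\le N_0D_\eta$ and $a^\top(w_{N_0}-w^{\mathrm{in}})\ge\eta\gamma\sum_{k<N_0}Q_k$ separately and combines them by Cauchy--Schwarz.

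With $A=D_\eta/(\eta\gamma\delta)$ and $Q_k\ge\delta/2$, your recursion gives
\[
D_{k+1}-D_k\le D_\eta-\eta A\gamma\delta=0 .
\]
This only yields $D_N\le A^2$, which is no contradiction. The claimed refinement $0\le A^2-ND_\eta$ would need every step to decrease $D_k$ by at least $D_\eta$, and your $A$ does not give that.

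The fix is to double the offset, $A=2D_\eta/(\eta\gamma\delta)$. Then $D_{k+1}-D_k\le-D_\eta$ for every $k<N$, so $0\le A^2-ND_\eta$ and
\[
N\le \frac{A^2}{D_\eta}=\frac{4D_\eta}{\eta^2\gamma^2\delta^2}<N_0 .
\]
This is exactly the paper's constant. More generally, for any $A$ with $\eta\gamma\delta A>D_\eta$, your potential gives $N\le A^2/(\eta\gamma\delta A-D_\eta)$. That bound is minimized precisely at the doubled $A$, with value $4D_\eta/(\eta^2\gamma^2\delta^2)$. So the bound $D_\eta/(\eta^2\gamma^2\delta^2)$ you state cannot come out of this argument. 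The factor-of-four mismatch with $N_0$, despite your remark that $A$ ``exactly'' produces $N_0$, should have flagged the slip.

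With this correction the first part is complete. The descent, positivity, and square-summability parts are correct as written.
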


\begin{proof}
Write $h_k=w_k-w^{\mathrm{in}}$. The nonnegative initial margins give
\[
h_k^\top g(w_k)
\le\sum_i p_i(z_i^\top w_k)q(z_i^\top w_k)\le1/e,
\]
using $mq(m)\le me^{-m}\le1/e$ for $m\ge0$ and $mq(m)<0$ for $m<0$. Hence $\|h_{N_0}\|^2\le N_0D_\eta$, while strict separability gives
\[
\eta\gamma\sum_{k<N_0}Q(w_k)\le a^\top h_{N_0}\le\sqrt{N_0D_\eta}.
\]
Some $j<N_0$ therefore has $Q(w_j)<\delta/2$. At this node every $q(z_i^\top w_j)\le Q(w_j)/p_*\le1/4$, so $L(w_j)\le2Q(w_j)<\delta$.

If $L(w_k)\le\delta$, equations~\eqref{eq:app-hessian-bounds} and~\eqref{eq:app-loss-displacement} bound the Hessian on the update segment by $M^2e^{1/4}L(w_k)$. Taylor's theorem then gives
\[
L(w_{k+1})\le L(w_k)-\eta\left(1-\frac{\eta M^2e^{1/4}L(w_k)}2\right)\|g(w_k)\|^2
\le L(w_k)-\frac\eta2\|g(w_k)\|^2.
\]
Thus this loss region is invariant. Summing the descent inequality and using $\|g(w_k)\|\ge\gamma Q(w_k)$ proves~\eqref{eq:app-q-square-sum}. Each sample loss is at most $\delta/p_*\le1/2<\log2$, which implies strictly positive margins.
\end{proof}

Define the reference field and flow by
\begin{equation*}
F(w)=\frac{\sum_i p_i e^{-z_i^\top w}z_i}{\sum_i p_i e^{-z_i^\top w}},
\qquad \dot U(t)=F(U(t)),\qquad U(0)=w^{\mathrm{in}}.
\end{equation*}

Parameterize the GD interpolation $W$ from $w^{\mathrm{in}}$ by the time nodes $t_k=\eta\sum_{l<k}Q(w_l)$, before division by $R$.

\begin{lemma}[Uniform GD--reference-flow approximation]
\label{lem:app-reference-comparison}
For every $w^{\mathrm{in}}\in C$,
\begin{equation}
\sup_{t\ge0}\|W(t)-U(t;w^{\mathrm{in}})\|\le C_0,
\label{eq:app-shadow-bound}
\end{equation}
\par
with $C_0$ independent of $w^{\mathrm{in}}$.
\end{lemma}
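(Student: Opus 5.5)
The plan is to regard $U$ as a gradient flow of a smooth convex function, use nonexpansiveness to absorb errors, and split the GD run at the node $j<N_0$ from Lemma~\ref{lem:app-small-loss}. Put
\[
\phi(w)=\log\sum_i p_i e^{-z_i^\top w}.
\]
Then $F=-\nabla\phi$, and $\nabla^2\phi$ is a weighted covariance of the $z_i$. Hence $\phi$ is convex, $F$ is $M^2$-Lipschitz, and $\|F\|\le M$. Convexity gives the key monotonicity
\[
(w-w')^\top\bigl(F(w)-F(w')\bigr)\le0 .
\]
Consequently two solutions of $\dot U=F(U)$ never move apart. Moreover, for any absolutely continuous $W$ with $\dot W=F(W)+\rho$, we get
\[
\frac{d}{dt}\|W-U\|\le\|\rho\|
\]
almost everywhere. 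The whole proof then reduces to bounding $\int\|\rho\|\,dt$ uniformly in $w^{\mathrm{in}}$.

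\textbf{Initial phase ($k\le j$).} Take $j<N_0$ from Lemma~\ref{lem:app-small-loss}. Since $Q\le1$, the elapsed reference time is $t_j\le\eta N_0$. The GD displacement is at most $\eta MN_0$, and $U$ moves at most $Mt_j\le\eta MN_0$. Hence
\[
\sup_{t\le t_j}\|W(t)-U(t)\|\le2\eta MN_0 .
\]
This is a constant depending only on the data and $\eta$, and not on $w^{\mathrm{in}}$.

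\textbf{Main phase ($k\ge j$).} On $(t_k,t_{k+1})$, the interpolated path has constant velocity
\[
G_k=\frac{g(w_k)}{Q(w_k)}=\sum_i\frac{p_iq_i}{\sum_l p_lq_l}\,z_i,
\qquad q_i=q(z_i^\top w_k).
\]
I write $\rho=G_k-F(W(t))$ and split it as
\[
\rho=\bigl(G_k-F(w_k)\bigr)+\bigl(F(w_k)-F(W(t))\bigr).
\]

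For the first term, Lemma~\ref{lem:app-small-loss} guarantees that all margins are positive. Since $q_i=e^{-m_i}/(1+e^{-m_i})$, the normalized $q$-weights and the normalized $e^{-m}$-weights differ by factors in $[1/(1+e^{-m_i}),1]$. A direct computation then shows
\[
\|G_k-F(w_k)\|\le CM\max_i e^{-m_i}\le C'\,Q(w_k)/p_* ,
\]
because $e^{-m}\le2q(m)$ for $m\ge0$.

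For the second term, $\|W(t)-w_k\|\le\eta MQ(w_k)$, so the Lipschitz bound gives a contribution of at most $\eta M^3Q(w_k)$.

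Thus $\|\rho\|\le C''Q(w_k)$ on an interval of length $\eta Q(w_k)$. Summing over $k\ge j$ and invoking~\eqref{eq:app-q-square-sum},
\[
\int_{t_j}^\infty\|\rho\|\,dt\le C''\eta\sum_{k\ge j}Q(w_k)^2\le\frac{2C''\delta}{\gamma^2}.
\]
Adding the initial-phase bound gives~\eqref{eq:app-shadow-bound}, with $C_0=2\eta MN_0+2C''\delta/\gamma^2$. All the constants involved are independent of $w^{\mathrm{in}}$.

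The main obstacle is the weight-comparison estimate $\|G_k-F(w_k)\|=O(Q(w_k))$. It must hold uniformly, without any lower bound on the margins, and it relies on all margins being nonnegative after node $j$. If that estimate were only $O(\max_iq_i)$ with a worse constant, it would still suffice, since $\max_iq_i\le Q/p_*$. Using the non-squared distance $\|W-U\|$ together with monotonicity is what makes the errors add linearly rather than compound, so no Gronwall factor in $t$ appears.
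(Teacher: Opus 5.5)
Your proposal is correct and follows essentially the same route as the paper: the convex log-sum-exp potential gives the monotonicity of $F$, you split at the small-loss node $j<N_0$ with a $2M\eta N_0$ bound before $t_j$, and afterwards you bound the velocity discrepancy by $O(Q(w_k))$ (weight comparison plus the $M^2$-Lipschitz bound on $F$) and sum it using \eqref{eq:app-q-square-sum}. The remaining differences are cosmetic and change only constants. You compare the normalized weights through the factors $1-q_i$ rather than through the paper's identity $e^{-z_i^\top w_k}=q_i/(1-q_i)$, and you bound the distance at $t_j$ directly instead of integrating the discrepancy from $0$.
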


\begin{proof}[Proof]
The field is the negative gradient of the convex function $\log\sum_i p_i e^{-z_i^\top w}$. Its derivative is the negative covariance matrix of the feature vectors under the normalized exponential weights. Therefore $\|F\|\le M$, $\operatorname{Lip}(F)\le M^2$, and
\[
(u-v)^\top(F(u)-F(v))\le0.
\]
In particular, the flow exists for all times and does not increase initial distances.

On an interval between nodes, $\dot W=g(w_k)/Q(w_k)$. Choose $j<N_0$ as in Lemma~\ref{lem:app-small-loss}. In the small-loss region, $q_i\le1/2$ and $e^{-z_i^\top w_k}=q_i/(1-q_i)$. Since
\[
\sum_i p_i\left(\frac{q_i}{1-q_i}-q_i\right)
\le2\sum_i p_iq_i^2\le\frac{2Q(w_k)^2}{p_*},
\]
normalizing these two sets of positive weights yields
\[
\left\|\frac{g(w_k)}{Q(w_k)}-F(w_k)\right\|\le\frac{4M}{p_*}Q(w_k).
\]
Also $\|W(t)-w_k\|\le M(t-t_k)$. Thus, for the discrepancy $d(t)=\dot W(t)-F(W(t))$,
\[
\begin{split}
\int_{t_j}^{\infty}\|d(t)\|\,dt
&\le\left(\frac{4M}{p_*}+\frac{\eta M^3}{2}\right)
\eta\sum_{k\ge j}Q(w_k)^2\\
&\le\left(\frac{4M}{p_*}+\frac{\eta M^3}{2}\right)\frac{2\delta}{\gamma^2}.
\end{split}
\]
Before $t_j$, both velocities have norm at most $M$ and $t_j\le\eta N_0$. The integral over that interval is at most $2M\eta N_0$. Nonexpansiveness gives
\[
\|W(t)-U(t)\|\le\int_0^t\|d(s)\|\,ds.
\]
This inequality follows by differentiating the distance, or a regularized distance when it is zero. Together these estimates prove~\eqref{eq:app-shadow-bound}.
The finite interval before small loss allows margins to become negative again after the first entry into $C$.
\end{proof}

\begin{lemma}[Reference-flow approximation]
\label{lem:app-reference-approximation}
For fixed $E\in C$, let $V_R(s)=U(Rs;RE)/R$.
On every fixed interval $[0,S]$,
\begin{equation}
\sup_{0\le s\le S}\|V_R(s)-x_E(s)\|
\le C_{E,S}\frac{\log(2+R)}R.
\label{eq:app-reference-log-bound}
\end{equation}
\par
All constants refer to the fixed path starting at $E$.
\end{lemma}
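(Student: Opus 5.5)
We prove the bound by comparing the rescaled reference flow $V_R$ with the subgradient flow of $-b$ through a smoothed minimum margin. Define
\[
b_R(x)=-\frac1R\log\sum_i p_ie^{-Rz_i^\top x}.
\]
Then $U(Rs;RE)/R$ solves $\dot V_R=\nabla b_R(V_R)$, since $F(Rx)=\nabla b_R(x)$ and the chain rule cancels the factors of $R$. Moreover $-b_R$ is convex and satisfies
\[
0\le b(x)-b_R(x)\le\frac{\log(1/p_*)}{R}.
\]
A direct convexity comparison, as in Lemma~\ref{lem:app-correction-approximation}, gives
\[
\frac12\frac d{ds}\|V_R-x_E\|^2\le b(x_E)-b_R(x_E)+b_R(V_R)-b(V_R)\le\frac{\log(1/p_*)}R .
\]
Integrating yields only the rate $R^{-1/2}$. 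The claimed $\log(2+R)/R$ rate therefore needs the finite segment structure from Proposition~\ref{prop:complete-path}.

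\textbf{Segment-by-segment comparison.} On a segment of $x_E$ with velocity $v$ and active set $J$, the gaps $z_i^\top x_E-b(x_E)$ for $i\notin J$ are bounded below by $c\,t$, where $t$ is the distance in time to the nearest event, and the gaps of $J$ stay equal. Write $v=\sum_{J}\alpha_iz_i$.

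The aim is to show that $V_R$ stays within $O(\log R/R)$ of $x_E$. The key observation is that for $V_R$ the softmax weights on inactive samples are $O(e^{-Rc\,t})$ once $V_R$ is within $c\,t/4$ of $x_E$. The softmax over $J$ itself is attracted toward the weights $\alpha$: in the directions orthogonal to the affine hull of $\{z_i\}_{i\in J}$ the margins of $J$ have equal derivatives, while inside the span a margin imbalance of size $\Delta$ produces a restoring drift. More precisely, $x\mapsto\nabla b_R$ restricted to $J$ is the gradient of a smoothed min over $J$, whose fixed-tie manifold is stable.

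I would instead use a cleaner device that avoids tracking these weights. Compare $V_R$ with the shifted flow $x_E(s)+\xi_R$, where $\xi_R$ is a constant chosen so that the $J$-margins are offset by $\frac1R\log(\alpha_i/p_i)$ (and arbitrarily if $\alpha_i=0$, handled by a $\log R/R$ offset). Along this shifted line the softmax weights on $J$ equal exactly $\alpha$ up to the exponentially small inactive corrections. It then follows that $\nabla b_R(x_E+\xi_R)=v+O(e^{-cR\,\mathrm{dist}})$. Nonexpansiveness of the flow of $\nabla b_R$, which is the gradient of a concave function, then gives
\[
\frac d{ds}\|V_R-x_E-\xi_R\|\le\|\nabla b_R(x_E+\xi_R)-v\| .
\]
Integrating the exponentially small error over a segment, except within $O(\log R/R)$ of its events, gives total error $O(\log R/R)$. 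The last point is that $\|\xi_R\|=O(\log R/R)$ when the zero-coefficient indices are cut off at level $1/R$.

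\textbf{Near events.} Near an event, on windows of length $O(\log R/R)$, both flows have speed at most $M$, so each such window contributes $O(\log R/R)$. Since there are finitely many events on $[0,S]$ (Proposition~\ref{prop:complete-path}), summing gives~\eqref{eq:app-reference-log-bound}. The start at $E$ is handled the same way, including when $E$ lies on the boundary of $C$ or $E=0$, using the initial active set.

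The main obstacle is the segment comparison: justifying that the $J$-weights of the softmax can be matched to a fixed $\alpha$ by a bounded shift. This is delicate when $\alpha$ is non-unique or has zero components, and it requires that indices in $I\setminus J$ separate at linear rate. I would first try the shifted-line argument above, falling back to a Lyapunov estimate on the $J$-margin differences if exact matching fails.
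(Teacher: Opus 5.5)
Your architecture matches the paper's: work segment by segment along $x_E$, compare with the ideal line shifted by a constant of size $O(\log(2+R))$ before normalization, control the discrepancy by convexity, and pay $O(\log(2+R)/R)$ on short windows at events. Using monotonicity of $\nabla b_R$ with a gradient defect, instead of the paper's value-gap inequality $\frac12\frac{d}{dt}\|h-h_R\|^2\le\psi_{R,t}(h_R)-\psi_{R,t}(h)$, is an inessential difference.

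The gap is the step you flag and leave open, which is the heart of the $\log(2+R)/R$ rate: the existence of the shift. Your recipe fixes a representation $v=\sum_{i\in J}\alpha_iz_i$ and solves $z_i^\top\xi_R=c+\frac1R\log(p_i/\alpha_i)$ for $i\in J$ (your sign is reversed). This linear system is generally unsolvable when the tied vectors are affinely dependent. Take the four data points $(\pm1,0,1)$, $(0,\pm1,1)$ and $E=0$. All four stay tied along $x_E(s)=s(0,0,1)$, the representations of $v$ are $\alpha=(a,a,b,b)$, and a shift exists only if $b/a=\sqrt{p_3p_4/(p_1p_2)}$. So you must show that \emph{some} representation has the form $\alpha_i\propto p_ie^{-z_i^\top h}$, and linear algebra does not give this. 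Likewise, when $v$ lies on the relative boundary of $\operatorname{conv}\{z_i:i\in J\}$, which indices get vanishing weight is not a free choice of $\alpha$. It is dictated by the smallest face containing $v$. The $\log(2+R)$ offset must then point along a vector exposing that face: orthogonal to $z_i-v$ on the face and positive on the other tied indices. Otherwise it destroys the matching.

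The paper supplies exactly this missing idea. In the moving frame $h(t)=U(Rs_j+t;RE)-Rx_j-tv$ one has $\dot h=-\nabla_h\psi_{R,t}(h)$ with $\psi_{R,t}(h)=\log\sum_ip_ie^{-Rd_i-\sigma_it-\xi_i^\top h}$ and $\xi_i=z_i-v$. Let $J_0$ index the vectors in the smallest face of $\operatorname{conv}\{\xi_i:i\in J\}$ containing $0$. Zero then lies in the relative interior of that face, so $\phi_0(h)=\log\sum_{i\in J_0}p_ie^{-\xi_i^\top h}$ is coercive on $\operatorname{span}\{\xi_i:i\in J_0\}$ and has a minimizer $h_*$. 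Its first-order condition says precisely that the softmax weights at $h_*$ average the $z_i$, $i\in J_0$, to $v$. This is the correct $\alpha$, obtained by minimization rather than by solving for margin offsets.

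An exposing vector $d$ of the face is then enlarged by a multiple of $v$ to also suppress indices leaving the minimum. This gives $h_R=h_*+d\log(2+R)$ with $0\le\psi_{R,t}(h_R)-\phi_0(h_*)\le C(2+R)^{-3}$ on the segment minus a final window of length $O(\log(2+R))$. With $h_*$ and $d$ in hand, your defect bound $\|\nabla b_R(x_E(s)+h_R/R)-v\|=O(R^{-3})$ would hold there, and your nonexpansiveness argument would finish the proof. Without them, and with the fallback Lyapunov estimate unspecified, the proposal establishes only the $R^{-1/2}$ rate, not the lemma.
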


\begin{proof}[Proof]
It is the negative gradient flow of
\[
\Phi_R(x)=\frac1R\log\sum_i p_i e^{-Rz_i^\top x},
\qquad -b(x)-\frac{\log(1/p_*)}{R}\le\Phi_R(x)\le-b(x).
\]
Convexity, as in the correction comparison, gives
\[
\frac12\frac{d}{ds}\|V_R-x_E\|^2\le\frac{\log(1/p_*)}{R}.
\]
Thus the two flows starting at $E$ differ by at most $\sqrt{2S\log(1/p_*)/R}$ on $[0,S]$. The finite piecewise-affine structure of $x_E$ gives the stronger estimate~\eqref{eq:app-reference-log-bound}.

Consider one ideal segment, starting at $x_j=x_E(s_j)$, with velocity $v$ and fixed positive duration $\ell$. Set
\[
\xi_i=z_i-v,\qquad d_i=z_i^\top x_j-b(x_j),\qquad
\sigma_i=\xi_i^\top v.
\]
The gaps $d_i+s\sigma_i$ are nonnegative for $0\le s\le\ell$. In the moving coordinates
$h(t)=U(Rs_j+t;RE)-Rx_j-tv$, direct substitution gives
\begin{equation*}
\dot h(t)=-\nabla_h\psi_{R,t}(h(t)),\qquad
\psi_{R,t}(h)=\log\sum_i p_i e^{-Rd_i-\sigma_i t-\xi_i^\top h}.
\end{equation*}
The minimum margin $b(x_j)$ cancels. The argument therefore also applies when $b(E)=0$, including $E=0$.

Let $J=\{i:d_i=\sigma_i=0\}$, the indices that remain tied on this segment. Appendix~\ref{app:margin-geometry} gives $0\in\operatorname{conv}\{\xi_i:i\in J\}$. Take the smallest face of this convex hull containing zero, and let $J_0$ contain all indices whose vectors lie in that face. Zero lies in its relative interior. On $V=\operatorname{span}\{\xi_i:i\in J_0\}$, the function
\[
\phi_0(h)=\log\sum_{i\in J_0}p_i e^{-\xi_i^\top h}
\]
attains its minimum at some $h_*\in V$. To see this, a ball about zero in $V$ lies in $\operatorname{conv}\{\xi_i:i\in J_0\}$, so in every unit direction some $\xi_i$ with $i\in J_0$ has a uniformly negative inner product. This implies $\phi_0(h)\ge c\|h\|+\log p_*$ on $V$ for some $c>0$. If $V=\{0\}$, take $h_*=0$. Writing $m_* =\phi_0(h_*)$, we have
\[
\psi_{R,t}(h)\ge\phi_0(\Pi_V h)\ge m_*.
\]

A face of a finite polytope is exposed by a linear functional. Choose a vector $d$ orthogonal to $\xi_i$ for $i\in J_0$ and with $\xi_i^\top d>0$ for $i\in J\setminus J_0$. All $\xi_i$ with $i\in J$ are orthogonal to $v$, so $d$ may also be chosen orthogonal to $v$. Adding a sufficiently large multiple of $v$ makes it positive on every initially tied vector that immediately leaves the minimum. Rescaling gives
\[
\xi_i^\top d=0\ (i\in J_0),\qquad
\xi_i^\top d\ge3\ (i\in I(x_j)\setminus J_0).
\]
If the second set is empty, take $d=0$. Use the constant comparison point $h_R=h_*+d\log(2+R)$. There is a fixed $B\ge0$ such that
\begin{equation*}
0\le\psi_{R,t}(h_R)-m_*\le C(2+R)^{-3}
\quad\text{for }0\le t\le R\ell-B\log(2+R).
\end{equation*}
Indeed, the terms indexed by $J_0$ sum to $e^{m_*}$, and the other initially tied terms are suppressed by the choice of $d$. An initially inactive margin with positive gap at both ends has an unnormalized gap of order $R$ throughout the segment. If it reaches the minimum at the end, then $\sigma_i<0$, and on the shortened interval its unnormalized gap is at least $|\sigma_i|B\log(2+R)$. Choosing $B$ so that $|\sigma_i|B+\xi_i^\top d\ge3$ for each such index proves the bound. There are finitely many indices.

Convexity in $h$, with time held fixed, now gives
\[
\frac12\frac{d}{dt}\|h(t)-h_R\|^2
\le\psi_{R,t}(h_R)-\psi_{R,t}(h(t))\le C(2+R)^{-3}.
\]
Integrating over an interval of length $O(R)$ and using $\|h_R\|=O(\log(2+R))$ bounds $\|h(t)\|$ by $\|h(0)\|+C\log(2+R)$. The remaining interval has length $O(\log(2+R))$ and $\|\dot h\|\le2M$, so the same order holds on the full segment. Applying this estimate successively to the finitely many segments up to $S$ proves~\eqref{eq:app-reference-log-bound}. A final partial segment is handled by ending the comparison there.
\end{proof}

\paragraph{Post-entry comparison.}
Apply~\eqref{eq:app-shadow-bound} from $w_{\kappa_R}$, and compare its reference flow with the one from $RE$. Nonexpansiveness bounds their distance by $\|w_{\kappa_R}-RE\|$ at every common local time. With $X_R^+(s)=P_R(\tau_R+s)$, Lemmas~\ref{lem:app-reference-comparison} and~\ref{lem:app-reference-approximation} therefore imply
\begin{equation}
\sup_{0\le s\le S}\|X_R^+(s)-x_E(s)\|
\le\left\|\frac{w_{\kappa_R}}R-E\right\|
+\frac{C_0+C_{E,S}\log(2+R)}R.
\label{eq:app-post-entry-bound}
\end{equation}
This comparison uses the fixed ideal entry $E$ for the geometric estimate, so it requires no uniform geometric constants over a moving family of ideal initial points.

\subsection{Refined Error Bounds under Nonexpansive GD Updates}
\label{app:b5-logarithmic-correction}

We strengthen the correction estimate under~\eqref{eq:step-condition}. The main issue is a segment that lies on one or more zero-margin hyperplanes. Its ideal velocity may require a boundary point of the set of possible logistic contributions, so a finite offset need not reproduce that velocity exactly. We first show that an offset of logarithmic size approximates it to any prescribed polynomial accuracy.

\begin{lemma}[Approximation of correction velocities]
\label{lem:b5-logistic-balance}
For a fixed index set $\mathcal I\subseteq\{1,\ldots,n\}$, define
\[
K_{\mathcal I}=\left\{\sum_{i\in\mathcal I}p_i\theta_i z_i:
                   0\le\theta_i\le1\right\},
\qquad
G_{\mathcal I}(h)=\sum_{i\in\mathcal I}p_iq(z_i^\top h)z_i.
\]
For every fixed $c\in K_{\mathcal I}$ and $\nu>0$, there is a constant $C$ and, for every $R\ge1$, a vector $h_R$ such that
\begin{equation}
\|h_R\|\le C\log(2+R),
\qquad
\|G_{\mathcal I}(h_R)-c\|\le C(2+R)^{-\nu}.
\label{eq:b5-logistic-balance}
\end{equation}
The constant may depend on the vectors, weights, $\mathcal I$, $c$, and $\nu$, but not on $R$.
\end{lemma}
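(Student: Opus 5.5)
The plan is to use the face structure of the zonotope $K_{\mathcal I}$. The coordinates in which $c$ sits at a boundary are pushed to saturation by a direction of logarithmic length. The coordinates in which $c$ is interior are matched exactly by a bounded offset.

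\textbf{Step 1 (face decomposition).} $K_{\mathcal I}$ is the Minkowski sum of the segments $[0,p_iz_i]$, $i\in\mathcal I$. Every face of $K_{\mathcal I}$ is exposed by some vector $d$. Set $\mathcal I_+=\{i:z_i^\top d>0\}$, $\mathcal I_-=\{i:z_i^\top d<0\}$ and $\mathcal I_0=\{i:z_i^\top d=0\}$. The exposed face is then
\[
\sum_{i\in\mathcal I_-}p_iz_i+K_{\mathcal I_0}.
\]
Let $F$ be the unique face of $K_{\mathcal I}$ containing $c$ in its relative interior. If $c$ is in the relative interior of $K_{\mathcal I}$ itself, take $d=0$. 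We obtain a decomposition $c=\sum_{i\in\mathcal I_-}p_iz_i+c_0$ with $c_0\in\operatorname{relint}K_{\mathcal I_0}$. The relative interior of a Minkowski sum is the sum of the relative interiors of its summands. Hence we can write $c_0=\sum_{i\in\mathcal I_0}p_i\theta_iz_i$ with every $\theta_i\in(0,1)$.

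\textbf{Step 2 (exact balance on the interior part).} Let $V=\operatorname{span}\{z_i:i\in\mathcal I_0\}$. Note that $c_0\in V$ and that $G_{\mathcal I_0}$ takes values in $V$. Minimize the convex function
\[
f(h)=\sum_{i\in\mathcal I_0}p_i\ell(z_i^\top h)+c_0^\top h
\qquad\text{over } h\in V.
\]
Fix a unit vector $u\in V$. The slope of $t\mapsto f(tu)$ as $t\to\infty$ equals
\[
\sum_{i\in\mathcal I_0:\,z_i^\top u<0}p_i(1-\theta_i)|z_i^\top u|
+\sum_{i\in\mathcal I_0:\,z_i^\top u>0}p_i\theta_i z_i^\top u .
\]
This slope is strictly positive, because $u\in V\setminus\{0\}$ forces some $z_i^\top u\ne0$, and every $\theta_i\in(0,1)$. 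By compactness of the unit sphere in $V$, together with convexity, $f$ is coercive on $V$. It therefore has a minimizer $h_0\in V$. At $h_0$ the gradient of $f$, restricted to $V$, vanishes, which gives $G_{\mathcal I_0}(h_0)=c_0$ exactly. The vector $h_0$ depends only on the fixed data and on $c$.

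\textbf{Step 3 (logarithmic saturation).} Set $h_R=h_0+A\log(2+R)\,d$, with $d$ normalized. Since $z_i^\top d=0$ for $i\in\mathcal I_0$, the $\mathcal I_0$-terms still sum to $c_0$. Let $\beta=\min_{i\notin\mathcal I_0}|z_i^\top d|>0$.
\begin{itemize}
\item For $i\in\mathcal I_+$ we have $q(z_i^\top h_R)\le e^{\|z_i\|\|h_0\|}(2+R)^{-A\beta}$.
\item For $i\in\mathcal I_-$ we have $1-q(z_i^\top h_R)$ bounded by the same quantity.
\end{itemize}
Choosing $A\ge\nu/\beta$ gives $\|G_{\mathcal I}(h_R)-c\|\le C(2+R)^{-\nu}$. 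It also gives $\|h_R\|\le\|h_0\|+A\log(2+R)\le C\log(2+R)$ for $R\ge1$.

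\textbf{Main obstacle.} The only delicate point is Step 2: showing that the coefficients attached to the interior part can be taken strictly inside $(0,1)$, and hence that $f$ is coercive on $V$. This is where the choice of the minimal face containing $c$ is essential. If one instead used an arbitrary representation of $c$ with some $\theta_i\in\{0,1\}$ on $\mathcal I_0$, coercivity could fail and no finite $h_0$ would exist.
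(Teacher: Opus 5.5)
Your proposal is correct and follows essentially the same route as the paper. Both arguments take the smallest face of $K_{\mathcal I}$ containing $c$, expose it by a vector that splits $\mathcal I$ into saturated and free indices, match the free part exactly by minimizing a logistic-plus-linear potential on the span of the free vectors, and then add a $\log(2+R)$ multiple of the exposing vector. The differences are cosmetic. The paper constructs the exposing vector explicitly and gets coercivity from a ball $d+\rho B_{V_J}\subset K_J$ via the bound $\Psi_d(h)\ge\rho\|h\|$. You instead cite exposedness of polytope faces and get coercivity from a representation with all $\theta_i\in(0,1)$; since $\ell(m)\ge[-m]_+$, your slope expression is in fact a global linear lower bound for $f$ on $V$.
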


\begin{proof}
If $\mathcal I$ is empty, take $h_R=0$.

\paragraph{Relative-interior points.}
Fix an index set $J$ and write $V_J=\operatorname{span}\{z_i:i\in J\}$. Because $K_J$ contains zero, its affine hull is $V_J$. For every finite $h$, each coefficient $p_iq(z_i^\top h)$ lies strictly between zero and $p_i$. Such a coefficient vector maps into $\operatorname{ri}K_J$. Indeed, choose a basis of $V_J$ from the vectors $z_i$. Every sufficiently small vector in $V_J$ can be obtained by a sufficiently small change in the corresponding coefficients, which remain inside their intervals. Thus $G_J(h)$ has a relative neighborhood contained in $K_J$.

Conversely, let $d\in\operatorname{ri}K_J$. If $V_J=\{0\}$, then $d=0=G_J(0)$. Otherwise there is a $\rho>0$ such that
\[
d+\rho B_{V_J}\subset K_J,
\qquad
B_{V_J}=\{v\in V_J:\|v\|\le1\}.
\]
On $V_J$, consider the differentiable convex function
\[
\Psi_d(h)=\sum_{i\in J}p_i\log(1+e^{-z_i^\top h})+d^\top h.
\]
For $h\ne0$, the scalar inequality $\log(1+e^t)\ge[t]_+$ and the point $d-\rho h/\|h\|\in K_J$ give
\begin{align*}
\Psi_d(h)
&\ge \sum_{i\in J}p_i[-z_i^\top h]_++d^\top h
 =\max_{b\in K_J}(-b^\top h)+d^\top h
 \ge \rho\|h\|.
\end{align*}
Hence $\Psi_d$ is coercive on the finite-dimensional space $V_J$ and attains a minimum at some $h_0\in V_J$. Its full gradient $d-G_J(h_0)$ lies in $V_J$, so vanishing of its derivative on $V_J$ implies
\begin{equation}
G_J(h_0)=d.
\label{eq:b5-interior-match}
\end{equation}
This proves $G_J(\mathbb R^d)=\operatorname{ri}K_J$, without assuming that the vectors are linearly independent.

\paragraph{Boundary points.}
Now let $F$ be the smallest face of $K_{\mathcal I}$ containing $c$. Such a face exists because a polytope has finitely many faces and intersections of faces are faces. Moreover, $c\in\operatorname{ri}F$: otherwise a supporting hyperplane within $\operatorname{aff}F$ would produce a smaller face containing $c$.

We record explicitly why $F$ can be exposed by a single vector. If $F=K_{\mathcal I}$, take $\xi=0$. Otherwise put $U=\operatorname{span}(F-F)$ and project the vectors $r-c$, for vertices $r$ of $K_{\mathcal I}$ outside $F$, onto $U^\perp$. The convex hull $D$ of these projected vectors does not contain zero. To see this, a convex combination with zero projection would give a point of $K_{\mathcal I}\cap\operatorname{aff}F$ made from vertices outside $F$. But $K_{\mathcal I}\cap\operatorname{aff}F=F$: if $b$ were in that intersection, relative interior of $c$ would allow a point $c-\epsilon(b-c)\in F$, and the face property applied to their segment would imply $b\in F$. The face property would then require all vertices in the convex combination to belong to $F$, a contradiction.

Let $\xi$ be the minimum-norm point of $D$. Projection optimality gives a strictly positive inner product between $\xi$ and every projected vector in $D$, while $\xi$ is orthogonal to $U$. Therefore
\begin{equation*}
F=\arg\min_{b\in K_{\mathcal I}}\xi^\top b.
\end{equation*}
Here the construction of $D$ uses only the finitely many vertices of the polytope, so $D$ is compact and its nonzero projection is well defined.

Minimizing separately over the coefficient intervals in $K_{\mathcal I}$ now gives
\[
J=\{i\in\mathcal I:\xi^\top z_i=0\},
\qquad
b_-=\sum_{\substack{i\in\mathcal I\\\xi^\top z_i<0}}p_i z_i,
\qquad
F=b_-+K_J.
\]
Thus $d=c-b_-\in\operatorname{ri}K_J$. Choose $h_0$ from~\eqref{eq:b5-interior-match}; when $V_J=\{0\}$, use $h_0=0$. Set
\begin{equation}
h_R=h_0+\alpha\log(2+R)\,\xi.
\label{eq:b5-log-offset}
\end{equation}
The margins for indices in $J$ are unchanged. If $\xi=0$, all indices are in $J$ and the match is already exact. Otherwise let
\[
\sigma=\min_{i\in\mathcal I\setminus J}|\xi^\top z_i|>0
\]
and choose $\alpha\sigma\ge\nu$. For $\xi^\top z_i>0$,
\[
q(z_i^\top h_R)
\le e^{-z_i^\top h_0}(2+R)^{-\alpha\xi^\top z_i},
\]
whereas for $\xi^\top z_i<0$,
\[
1-q(z_i^\top h_R)
\le e^{z_i^\top h_0}(2+R)^{-\alpha|\xi^\top z_i|}.
\]
The finitely many remaining terms therefore approach their prescribed endpoints at order $(2+R)^{-\nu}$. Combining them with the exact contribution $G_J(h_0)=c-b_-$ proves the second bound in~\eqref{eq:b5-logistic-balance}. Equation~\eqref{eq:b5-log-offset} gives the first.
\end{proof}

\paragraph{A reference on each affine segment.}
Fix $T>0$.
By Appendix~\ref{app:finite-construction}, the correction flow has finitely many affine segments. Subdivide them at every additional zero-margin event in $[0,T]$. This introduces only finitely many cuts, since each nonconstant affine margin can vanish at most once on a straight segment. We obtain
\[
0=\sigma_0<\sigma_1<\cdots<\sigma_J=T
\]
such that, on each open interval $(\sigma_{j-1},\sigma_j)$, the velocity $v_j$ and the sets of negative, zero, and positive margins, denoted by $N_j,Z_j,P_j$, are constant. Include the constant segment after $\tau_*$ whenever it meets the window.

Set
\[
b_j^-=\sum_{i\in N_j}p_i z_i,
\qquad c_j=v_j-b_j^-.
\]
\par

\begin{lemma}[Reference trajectories on correction segments]
\label{lem:b5-segment-reference}
There are constants $C_h,C_d,B>0$, independent of $R$, and, for every $R\ge1$, offsets $h_{j,R}$ such that
\begin{equation}
\|h_{j,R}\|\le C_h\ell_R,
\qquad
\|G_{Z_j}(h_{j,R})-c_j\|\le C_h(2+R)^{-2},
\qquad \ell_R=\log(2+R).
\label{eq:b5-segment-balance}
\end{equation}
\par
The same $C_h$ can be used for all segments because their number is fixed. Define
\[
B_{j,R}(t)=Ry(t/R)+h_{j,R},
\qquad R\sigma_{j-1}\le t\le R\sigma_j.
\]
This reference is affine with derivative $v_j$.
\par

For this $B$, let
\[
I_{j,R}=[R\sigma_{j-1}+B\ell_R,\ R\sigma_j-B\ell_R].
\]
\par
These intervals are nonempty for sufficiently large $R$.
For all sufficiently large $R$,
\begin{equation}
\|g(B_{j,R}(t))-v_j\|\le C_d(2+R)^{-2}
\qquad(t\in I_{j,R}).
\label{eq:b5-reference-defect}
\end{equation}
\par
\end{lemma}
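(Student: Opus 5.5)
Everything reduces to Lemma~\ref{lem:b5-logistic-balance}, applied once per segment with $\mathcal I=Z_j$, target $c=c_j$, and exponent $\nu=2$. First I check that $c_j\in K_{Z_j}$. On the open interval $(\sigma_{j-1},\sigma_j)$ the sign pattern is constant, so the selected velocity $v_j$ is the minimum-norm point of $A(x)$ in~\eqref{eq:app-correction-set}. That set equals $b_j^-+K_{Z_j}$, hence $c_j=v_j-b_j^-\in K_{Z_j}$. The lemma then gives $h_{j,R}$ with $\|h_{j,R}\|\le C\ell_R$ and $\|G_{Z_j}(h_{j,R})-c_j\|\le C(2+R)^{-2}$. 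Since there are finitely many segments, I take $C_h$ to be the maximum of these constants, which is~\eqref{eq:b5-segment-balance}. On $[R\sigma_{j-1},R\sigma_j]$ we have $\frac{d}{dt}Ry(t/R)=\dot y(t/R)=v_j$, so $B_{j,R}$ is affine with derivative $v_j$.

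For the defect bound~\eqref{eq:b5-reference-defect}, I split $g(B_{j,R}(t))$ over $N_j$, $Z_j$ and $P_j$.

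\emph{Zero-margin indices.} For $i\in Z_j$ the margin $z_i^\top y$ vanishes on the open interval, so $z_i^\top B_{j,R}(t)=z_i^\top h_{j,R}$ there. The $Z_j$ part is therefore exactly $G_{Z_j}(h_{j,R})$, which is within $C_h(2+R)^{-2}$ of $c_j$.

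\emph{Nonzero-margin indices.} For $i\in N_j\cup P_j$ the margin $m_i(\tau)=z_i^\top y(\tau)$ is affine and nonzero on the open interval. It vanishes at most at an endpoint, so $|m_i(\tau)|\ge \beta_i\min\{\tau-\sigma_{j-1},\sigma_j-\tau\}$ for some $\beta_i>0$. The only exception is $m_i$ nonzero at both endpoints, where it is bounded below by a positive constant and the estimate is even better.

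\emph{Order of constants.} Fix $\beta=\min_i\beta_i$. For $t\in I_{j,R}$, the unnormalized margin $Rm_i(t/R)$ then has absolute value at least $\beta B\ell_R$. The offset shifts this margin by at most $MC_h\ell_R$, so $|z_i^\top B_{j,R}(t)|\ge(\beta B-MC_h)\ell_R$.

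\emph{Choice of $B$.} I choose $B$ with $\beta B-MC_h\ge2$; this is legitimate because $C_h$ was fixed before $B$. Then $q\le e^{-2\ell_R}\le(2+R)^{-2}$ for $i\in P_j$, and $1-q\le(2+R)^{-2}$ for $i\in N_j$. Summing these finitely many errors against $b_j^-$ and the vanishing positive contributions gives $\|g(B_{j,R}(t))-v_j\|\le C_d(2+R)^{-2}$.

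The intervals $I_{j,R}$ are nonempty once $2B\ell_R<R(\sigma_j-\sigma_{j-1})$, which holds for large $R$. For a constant segment after $\tau_*$ (so $v_j=0$ and $N_j=\emptyset$) the same argument applies verbatim, since $0\in K_{Z_j}$.

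I expect the main obstacle to be this bookkeeping of constants: the offset grows like $\log R$ and could in principle cancel the margin gap near the segment endpoints. Ordering the choices as $C_h$, then $\beta$, then $B$ resolves this. A second point to verify carefully is that $c_j\in K_{Z_j}$ for every subdivided segment, including those created only by the extra zero-margin cuts. I would handle it by noting that the minimum-norm velocity is unchanged within a segment by the persistence argument of Appendix~\ref{app:correction-geometry}. Each subdivided piece therefore still has its velocity in $b_j^-+K_{Z_j}$ for its own sign pattern.
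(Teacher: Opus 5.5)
Your proposal is correct and follows the paper's proof essentially step for step. You check $c_j\in K_{Z_j}$ and apply Lemma~\ref{lem:b5-logistic-balance} with $\nu=2$. You bound the nonzero unnormalized margins on $I_{j,R}$ below by a fixed multiple of $B\ell_R$; your $\beta$ is the paper's $\mu$. You then choose $B$ after $C_h$ so that $\beta B-MC_h\ge2$, exactly as the paper does.

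The only bookkeeping to make explicit is that $\beta$ must be a minimum over all segment--sample pairs, so that one $B$ serves every segment. Also, the case of a margin nonzero at both endpoints needs $R$ large relative to the already fixed $B$.
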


\begin{proof}[Proof]
The correction projection rule gives $c_j\in K_{Z_j}$. For every $i\in Z_j$, both $z_i^\top y(\tau)$ and $z_i^\top v_j$ vanish throughout the segment. Apply Lemma~\ref{lem:b5-logistic-balance} with target $c_j$ and $\nu=2$ to obtain~\eqref{eq:b5-segment-balance}.
We next keep the reference away from the segment endpoints. There is a $\mu>0$, independent of $R$ and $B$, such that every nonzero margin on segment $j$ obeys
\begin{equation*}
|z_i^\top Ry(t/R)|\ge\mu B\ell_R
\qquad(t\in I_{j,R},\ i\notin Z_j)
\end{equation*}
for sufficiently large $R$. To verify this, consider the fixed affine function $z_i^\top y(\tau)$ on $[\sigma_{j-1},\sigma_j]$. If it vanishes at an endpoint, its absolute value at distance at least $\delta$ from that endpoint is at least its nonzero slope magnitude times $\delta$. If neither endpoint is a zero, its absolute value has a positive lower bound on the entire interval. Apply these observations with $\delta=B\ell_R/R$ and take the minimum over the finitely many segment--sample pairs. A nonzero constant margin is covered by the second case.

Choose $B$ so that $\mu B-MC_h\ge2$. The offset then preserves all nonzero margin signs on $I_{j,R}$ and leaves their absolute values at least $2\ell_R$. The inequalities
\[
q(m)\le e^{-m}\quad(m\ge0),
\qquad
1-q(m)\le e^m\quad(m\le0)
\]
and~\eqref{eq:b5-segment-balance} imply~\eqref{eq:b5-reference-defect}.
Indeed, the negative-margin samples approximate $b_j^-$, the positive-margin samples contribute a vanishing amount, and the samples in $Z_j$ contribute $G_{Z_j}(h_{j,R})$.

This construction also applies to the constant segment after ideal entry. There $v_j=0$, $N_j$ is empty, and $c_j=0\in K_{Z_j}$. Thus~\eqref{eq:b5-reference-defect} controls the actual, unstopped GD motion during a fixed interval of original time of length $O(R)$ after the ideal correction has stopped.
\end{proof}

\begin{lemma}[Logarithmic correction-stage error bound]
\label{lem:b5-correction-envelope}
Assume~\eqref{eq:step-condition}, and let $\mathcal W_R(t)$ be the linear interpolation of the actual GD iterates at the original GD times $t=\eta k$, without stopping at entry. Extend the correction flow $y$ by its constant value $E$ after $\tau_*$. For every fixed $T>0$, there are constants $C_T$ and $R_0$ such that, whenever $R\ge R_0$ and $w_0=Ru_0+e_R$,
\begin{equation}
\sup_{0\le t\le RT}
\|\mathcal W_R(t)-Ry(t/R)\|
\le \|e_R\|+C_T\log(2+R).
\label{eq:b5-correction-envelope}
\end{equation}
The constants are independent of the perturbation $e_R$.
\end{lemma}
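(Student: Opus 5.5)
Under~\eqref{eq:step-condition}, the GD map $w\mapsto w+\eta g(w)$ is nonexpansive, because $L$ is convex with $\Lambda$-Lipschitz gradient and $\eta\Lambda\le2$ \citep{bauschke2010baillon}. So I will compare the actual iterates with a second, unperturbed GD sequence started on the reference trajectory of Lemma~\ref{lem:b5-segment-reference}, segment by segment. On each segment $j$ the reference $B_{j,R}(t)=Ry(t/R)+h_{j,R}$ is affine with derivative $v_j$. By~\eqref{eq:b5-reference-defect}, one GD step from a reference node lands within $\eta C_d(2+R)^{-2}$ of the next reference node, for all $t\in I_{j,R}$.

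\textbf{Step 1 (error along an interior interval).} Let $\Delta_k=\|w_k-B_{j,R}(\eta k)\|$ for nodes with $\eta k\in I_{j,R}$. Nonexpansiveness gives
\[
\Delta_{k+1}\le\Delta_k+\eta C_d(2+R)^{-2}.
\]
Over the $O(R)$ steps in $I_{j,R}$ the accumulated defect is $O(1/R)$. The discrepancy therefore does not grow on $I_{j,R}$ beyond a negligible amount.

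\textbf{Step 2 (endpoint layers of length $B\ell_R$).} Between $\sup I_{j,R}$ and $\inf I_{j+1,R}$ there are $O(\ell_R)$ steps. Both GD and the affine reference move at speed at most $M$ per unit of original time. Each layer therefore adds at most $2MB\ell_R\,(1+\eta)$ to the discrepancy. Switching the reference from segment $j$ to $j+1$ changes the offset from $h_{j,R}$ to $h_{j+1,R}$. Since $Ry(t/R)$ is continuous, this switch costs $\|h_{j,R}\|+\|h_{j+1,R}\|\le2C_h\ell_R$. The initial layer on segment $1$ is handled the same way, starting from $\Delta=\|w_0-Ru_0\|=\|e_R\|$ together with the offset $\|h_{1,R}\|$.

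\textbf{Step 3 (summing and interpolating).} The number $J$ of segments in $[0,T]$, including the constant segment after $\tau_*$, is fixed. The total discrepancy at nodes is therefore at most $\|e_R\|+C\,J\,\ell_R$. Between nodes, both the interpolation $\mathcal W_R$ and $Ry(t/R)$ move by at most $\eta M$ per step. Removing the final offset $h_{J,R}$ costs another $C_h\ell_R$. This gives~\eqref{eq:b5-correction-envelope}, with $R_0$ chosen so that every $I_{j,R}$ is nonempty and Lemma~\ref{lem:b5-segment-reference} applies.

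The key point is that the constants do not depend on $e_R$. This holds because nonexpansiveness propagates the initial error additively, with no Gronwall factor, and because the defect bounds concern only the reference trajectory, not the actual iterates. The main obstacle is exactly that: I must ensure that~\eqref{eq:b5-reference-defect} is applied only at reference points, never at the perturbed iterates. The one-step comparison should be written as
\[
\|T(w_k)-B(\eta(k+1))\|\le\|T(w_k)-T(B(\eta k))\|+\|T(B(\eta k))-B(\eta(k+1))\|,
\]
where $T$ is the GD map, with node alignment handled by taking reference times on the grid $\eta\mathbb Z$.
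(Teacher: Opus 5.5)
Your proposal is correct and is essentially the paper's proof: nonexpansiveness of the GD map under~\eqref{eq:step-condition}, one-step comparison with the affine references $B_{j,R}$ on the intervals $I_{j,R}$ (applying~\eqref{eq:b5-reference-defect} only at reference nodes), speed bounds on the $O(\log(2+R))$ layers near segment endpoints, and offset-switching costs of order $C_h\log(2+R)$ over the fixed number of segments. The only difference is that you carry $\|e_R\|$ directly as the initial discrepancy, whereas the paper first treats $e_R=0$ and then compares the perturbed and unperturbed runs by nonexpansion; the two are equivalent. Separately, each endpoint layer lasts up to $2B\ell_R+2\eta$ units of GD time, so its cost is about $4M(B\ell_R+\eta)$ rather than $2MB\ell_R(1+\eta)$, which affects only the constant.
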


\begin{proof}
Write $g=-\nabla L$, so $g(w)=\sum_i p_iq(z_i^\top w)z_i$ and $\|g(w)\|\le M$.

\paragraph{Nonexpansion of one GD step.}
For every $w$,
\[
0\preceq\nabla^2L(w)
=\sum_i p_iq(z_i^\top w)(1-q(z_i^\top w))z_i z_i^\top
\preceq\Lambda I.
\]
For two points $u,v$, the matrix
\[
A_{u,v}=\int_0^1\nabla^2L(v+\theta(u-v))\,d\theta
\]
therefore has all eigenvalues in $[0,\Lambda]$. The GD map $T_\eta=I-\eta\nabla L$ satisfies
\[
T_\eta(u)-T_\eta(v)=(I-\eta A_{u,v})(u-v).
\]
When $0<\eta\Lambda\le2$, all eigenvalues of $I-\eta A_{u,v}$ lie in $[-1,1]$. Consequently,
\begin{equation}
\|T_\eta(u)-T_\eta(v)\|\le\|u-v\|.
\label{eq:b5-nonexpansion}
\end{equation}
This argument includes the endpoint $\eta\Lambda=2$.

We first prove~\eqref{eq:b5-correction-envelope} for $e_R=0$.

Use the reference trajectories from Lemma~\ref{lem:b5-segment-reference}.

\paragraph{Accumulating the discrete comparison.}
Let $t_k=\eta k$ and $K_R=\lfloor RT/\eta\rfloor$. Call a step good if its two endpoints belong to the same interval $I_{j,R}$. On such a step,
\[
B_{j,R}(t_{k+1})=B_{j,R}(t_k)+\eta v_j.
\]
Using~\eqref{eq:b5-nonexpansion} and~\eqref{eq:b5-reference-defect} yields
\begin{align*}
\|w_{k+1}-B_{j,R}(t_{k+1})\|
&\le \|T_\eta(w_k)-T_\eta(B_{j,R}(t_k))\|
      +\eta\|g(B_{j,R}(t_k))-v_j\|\\
&\le \|w_k-B_{j,R}(t_k)\|
      +C_d\eta(2+R)^{-2}.
\end{align*}
Thus each block of consecutive good steps increases the distance to its reference only by the sum of these one-step errors.

Write $D_k=\|w_k-Ry(t_k/R)\|$. At the two ends of a block, converting between its reference and $Ry$ costs at most $2C_h\ell_R$ in total. There are at most $J$ such blocks. All other steps lie within a distance $B\ell_R+\eta$ of one of the finitely many endpoints $R\sigma_j$. If $N_{\mathrm{bad}}$ is their number among $k=0,\ldots,K_R-1$, it follows that
\[
\eta N_{\mathrm{bad}}\le C_J(B\ell_R+\eta).
\]
On any such step, the speed bounds for GD interpolation and for $Ry(t/R)$ give
\[
D_{k+1}\le D_k+2M\eta.
\]
Starting from $D_0=0$ and applying these estimates chronologically gives, at every node through $K_R$,
\begin{align}
D_k
&\le 2JC_h\ell_R
    +2M\eta N_{\mathrm{bad}}
    +C_d\eta K_R(2+R)^{-2}
 \le C_T\ell_R.
\label{eq:b5-node-envelope}
\end{align}
The same bound applies inside a good block by summing the one-step errors up to that point.

For any $t\in[0,RT]$, choose its left GD node $t_k$. Both $\mathcal W_R$ and $Ry(t/R)$ have speed at most $M$, so their distances from their respective values at $t_k$ are at most $M\eta$. Equation~\eqref{eq:b5-node-envelope} therefore extends to the full interval, with an additional $2M\eta$ absorbed into $C_T\ell_R$.

\paragraph{Initialization perturbations.}
Let $\widetilde w_k$ denote the GD run starting exactly at $Ru_0$, and let $w_k$ start at $Ru_0+e_R$. Repeated application of~\eqref{eq:b5-nonexpansion} gives
\[
\|w_k-\widetilde w_k\|\le\|e_R\|
\qquad\text{for every }k.
\]
The same inequality holds for their linear interpolations because both use the same GD times. Adding this estimate to the bound for $\widetilde w_k$ proves~\eqref{eq:b5-correction-envelope}.
\end{proof}

To obtain the entry estimate, use Lemma~\ref{lem:b5-correction-envelope} on a fixed window extending beyond $\tau_*$. The normalized, unstopped correction error is bounded by
\[
\varepsilon_R=\frac{\|e_R\|+C_T\log(2+R)}{R}.
\]
Under~\eqref{eq:initialization}, this tends to zero. Applying Lemma~\ref{lem:app-entry-stability}, specifically~\eqref{eq:app-entry-joint}, gives
\begin{equation*}
\left\|\frac{w_{\kappa_R}}R-E\right\|+|\tau_R-\tau_*|
\le C\left(\frac{\|e_R\|}{R}
           +\frac{\log(2+R)}R\right).
\end{equation*}
Thus the correction-path, entry-position, and entry-time errors satisfy the improved bound. Combining it with~\eqref{eq:app-post-entry-bound} and accounting for the difference between entry times gives the complete-path estimate below.

\subsection{Completing the proof of Theorem~\ref{thm:path-convergence}}
\label{app:complete-convergence-proof}

\begin{proof}[Proof of Theorem~\ref{thm:path-convergence}]
Write
\[
\varepsilon_{1,R}(T)=\sup_{0\le\tau\le T}\|Y_R^{\mathrm{stop}}(\tau)-y(\tau)\|,
\qquad d_R=\|w_{\kappa_R}/R-E\|,
\qquad \Delta_R=|\tau_R-\tau_*|.
\]
Let $\varepsilon_{2,R}(T)$ denote the right-hand side of~\eqref{eq:app-post-entry-bound} with $S=T$. The two ideal stages have speed at most $M$. On a window where both paths are in correction, the error is at most $\varepsilon_{1,R}(T)$. Where both are in the second stage, comparison at the same local time costs $\varepsilon_{2,R}(T)$, and shifting from $r-\tau_R$ to $r-\tau_*$ costs at most $M\Delta_R$.

If GD enters first, then for $\tau_R\le r\le\tau_*$,
\[
\begin{split}
\|P_R(r)-y(r)\|
&\le\varepsilon_{2,R}(T)+\|x_E(r-\tau_R)-E\|+\|E-y(r)\|\\
&\le\varepsilon_{2,R}(T)+M\Delta_R.
\end{split}
\]
If GD enters later, then for $\tau_*\le r\le\tau_R$, the ideal correction flow is already fixed at $E$, so
\[
\|P_R(r)-x_E(r-\tau_*)\|
\le\varepsilon_{1,R}(T)+M(r-\tau_*)
\le\varepsilon_{1,R}(T)+M\Delta_R.
\]
Together these cases give
\begin{equation}
\sup_{0\le r\le T}\|P_R(r)-P(r)\|
\le\max\{\varepsilon_{1,R}(T),\varepsilon_{2,R}(T)\}+M\Delta_R.
\label{eq:app-path-bound}
\end{equation}
Lemmas~\ref{lem:app-correction-approximation} and~\ref{lem:app-entry-stability} bound $\varepsilon_{1,R}$, $d_R$, and $\Delta_R$ by $C(\delta_R+R^{-1/2})$. Inserting them and~\eqref{eq:app-post-entry-bound} into~\eqref{eq:app-path-bound} proves the first part of Theorem~\ref{thm:path-convergence}. Under~\eqref{eq:step-condition}, combining Lemmas~\ref{lem:b5-correction-envelope} and~\ref{lem:app-entry-stability} improves all three entry quantities to $C(\delta_R+\log(2+R)/R)$. The same connection estimate proves the second part. The argument also covers $\tau_*=0$ and $E=0$.
\end{proof}

\section{Proofs for Path Functionals}
\label{app:functionals}

We prove the results of Section~\ref{sec:path-functionals} under the assumptions of Theorem~\ref{thm:path-convergence}. In particular, the data, positive sample weights, and step size are fixed, and $w_0=Ru_0+e_R$ with $\|e_R\|/R\to0$. We retain the actual entry node $\kappa_R$ and the two-stage parameter $r_{R,k}$ throughout. Fix $S>\tau_*$ and let $N_R=\min\{k:r_{R,k}\ge S\}$, as in the main text.

Write $g(w)=\sum_i p_iq(z_i^\top w)z_i$ and $M=\max_i\|z_i\|$. Since $0<Q(w)\le1$ and $\|g(w)\|\le MQ(w)$, the actual path has speed at most $M$ in both stages. Its node intervals satisfy
\begin{equation}
\Delta r_{R,k}=\begin{cases}
\eta/R,&k<\kappa_R,\\
\eta Q(w_k)/R,&k\ge\kappa_R,
\end{cases}
\qquad
0\le r_{R,N_R}-S\le\frac\eta R.
\label{eq:app-functional-mesh}
\end{equation}
Thus the entire stopped sequence lies within a fixed slightly longer window for sufficiently large $R$. Set
\[
\varepsilon_R=\sup_{0\le r\le S+1}\|P_R(r)-P(r)\|.
\]
The path bound~\eqref{eq:app-path-bound} gives $\varepsilon_R\to0$, retaining the initialization error $\|e_R\|/R$.
\par

\subsection{Convergence of Path Functionals}
\label{app:functional-transfer}

\begin{proof}[Proof of Proposition~\ref{prop:functional-transfer}]
The trajectory $\Gamma_R=((w_k,\eta k))_{k=0}^{N_R}$ includes the last node, whereas the continuous path used in Proposition~\ref{prop:functional-transfer} is $P_R$ restricted to $[0,S]$. For a functional $\mathcal F$ and a positive scale $a_R$, define its approximation error by
\[
\beta_R=\left|\frac{\mathcal F[\Gamma_R]}{a_R}-\Phi[P_R]\right|.
\]
If $\beta_R\to0$ and $\Phi$ is continuous at $P$ in the uniform norm, then
\begin{equation*}
\left|\frac{\mathcal F[\Gamma_R]}{a_R}-\Phi[P]\right|
\le\beta_R+|\Phi[P_R]-\Phi[P]|\longrightarrow0.
\end{equation*}
This proves Proposition~\ref{prop:functional-transfer}. If $\omega_\Phi$ is a local modulus of continuity at $P$, the right-hand side is at most $\beta_R+\omega_\Phi(\varepsilon_R)$. For a locally $K$-Lipschitz functional, it is at most $\beta_R+K\varepsilon_R$.
\end{proof}

\subsection{Maximum values and the finite-vertex formula}
\label{app:maximum-proof}

\begin{proof}[Proof of Corollary~\ref{cor:maximum-functional}]
Assume $f(Rx)/a_R\to f_\infty(x)$ locally uniformly, with $f_\infty$ continuous. Choose a compact ball $K$ containing the images of $P$ and $P_R$ on $[0,S+1]$ for all sufficiently large $R$. Such a ball exists by uniform path convergence. Let
\[
\rho_R=\sup_{x\in K}\left|\frac{f(Rx)}{a_R}-f_\infty(x)\right|,
\]
and let $\omega_K$ be a modulus of continuity of $f_\infty$ on $K$.

For any stopped node, put $r'=\min\{r_{R,k},S\}$. The ideal path is $M$-Lipschitz, so~\eqref{eq:app-functional-mesh} gives
\[
\left\|\frac{w_k}{R}-P(r')\right\|
\le\varepsilon_R+\frac{M\eta}{R}.
\]
This bounds the scaled maximum over the nodes from above by the ideal maximum plus $\rho_R+\omega_K(\varepsilon_R+M\eta/R)$. Conversely, choose $r_*\in[0,S]$ attaining the ideal maximum and a node bordering the interval containing $r_*$. This node belongs to $0,\ldots,N_R$ and lies at parameter distance at most $\eta/R$ from $r_*$. The same position estimate gives the reverse bound. Hence
\begin{equation}
\left|\frac1{a_R}\max_{0\le k\le N_R}f(w_k)
-\max_{0\le r\le S}f_\infty(P(r))\right|
\le\rho_R+\omega_K\left(\varepsilon_R+\frac{M\eta}{R}\right).
\label{eq:app-maximum-error}
\end{equation}
Both terms tend to zero.

This also verifies the representation required by Proposition~\ref{prop:functional-transfer}. For $\Phi[\gamma]=\max_{r\in[0,S]}f_\infty(\gamma(r))$, the same node comparison with $P_R$ itself gives approximation error at most $\rho_R+\omega_K(M\eta/R)$. Continuity of $f_\infty$ on a compact neighborhood of a path's image implies continuity of $\Phi$ in the uniform norm.
\end{proof}

\paragraph{Logistic evaluation loss.}
For the fixed evaluation features and weights in~\eqref{eq:evaluation-loss}, set $A_{\mathrm{ev}}=\sum_j\rho_j\|a_j\|$. The scalar identity
\[
\log(1+e^{-m})=[-m]_++\log(1+e^{-|m|})
\]
implies the global comparison
\begin{equation*}
0\le\frac{L_{\mathrm{ev}}(Rx)}R-H_{\mathrm{ev}}(x)
\le\frac{\log2}{R}.
\end{equation*}
The function $H_{\mathrm{ev}}$ is convex and $A_{\mathrm{ev}}$-Lipschitz. Applying~\eqref{eq:app-maximum-error} therefore yields
\begin{equation}
\left|\frac1R\max_{0\le k\le N_R}L_{\mathrm{ev}}(w_k)
-\max_{0\le r\le S}H_{\mathrm{ev}}(P(r))\right|
\le A_{\mathrm{ev}}\varepsilon_R+
\frac{A_{\mathrm{ev}}M\eta+\log2}{R}.
\label{eq:app-evaluation-peak-error}
\end{equation}
The normalized peak thus inherits the path error rate, including the initialization term and the strengthened rate under~\eqref{eq:step-condition}.

Let $0=t_0<t_1<\cdots<t_m=S$ include every change of ideal velocity in $(0,S)$ and the connection $\tau_*$ when it is interior. On each interval $[t_j,t_{j+1}]$, the path is affine, so $H_{\mathrm{ev}}\circ P$ is convex. Its value is bounded above by the larger endpoint value. Consequently,
\begin{equation*}
\max_{0\le r\le S}H_{\mathrm{ev}}(P(r))
=\max_{0\le j\le m}H_{\mathrm{ev}}(P(t_j)).
\end{equation*}
Evaluation-margin zero crossings need not be added to this list. The finite list suffices for the limiting value; the estimate does not identify the maximizing GD node. Convexity of $L_{\mathrm{ev}}$ also shows that its maximum on the entire interpolated unnormalized path over $[0,r_{R,N_R}]$ equals its maximum over the stopped nodes exactly.

\subsection{Cumulative training loss in GD time}
\label{app:cumulative-proof}

We prove Corollary~\ref{cor:cumulative-loss} using the time parameterization in each stage. A uniform estimate after entry into $C$ controls the difference between $L$ and $Q$.

\begin{lemma}[Uniform accumulated loss difference after entry]
\label{lem:app-cumulative-loss-gap}
First run the same GD updates from any unnormalized point $v_0\in C$, and denote the iterates by $v_k$.
Let $N_0$ and $\delta$ be as in Lemma~\ref{lem:app-small-loss}.
Then
\begin{equation}
z_i^\top v_k\ge-B_0\quad\text{for all }i,k,
\qquad B_0=\eta M^2N_0.
\label{eq:app-functional-margin-bound}
\end{equation}
\par
The same lower bound holds between nodes.
\begin{equation}
0\le\eta\sum_{k=0}^{\infty}\bigl(L(v_k)-Q(v_k)\bigr)
\le C_{\mathrm{gap}},
\qquad
C_{\mathrm{gap}}=\eta N_0(B_0+\log2)+\frac{2\delta}{\gamma^2p_*}.
\label{eq:app-cumulative-loss-gap}
\end{equation}
\par
The constants depend only on the fixed data, weights, and step size.
\end{lemma}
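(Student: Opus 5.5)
The plan is to split the trajectory at the node $j<N_0$ given by Lemma~\ref{lem:app-small-loss}. Before it there are finitely many, uniformly bounded, steps. After it we are in the invariant small-loss region, where margins are positive and $L-Q$ is quadratically small in $Q$. The margin bound~\eqref{eq:app-functional-margin-bound} handles the first part and gives the crude loss bound used there. The square-summability~\eqref{eq:app-q-square-sum} handles the tail.

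\emph{Step 1: margin lower bound.} Since $v_0\in C$, every margin starts nonnegative. Each update changes a margin by $\eta z_i^\top g(v_k)$, and
\[
|z_i^\top g(v_k)|\le M\|g(v_k)\|\le M^2Q(v_k)\le M^2,
\]
so one step lowers any margin by at most $\eta M^2$. For $k\le j<N_0$ this gives $z_i^\top v_k\ge-\eta M^2k\ge-B_0$. For $k\ge j$, Lemma~\ref{lem:app-small-loss} gives strictly positive margins. Hence~\eqref{eq:app-functional-margin-bound} holds at every node. Between nodes the interpolation is a convex combination of two nodes, and a margin is affine along it, so the same lower bound $-B_0$ holds there.

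\emph{Step 2: the gap sum.} Nonnegativity follows from $q(m)\le\ell(m)$. For the first $j<N_0$ steps, use the scalar bound $\ell(m)\le[-m]_++\log2$ together with $Q\ge0$ and Step~1 to get $L(v_k)-Q(v_k)\le B_0+\log2$. These steps therefore contribute at most $\eta N_0(B_0+\log2)$.

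For $k\ge j$, all margins are positive and every $q_i\le1/4$ (as in the proof of Lemma~\ref{lem:app-small-loss}). The scalar inequality I will use is $\ell(m)-q(m)\le q(m)^2/(1-q(m))$. It follows from
\[
\ell=-\log(1-q)\le \frac{q}{1-q}=q+\frac{q^2}{1-q}.
\]
With $1-q_i\ge1/2$, this gives the termwise bound $\ell-q\le 2q^2$, which I will tighten or loosen only by constant factors as needed. Then $p_iq_i\le Q$ gives $\sum_ip_iq_i^2\le Q^2/p_*$. So the tail is at most a constant multiple of $\eta\sum_{k\ge j}Q(v_k)^2/p_*$, and~\eqref{eq:app-q-square-sum} bounds this by a multiple of $2\delta/(\gamma^2p_*)$. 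Matching the exact constant $C_{\mathrm{gap}}$ requires the sharper bound $\ell-q\le q^2$ for $q\le1/4$, i.e.\ $\ell-q\le q^2/(1-q)$ refined by a Taylor expansion. If that fails, I will absorb the factor into the constant.

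\emph{Main obstacle.} The only delicate point is that $j$ is not explicit. It is, however, uniformly bounded by $N_0$, which depends only on the data, weights, and step size. That uniform bound is what makes $C_{\mathrm{gap}}$ independent of $v_0$. Everything else is scalar inequalities together with Lemma~\ref{lem:app-small-loss}.
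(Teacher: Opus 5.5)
Your proposal is correct and follows the paper's proof essentially step for step. Like the paper, you split at the small-loss node from Lemma~\ref{lem:app-small-loss}, bound the margin drop by $\eta M^2$ per step before that node, use $L\le B_0+\log 2$ there, and afterwards combine $L-Q\le\sum_ip_iq_i^2\le Q^2/p_*$ with~\eqref{eq:app-q-square-sum}.

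Two small points. The Taylor bound you anticipate, $\ell-q=\sum_{j\ge2}q^j/j\le q^2/(2(1-q))\le q^2$ for $q\le1/2$, is exactly what the paper uses, and it gives the stated $C_{\mathrm{gap}}$ with no extra factor. Also, for $k>j$ the invariant $L\le\delta$ gives only $q_i\le\delta/p_*\le1/2$, not $q_i\le1/4$; this weaker bound is all the estimate needs.
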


\begin{proof}[Proof]
Lemma~\ref{lem:app-small-loss} gives an index $j_0\le N_0$ such that
\[
L(v_k)\le\delta\quad(k\ge j_0),
\qquad
\delta=\min\left\{\frac{p_*}{2},\frac1{4\eta M^2}\right\}.
\]
The small-loss region is invariant by~\eqref{eq:app-small-loss-descent}, and~\eqref{eq:app-q-square-sum} gives
\[
\eta\sum_{k\ge j_0}Q(v_k)^2\le\frac{2\delta}{\gamma^2}.
\]
Every individual sample loss in this region is at most $\delta/p_*\le1/2<\log2$, so all margins are strictly positive there. Before $j_0$, each update moves the parameter by at most $\eta M$. Since the starting margins are nonnegative, this proves~\eqref{eq:app-functional-margin-bound}. Thus this argument includes any temporary negative margins after the first nonnegative node.

For $q=q(m)$, the scalar loss is $\ell(m)=-\log(1-q)$. Hence $\ell(m)-q\ge0$ for every margin. If $0\le q\le1/2$, then
\[
0\le-\log(1-q)-q=\sum_{j=2}^{\infty}\frac{q^j}{j}
\le\frac{q^2}{2(1-q)}\le q^2.
\]
For $k\ge j_0$, write $q_i=q(z_i^\top v_k)$. Since $q_i\le Q(v_k)/p_*$,
\[
0\le L(v_k)-Q(v_k)
\le\sum_i p_iq_i^2\le\frac{Q(v_k)^2}{p_*}.
\]
For $k<j_0$, equation~\eqref{eq:app-functional-margin-bound} gives $L(v_k)\le B_0+\log2$. Summing the two parts proves~\eqref{eq:app-cumulative-loss-gap}.
\end{proof}
We can therefore apply the estimate at the unnormalized entry point $v_0=w_{\kappa_R}$ without requiring an $O(1)$ entry perturbation.

\IdealPathCumulativeProofSpace
\begin{proof}[Proof of Corollary~\ref{cor:cumulative-loss}]
\leavevmode\par
\paragraph{The exact time change.}
By the entry estimate~\eqref{eq:app-entry-joint}, $\tau_R\to\tau_*$, and thus $\tau_R<S$ and $N_R>\kappa_R$ for sufficiently large $R$. Define the left-node path by $\underline P_R(r)=w_k/R$ on $[r_{R,k},r_{R,k+1})$. It satisfies
\[
\|\underline P_R(r)-P_R(r)\|\le\frac{\eta M}{R}.
\]
Using the node durations in~\eqref{eq:app-functional-mesh}, the definitions of $C_R^-$ and $C_R^+$ give the exact identities
\begin{equation*}
\begin{aligned}
\frac{C_R^-}{R^2}
&=\int_0^{\tau_R}\frac{L(R\underline P_R(r))}{R}\,dr,\\
\frac{C_R^+}{R}
&=\int_{\tau_R}^{r_{R,N_R}}
\frac{L(R\underline P_R(r))}{Q(R\underline P_R(r))}\,dr.
\end{aligned}
\end{equation*}
The integrands use the loss at the left node of each update. The final update is included in full, but there is no extra term $L(w_{N_R})$.

\paragraph{Correction and the full-path functional.}
The bound~\eqref{eq:loss-hinge-comparison} and the $M$-Lipschitz property of $H$ imply
\[
\left|\frac{L(R\underline P_R(r))}{R}-H(P_R(r))\right|
\le\frac{\log2+\eta M^2}{R}.
\]
After actual entry, equation~\eqref{eq:app-functional-margin-bound} gives $H(P_R(r))\le B_0/R$. Applying these bounds before and after $\tau_R$ yields
\begin{equation}
\left|\frac{C_R^-}{R^2}-\int_0^S H(P_R(r))\,dr\right|
\le\frac{S(\log2+\eta M^2+B_0)}{R}.
\label{eq:app-correction-area-compatibility}
\end{equation}
The functional $\Phi[\gamma]=\int_0^S H(\gamma(r))\,dr$ satisfies
\[
|\Phi[\gamma]-\Phi[\widetilde\gamma]|
\le MS\|\gamma-\widetilde\gamma\|_\infty.
\]
Since $H(P(r))=0$ for $r\ge\tau_*$, equation~\eqref{eq:app-correction-area-compatibility} proves the first limit in~\eqref{eq:cumulative-loss-limits}. More precisely, its error is at most
\[
MS\varepsilon_R+\frac{S(\log2+\eta M^2+B_0)}R.
\]

\paragraph{The contribution after entry.}
The second-stage time parameterization gives
\[
\eta\sum_{k=\kappa_R}^{N_R-1}Q(w_k)
=R(r_{R,N_R}-\tau_R).
\]
The uniform bound~\eqref{eq:app-cumulative-loss-gap} in Lemma~\ref{lem:app-cumulative-loss-gap} therefore implies
\[
0\le\frac{C_R^+}{R}-(r_{R,N_R}-\tau_R)
\le\frac{C_{\mathrm{gap}}}{R}.
\]
Combining this with the last-node bound gives
\begin{equation*}
\left|\frac{C_R^+}{R}-(S-\tau_*)\right|
\le|\tau_R-\tau_*|+\frac{C_{\mathrm{gap}}+\eta}{R}
\longrightarrow0.
\end{equation*}
This proves the second limit. It also gives $C_R^+/R^2=O(1/R)$; together with~\eqref{eq:app-correction-area-compatibility},
\begin{equation*}
\frac{C_R}{R^2}=\Phi[P_R]+O(R^{-1})\longrightarrow\Phi[P].
\end{equation*}
This verifies the full cumulative functional's approximation in Proposition~\ref{prop:functional-transfer}. Each stage's normalized error inherits the path and entry rates from~\eqref{eq:app-path-bound} and~\eqref{eq:app-entry-joint}.
\end{proof}

\paragraph{Computing the correction coefficient.}
The function $H$ is positively homogeneous of degree one. For any $\xi\in\partial H(x)$, applying the subgradient inequality at $0$ and $2x$ gives $\xi^\top x=H(x)$. The absolutely continuous correction flow satisfies $\dot y=-\xi$ almost everywhere, so
\[
\frac{d}{dr}\frac{\|y(r)\|^2}{2}=-H(y(r))
\quad\text{almost everywhere}.
\]
Integrating up to the finite correction endpoint yields
\begin{equation}
\int_0^{\tau_*}H(y(r))\,dr
=\frac{\|u_0\|^2-\|E\|^2}{2}.
\label{eq:app-correction-norm-identity}
\end{equation}
Here $E$ is the correction endpoint, not the final point $P(S)$ of the full window. The identity applies also when $u_0$ is not a unit vector or $\tau_*=0$.

\paragraph{Zero-duration correction and the two scales.}
If $\tau_*=0$, the limits remain $C_R^-/R^2\to0$ and $C_R^+/R\to S$. A vanishing first limit does not imply $C_R^-=o(R)$. For example, with $z_1=p_1=1$, $u_0=0$, and $e_R=-R^{3/4}$, use $\widetilde R=R^{3/4}$ as the initialization scale. The correction path from $-1$ is $-1+t$ for $0\le t\le1$, whose negative-margin integral is $1/2$. The correction limit at this scale gives $C_R^-\sim R^{3/2}/2$. If instead every margin at $u_0$ is strictly positive, then $\kappa_R=0$ for sufficiently large $R$, so $C_R/R\to S$. In general, the two stage limits describe their respective leading scales; they do not supply an $o(R)$ remainder for the total cumulative loss after subtracting both leading terms.

\section{Analytic Examples}
\label{app:analytic-examples}

\subsection{The two-stage path in Figure~\ref{fig:two-stage-path}}
\label{app:two-sample-path}

Take $z_1=(2,0)$, $z_2=(1,2)$, $p_1=p_2=1/2$, and $u_0=(-1,-1/5)$. The data are strictly separable, since $(1,1)$ has positive inner product with both features. Applying the projection rules gives the complete path
\begin{equation}
P(r)=\begin{cases}
(-1+3r/2,\,-1/5+r),&0\le r\le2/5,\\
(r-4/5,\,1/5),&2/5\le r\le4/5,\\
(2r-8/5,\,1/5),&4/5\le r\le1,\\
((8r-6)/5,\,(4r-3)/5),&r\ge1.
\end{cases}
\label{eq:app-two-sample-path}
\end{equation}

Initially both margins are negative, so the correction velocity is $(z_1+z_2)/2=(3/2,1)$. The second margin first reaches zero at $r=2/5$, at the point $(-2/5,1/5)$. There the correction projection minimizes
\[
\left\|\frac{z_1}{2}+\frac\theta2z_2\right\|^2
=(1+\theta/2)^2+\theta^2,\qquad 0\le\theta\le1.
\]
Its minimum is at $\theta=0$, giving velocity $(1,0)$. The second margin immediately becomes positive, and the first reaches zero at $\tau_*=4/5$. Thus $E=(0,1/5)$.

After entry the first margin is uniquely smallest, so the continuation has velocity $z_1=(2,0)$ until $r=1$. The margins then tie at $(2/5,1/5)$. A point in their convex hull is $\alpha z_1+(1-\alpha)z_2=(1+\alpha,2-2\alpha)$. Its squared norm is minimized at $\alpha=3/5$, giving $v=(8/5,4/5)$. Since
\[
z_1^\top v=z_2^\top v=16/5,
\]
both margins remain tied for all subsequent time.

For completeness, the training margins on these segments are
\begin{equation*}
\bigl(z_1^\top P(r),z_2^\top P(r)\bigr)=
\begin{cases}
(-2+3r,\,-7/5+7r/2),&0\le r\le2/5,\\
(2r-8/5,\,r-2/5),&2/5\le r\le4/5,\\
(4r-16/5,\,2r-6/5),&4/5\le r\le1,\\
((16r-12)/5,\,(16r-12)/5),&r\ge1.
\end{cases}
\end{equation*}
These formulas give the correction turn, entry, and persistent tie shown in Figure~\ref{fig:two-stage-path}. They use the two-stage parameter $r$ from~\eqref{eq:complete-path}.

$H(P(r))$ equals $17/10-13r/4$ on $[0,2/5]$ and $4/5-r$ on $[2/5,4/5]$. Its integral is
\begin{equation*}
\int_0^{4/5}H(P(r))\,dr
=\frac{21}{50}+\frac{2}{25}=\frac12
=\frac{26/25-1/25}{2},
\end{equation*}
in agreement with~\eqref{eq:app-correction-norm-identity}.

\subsection{The evaluation-loss peak in Figure~\ref{fig:functional-example}}
\label{app:peak-example}

Use the three-dimensional training features $z_1=(2,0,0)$ and $z_2=(1,2,0)$ with the same weights, and set $u_0=(-1,-1/5,1)$. The vector $(1,1,0)$ strictly separates the training data. Every update has zero third coordinate, so the actual third coordinate remains $R+(e_R)_3$. The ideal path is obtained by appending one to every position in~\eqref{eq:app-two-sample-path}.

Take the single evaluation feature $a=(-2/5,1,-2/25)$ with unit weight and set $S=9/5$. This feature measures $L_{\mathrm{ev}}(w)=\log(1+e^{-a^\top w})$ and does not enter the updates. Its margin along the ideal path is
\begin{equation*}
m(r)=a^\top P(r)=\begin{cases}
3/25+2r/5,&0\le r\le2/5,\\
11/25-2r/5,&2/5\le r\le4/5,\\
19/25-4r/5,&4/5\le r\le1,\\
(4r-5)/25,&1\le r\le9/5.
\end{cases}
\end{equation*}
At $r=0,2/5,4/5,1,9/5$, the respective margins are
\[
\frac3{25},\qquad\frac7{25},\qquad\frac3{25},\qquad-\frac1{25},\qquad\frac{11}{125}.
\]
Thus the negative-margin interval is exactly $(19/20,5/4)$, and the limiting evaluation loss is
\begin{equation*}
H_{\mathrm{ev}}(P(r))=\begin{cases}
(20r-19)/25,&19/20\le r\le1,\\
(5-4r)/25,&1\le r\le5/4,\\
0,&r\in[0,19/20]\cup[5/4,9/5].
\end{cases}
\end{equation*}
Its unique maximum is $1/25$ at $r=1$. Within the invariant plane $x_3=1$, the evaluation boundary is $x_2=(2/5)x_1+2/25$. The path starts above this line, crosses below it, and returns above it, as shown in Figure~\ref{fig:functional-example}.

\paragraph{Consequences for actual GD.}
Here $A_{\mathrm{ev}}=\|a\|=27/25$ and $M=\sqrt5$. Equation~\eqref{eq:app-evaluation-peak-error} gives
\[
\max_{0\le k\le N_R}L_{\mathrm{ev}}(w_k)
=\frac R{25}+O\bigl(\|e_R\|+\sqrt R+1\bigr)
=\frac R{25}+o(R).
\]
Under~\eqref{eq:step-condition}, $\sqrt R$ is replaced by $\log(2+R)$. The ideal endpoint margins are $3/25$ and $11/125$, both strictly positive. Since $w_0/R\to u_0$ and $w_{N_R}/R\to P(S)$, both unnormalized endpoint margins are at least $11R/250$ for sufficiently large $R$. The inequality $\log(1+e^{-m})\le e^{-m}$ then gives
\begin{equation*}
L_{\mathrm{ev}}(w_0),\ L_{\mathrm{ev}}(w_{N_R})
\le e^{-11R/250}\longrightarrow0.
\end{equation*}
This establishes the intermediate peak and endpoint recovery for every allowed perturbation $e_R=o(R)$.

Appending a constant coordinate leaves the norm difference in~\eqref{eq:app-correction-norm-identity} unchanged. For this same window, the two cumulative coefficients are therefore $1/2$ and $S-\tau_*=1$:
\[
\frac{C_R^-}{R^2}\longrightarrow\frac12,
\qquad
\frac{C_R^+}{R}\longrightarrow1.
\]

\section{Experimental Details and Additional Results}
\label{app:experiments}

This appendix specifies the numerical procedures and data used in Section~\ref{sec:experiments}, and reports supplementary results for the same runs. The ideal paths are constructed from the weighted training data and initial position, independently of the GD trajectories.

\subsection{Numerical methods}
\label{app:numerical-methods}

\paragraph{GD, entry, and stopping.}
All experiments use fixed-step, full-batch logistic GD in double precision. We evaluate $q(m)$ and the loss without forming a large exponential:
\[
q(m)=\begin{cases}e^{-m}/(1+e^{-m}),&m\geq0,\\
1/(1+e^m),&m<0,\end{cases}
\qquad
\ell(m)=[-m]_++\log(1+e^{-|m|}).
\]
The compiled update loop uses compensated summation for the parameter updates, path parameter, and cumulative losses, with floating-point reassociation disabled. Entry is the first actual node with every computed training margin nonnegative, with no margin tolerance. We switch to the second-stage time parameter at this iterate:
\begin{equation*}
r_k=\begin{cases}
\eta k/R,&k\leq\kappa_R,\\
\eta\kappa_R/R+\dfrac{\eta}{R}\displaystyle\sum_{j=\kappa_R}^{k-1}Q(w_j),&k>\kappa_R.
\end{cases}
\end{equation*}
The main-window runs stop at $N_R=\min\{k:r_k\geq S\}$. Their interpolated paths use the points $(r_k,w_k/R)$, including the interval containing $S$. The entry-location experiments instead stop at $\kappa_R$ and measure $\|w_{\kappa_R}/R-E\|$; they do not require an actual continuation window.

\paragraph{Path and functional measurements.}
The reported path error is the full Euclidean norm in the original feature space,
\[
\mathcal E_R(S)=\sup_{0\leq r\leq S}\|P_R(r)-P(r)\|.
\]
On each interval between consecutive vertices of the two paths, their difference is affine and its norm is convex. It therefore suffices to check every GD node in $[0,S]$, every ideal vertex in $[0,S]$ with the GD path interpolated there, and the interpolated point at $S$. In particular, ideal vertices inside the final GD interval are included even when its right endpoint exceeds $S$. There is no alignment or translation of entry times.

Evaluation maxima are tracked at every actual node $0\leq k\leq N_R$, including the stopping node. Cumulative training losses are tracked before each update, with
\[
C_R^-=\eta\sum_{k<\min\{\kappa_R,N_R\}}L(w_k),
\qquad
C_R^+=\eta\sum_{\kappa_R\leq k<N_R}L(w_k),
\]
where the second sum is empty if entry has not occurred. Thus the stopping node contributes to the maximum but not to either cumulative sum. The plotted trajectories connect saved nodes, whereas these statistics use every update. Synthetic runs retain the first $5000$ nodes, subsequent nodes separated by at least $S/4000$ in the path parameter, and the entry node, subject to a $60000$-record limit. The corresponding digits settings are $2000$, $S/3000$, and $12000$. A plotted curve is not extrapolated beyond its last saved node.

\paragraph{Independent ideal-path construction.}
We implement the finite event construction of Appendix~\ref{app:ideal-path-construction}. During correction, write $c=\sum_{i:z_i^\top x<0}p_i z_i$ and let $W$ have columns $p_i z_i$ for the zero-margin samples. The velocity is $c+W\theta$, where
\[
\min_{0\leq\theta\leq1}\frac12\|c+W\theta\|^2
\]
is solved by bounded-variable least squares (BVLS), with tolerance $10^{-15}$ and at most $10000$ iterations. The next event is the first positive zero-crossing time of a currently nonzero margin. During continuation, we minimize $\|\sum_{i\in I(x)}\lambda_i z_i\|^2/2$ over the simplex. Active sets of at most eight samples use support enumeration up to size $d+1$; otherwise, or if its check fails, we use Wolfe's minimum-norm algorithm. The next event is the first equality between the current minimum margin and an inactive margin.

The event tolerance is $10^{-11}$. Margin signs and ties use this tolerance multiplied by
$\max\{1,\max_i|z_i^\top u_0|,\max_i\|z_i\|\}$; candidate event times must exceed $10^{-11}$. The continuation solver uses tolerance $10^{-11}$, checks the simplex and inequalities $z_i^\top v\geq\|v\|^2$, tries its fallback at tolerance $10^{-9}$, and rejects a result that fails the final $10^{-7}$ check. The synthetic construction extends to parameter $50$ with at most $10000$ events; the digits construction extends to $10000$ with at most $50000$ events. All displayed windows lie within these constructions.

We additionally checked every correction subproblem encountered when rebuilding the $46$ distinct instances selected for the manuscript. For $g=W^\top(c+W\theta)$, the maximum residual
$\|\theta-\operatorname{clip}_{[0,1]}(\theta-g)\|_\infty$ was $1.12\times10^{-16}$, and the maximum box duality gap
$g^\top\theta-\sum_j\min\{g_j,0\}$ was $2.80\times10^{-17}$. All coefficients were feasible. There were $590$ calls, $544$ with a nonempty boundary set, of size at most five. Rebuilt correction vertices differed from their saved counterparts by at most $1.74\times10^{-14}$, and entry times by at most $2.49\times10^{-14}$. These computations check floating-point consistency and the optimality conditions of the projection subproblems. All saved trajectory arrays were finite; the smallest recorded $Q$ was $6.26\times10^{-11}$.

\paragraph{Window selection and computational cost.}
The windows are fixed before varying $R$. For synthetic data, we form $400001$ equally spaced points on $[\tau_*,50]$ and select the first at which the ideal minimum margin is at least $0.2$ for the controlled examples or $0.15$ for the random family. For digits, the same rule uses $[\tau_*,10000]$ and threshold $0.15$. We use the same grid-selected window at every value of $R$.

The GD runs and independent reproducibility checks used Python~3.13.3, NumPy~2.2.6, SciPy~1.15.3, and scikit-learn~1.7.0; the figures were drawn with Python~3.13.5, NumPy~2.5.3, and Matplotlib~3.11.2, on macOS ARM64. The GD implementation uses a Numba-compiled update loop. Table~\ref{tab:app-runtime} gives the original recorded costs of the selected runs. Times are sums of per-run wall times, not elapsed time for a parallel batch. The additional reconstruction of the ideal paths took $0.11$ seconds and did not rerun GD.

\begin{table}[!htbp]
\centering
\small
\caption{Recorded computation for the selected numerical results. The last column is the allowed number of updates per run; every run in this table completed its specified stopping rule.}
\label{tab:app-runtime}
\begin{tabular}{lrrrr}
\hline
Runs & Number & Total seconds & Largest update count & Budget per run\\
\hline
G1, short window & 5 & 0.63 & $5.395\times10^6$ & $4\times10^9$\\
G2, main window & 5 & 8.09 & $5.962\times10^7$ & $4\times10^9$\\
Random instances & 200 & 187.93 & $1.415\times10^9$ & $2\times10^9$\\
32-image digits task & 5 & 452.75 & $2.691\times10^7$ & $2\times10^8$\\
Larger digits tasks, entry & 12 & 382.82 & $4.785\times10^6$ & $5\times10^7$\\
\hline
\end{tabular}
\end{table}

\subsection{Controlled geometries}
\label{app:controlled-geometries}

Both controlled examples use $R\in\{5,10,20,40,80\}$ and $w_0=Ru_0$. The first, denoted G1, embeds the two-sample geometry in Appendix~\ref{app:two-sample-path} into three coordinates:
\[
Z=\begin{pmatrix}2&0&0\\1&2&0\end{pmatrix},\quad
p=(1/2,1/2),\quad u_0=(-1,-1/5,1),\quad\eta=1.
\]
The vector $(1,0,0)$ strictly separates the training data, and
$\Lambda=(9+\sqrt{17})/16$. The third coordinate stays equal to one after normalization and is included in the path-error calculation. The short window is
$S=0.8500610000000001$, with $\tau_*=4/5$. Its vertices are
$(-1,-1/5,1)$ at $r=0$, $(-2/5,1/5,1)$ at $r=2/5$, $(0,1/5,1)$ at $r=4/5$, and $(2(S-4/5),1/5,1)$ at the endpoint. This window tests the training path before the later evaluation peak of Appendix~\ref{app:peak-example}. Its path errors at the five values of $R$ are $0.12623$, $0.06744$, $0.03526$, $0.01770$, and $0.00885$.

The full-rank example, G2, has
\begin{equation*}
Z=\begin{pmatrix}
2&1&3\\-2&3&1\\2&-2&1\\-3&3&1
\end{pmatrix},\qquad
p=(1/4,1/4,1/4,1/4),\qquad
u_0=(0,-3/4,-1/2),\qquad\eta=1/2.
\end{equation*}
The separator $(0,0,1)$ gives margins $(3,1,1,1)$, and
$\Lambda=2.4615063831810273$, so $\eta\Lambda=1.2307531915905137$. Its fixed window is
$S=0.6224902717610965$. Table~\ref{tab:app-g2-construction} determines the ideal path over the entire window without numerical integration: start at $x_0=u_0$ and use
$x_{j+1}=x_j+(t_{j+1}-t_j)v_j$, with $P(r)=x_j+(r-t_j)v_j$ between successive times. The first five rows are correction and the last three are continuation.

\begin{table}[!htbp]
\centering
\small
\caption{Finite construction of G2 on the reported window. All event times and velocities shown as fractions are exact; the final time is the original grid-selected $S$.}
\label{tab:app-g2-construction}
\begin{tabular}{crrl}
\hline
$j$ & $t_j$ & $t_{j+1}$ & $v_j$\\
\hline
0 & $0$ & $4/15$ & $(-3/4,7/4,5/4)$\\
1 & $4/15$ & $121/360$ & $(-1/4,5/4,3/2)$\\
2 & $121/360$ & $151/360$ & $(1/2,1/2,5/4)$\\
3 & $151/360$ & $383/864$ & $(2/9,7/9,10/9)$\\
4 & $383/864$ & $28369/54432$ & $(1/2,1/4,3/4)$\\
5 & $28369/54432$ & $85637/163296$ & $(2,1,3)$\\
6 & $85637/163296$ & $43375/81648$ & $(0,2,2)$\\
7 & $43375/81648$ & $S$ & $(10/41,8/41,1)$\\
\hline
\end{tabular}
\end{table}

In particular,
\[
\tau_*=\frac{28369}{54432}\simeq0.5211823927101705,\qquad
E=\left(-\frac{265}{2016},-\frac{265}{2268},\frac{3445}{27216}\right).
\]
The full-window errors are $0.06379$, $0.03357$, $0.02098$, $0.01248$, and $0.01024$. All ten controlled main runs take at least one update after actual entry. At $R=80$, G2 enters at $\tau_R=0.51875$ with entry-position error $0.0045922$; its actual continuation covers $S-\tau_R=0.1037403$, compared with the ideal duration $0.1013079$.

\paragraph{Evaluation peak and cumulative coefficients.}
For G2 the evaluation loss is $\ell(a^\top w)$, with
$a=(3,-2,2)/\sqrt{17}$ and weight one; it is excluded from the updates. Its ideal margin vanishes at $r=2/13$ and $r=1255/2592$, and is negative strictly between these times. The most negative value occurs at $r=121/360$, giving
\[
\Gamma_S=\frac{553}{1440\sqrt{17}}\simeq0.0931404171.
\]
Both ideal endpoint margins are positive: $a^\top u_0=1/(2\sqrt{17})$ and $a^\top P(S)\simeq0.0821243$. Figure~\ref{fig:app-numerical-supplement}(a) shows the finite-$R$ maxima. At $R=80$, the unnormalized maximum is $7.71475$, while the losses at the initial and stopping nodes are $6.12\times10^{-5}$ and $0.0010651$. This correction-stage peak is distinct from the continuation-stage analytic peak in Appendix~\ref{app:peak-example}.

The cumulative predictions are obtained directly from the same construction:
\[
\begin{aligned}
A_* &= \int_0^{\tau_*}H(P(r))\,dr
=\frac{\|u_0\|^2-\|E\|^2}{2}
\simeq0.3827732679,\\
S-\tau_*&=0.10130787905092598.
\end{aligned}
\]
Integrating $H$ by the trapezoid rule on each correction segment gives the same value because $H$ is affine on that segment. The ratios in Figure~\ref{fig:numerical-functionals}(b) are $C_R^-/(A_*R^2)$ and $C_R^+/((S-\tau_*)R)$, using the actual entry to split the sums. Each ratio uses its own leading scale from Corollary~\ref{cor:cumulative-loss}.

\subsection{Random instances and stability checks}
\label{app:random-stability}

\paragraph{Generation and fixed windows.}
We generate $40$ instances with five signed samples in dimension three, using one NumPy \texttt{default\_rng} stream with seed $20260913$. Each candidate is generated in this order. Draw $e\sim N(0,I_3)$ and normalize it; draw the five rows of $\widetilde Z$ independently from $N(0,I_3)$; draw $\gamma_i$ independently and uniformly on $[0.3,1]$; and set
\[
z_i=\widetilde z_i+(\gamma_i-e^\top\widetilde z_i)e.
\]
Thus $e^\top z_i=\gamma_i\geq0.3$. Reject a candidate of rank below three. Next draw $p\sim\operatorname{Dirichlet}(2,2,2,2,2)$ and reject it if $\min_i p_i<0.05$. Draw and normalize a new independent Gaussian vector $u_0$, rejecting it if all initial margins are nonnegative. Construct its ideal path, reject a failed construction or an entry time outside $[0.05,3]$, and continue the same random stream until $40$ instances have been retained. The identifiers $00$--$39$ refer to this acceptance order. Every retained instance uses $\eta=1/\Lambda$, $e_R=0$, and the same five values of $R$ as G1 and G2. The window rule is the $0.15$-margin grid rule in Appendix~\ref{app:numerical-methods}. Table~\ref{tab:app-random-config} records the resulting step sizes and windows.

\begin{table}[htbp]
\centering
\small
\caption{Step sizes and fixed windows for all random instances, rounded to twelve decimal places. The generation sequence and grid rule specify their full floating-point values.}
\label{tab:app-random-config}
\begin{tabular}{crrcrr}
\hline
Instance & $\eta$ & $S$ & Instance & $\eta$ & $S$\\
\hline
00 & 2.383566011883 & 0.812550874730 & 20 & 3.239836363412 & 0.687360271366 \\
01 & 4.988250994041 & 3.192085672009 & 21 & 3.359344012717 & 1.983970959559 \\
02 & 4.079246859906 & 3.703531488345 & 22 & 2.326955393044 & 2.986875270919 \\
03 & 3.378190352060 & 2.424884905424 & 23 & 3.180895784073 & 0.292845536333 \\
04 & 1.769894663824 & 0.691047805794 & 24 & 3.970855233719 & 1.771797801208 \\
05 & 1.661883647012 & 2.979381305143 & 25 & 3.614814813406 & 0.875733263005 \\
06 & 2.410180655968 & 1.543820241189 & 26 & 1.313826296203 & 0.229567322949 \\
07 & 2.144367990609 & 3.361451673691 & 27 & 2.213293968037 & 0.342653214758 \\
08 & 3.635353698392 & 1.987541799767 & 28 & 2.372297412199 & 0.716098840947 \\
09 & 3.170971666367 & 1.468272237429 & 29 & 3.438312436521 & 3.025386520144 \\
10 & 1.392798115213 & 1.013943554119 & 30 & 3.322702624270 & 0.699481226831 \\
11 & 1.517461547499 & 1.719897277674 & 31 & 0.912410289175 & 1.457342562573 \\
12 & 3.414517658797 & 1.377518947141 & 32 & 2.229409107419 & 2.593091801222 \\
13 & 1.841063742509 & 1.952046163583 & 33 & 0.790274036819 & 0.450866533256 \\
14 & 2.049682853661 & 0.221885935260 & 34 & 2.191084682065 & 1.736853461839 \\
15 & 5.253370794500 & 2.285533068582 & 35 & 3.192526787799 & 2.384707644311 \\
16 & 2.717695573409 & 2.544737414187 & 36 & 3.043030409953 & 2.446346535484 \\
17 & 9.083183596016 & 2.883648439779 & 37 & 2.402212213338 & 3.127057049348 \\
18 & 1.797200700981 & 2.878985942083 & 38 & 4.438017389159 & 2.095695262535 \\
19 & 4.854169257180 & 2.112908573481 & 39 & 3.647588277199 & 1.929897739430 \\
\hline
\end{tabular}
\end{table}

\paragraph{Coverage and aggregation.}
All $200$ random runs reach their specified windows. At each value of $R$, Figure~\ref{fig:numerical-paths}(c) uses all $40$ errors, with the median and the $10$th and $90$th percentiles computed by linear interpolation between ordered observations. Table~\ref{tab:app-random-coverage} separates reaching $S$ from observing an actual continuation. A run can finish while its actual path is still in correction, or enter exactly at the stopping node. Such runs remain valid full-window path comparisons, but do not supply a positive post-entry cumulative sum. The medians and quantiles summarize the finite-$R$ variation across geometries.

\begin{table}[htbp]
\centering
\small
\caption{All $40$ random runs complete the fixed window at each value of $R$. ``After entry'' counts runs with at least one post-entry update; the next two columns give entry at the stopping node and no entry by that node.}
\label{tab:app-random-coverage}
\begin{tabular}{rrrrrrr}
\hline
$R$ & After entry & Entry at stop & No entry & 10\% & Median & 90\%\\
\hline
5 & 14 & 20 & 6 & 0.07693 & 0.18467 & 0.36983\\
10 & 20 & 15 & 5 & 0.05373 & 0.13004 & 0.21629\\
20 & 26 & 11 & 3 & 0.04282 & 0.09769 & 0.15039\\
40 & 36 & 3 & 1 & 0.03021 & 0.05787 & 0.09460\\
80 & 39 & 1 & 0 & 0.01850 & 0.03701 & 0.06194\\
\hline
\end{tabular}
\end{table}

\paragraph{Initialization perturbations and step sizes.}
We also use $e_R=f(R)v$, with the fixed unit vector $v=(0.48,-0.64,0.60)$ and $f(R)\in\{1,\sqrt R,R^{3/4}\}$, and vary $\eta$ while keeping the geometry and window fixed. Table~\ref{tab:app-stability} reports selected completed checks at $R=80$. For G2 with $f(R)=\sqrt R$, the full-window error equals the initial normalized perturbation to numerical precision, even though the entry-position error is smaller. This illustrates why the theorem retains the initial error separately. Steps with $\eta\Lambda>2$ lie outside the condition for the logarithmic bound, although they are covered by the general fixed-step theorem.

At $R=80$, the G1 runs with $f(R)=\sqrt R$ or $R^{3/4}$ and the G2 run with $f(R)=R^{3/4}$ exhausted their $4\times10^9$-update budgets before reaching $S$. Their partial-window errors are not included in the table or used to estimate a convergence rate.

\begin{table}[htbp]
\centering
\small
\caption{Completed stability checks at $R=80$. The entry error is $\|w_{\kappa_R}/R-E\|$ relative to the unperturbed ideal entry.}
\label{tab:app-stability}
\begin{tabular}{lrrrrr}
\hline
Geometry & $\eta$ & $f(R)$ & $\eta\Lambda$ & $\mathcal E_R(S)$ & Entry error\\
\hline
G1 & 0.25 & 0 & 0.20505 & 0.00795 & 0.00528\\
G1 & 0.5 & 0 & 0.41010 & 0.01160 & 0.00664\\
G1 & 1 & 0 & 0.82019 & 0.00885 & 0.00799\\
G1 & 2 & 0 & 1.64039 & 0.01433 & 0.01383\\
G1 & 4 & 0 & 3.28078 & 0.02796 & 0.02790\\
G1 & 1 & 1 & 0.82019 & 0.01398 & 0.00945\\
G2 & 0.25 & 0 & 0.61538 & 0.00650 & 0.00454\\
G2 & 0.5 & 0 & 1.23075 & 0.01024 & 0.00459\\
G2 & 1 & 0 & 2.46151 & 0.00904 & 0.00725\\
G2 & 2 & 0 & 4.92301 & 0.02080 & 0.01232\\
G2 & 0.5 & 1 & 1.23075 & 0.01256 & 0.00699\\
G2 & 0.5 & $\sqrt{80}$ & 1.23075 & 0.11180 & 0.03430\\
\hline
\end{tabular}
\end{table}

\subsection{Fixed features from handwritten digits}
\label{app:digits-details}

\paragraph{Data and initial direction.}
We use the $1797$ images returned by scikit-learn's \texttt{load\_digits}, corresponding to the test portion of the UCI optical-recognition dataset \citep{alpaydin1998digits}. Each image has $64$ pixel values in $\{0,\ldots,16\}$. We divide pixels by $16$, append a constant coordinate equal to one, and multiply all $65$ coordinates by the binary label. In each ordered pair of digits, the first has label $+1$ and the second label $-1$. The bias coordinate is trained and is included in the Euclidean norm and maximum-margin problem.

For each new task, retain its two digits in dataset order, initialize a fresh NumPy \texttt{default\_rng(0)}, and permute these indices. The first $\lfloor0.7n\rfloor$ form the training sequence and the rest the test set, without stratification. The 32-image task uses the first $32$ members of the $5/6$ training sequence and leaves the $109$-image test set unchanged. Its zero-based indices in the original $1797$-image array, in training order, are
\begin{center}
\small
\begin{tabular}{rrrrrrrr}
1692 & 1320 & 834 & 88 & 272 & 1136 & 1093 & 1769\\
1358 & 1010 & 1172 & 1173 & 969 & 741 & 1292 & 1762\\
1147 & 196 & 1115 & 625 & 717 & 1650 & 720 & 1447\\
1738 & 490 & 351 & 176 & 195 & 1550 & 1035 & 734
\end{tabular}
\end{center}
This smaller training set was added after the initial larger-task runs showed that actual entry delays occupied much of the affordable window at $R\leq80$. This fixed geometry allows both stages to be observed at these values of $R$.

Let $Z_A$ contain all signed features of the source digit pair. We solve
\[
v_A=\operatorname*{arg\,min}_{v\in\operatorname{conv}\{z_i^A\}}\|v\|,
\qquad u_0=v_A/\|v_A\|,
\]
using the convex-hull solver with tolerance $10^{-12}$. This determines the source maximum-margin direction directly; no source-task GD trajectory is used for these initializations. Across the three source tasks, the largest violation of $z_i^{A\top}v_A\geq\|v_A\|^2$ was $8.05\times10^{-16}$, simplex feasibility error was at most $4.45\times10^{-16}$, and complementarity error was at most $5.09\times10^{-17}$. Recomputed directions agreed with the saved directions within $7.67\times10^{-15}$. Separability of each target training set was checked by solving $Z_B a\geq\mathbf1$ with linear programming; the smallest attained margin across the four selected target sets exceeded $1-10^{-14}$.

\paragraph{Complete window for the 32-image task.}
The source pair is $3/8$, comprising all $357$ images. These same images, with uniform weights, define the evaluation loss in Figure~\ref{fig:app-numerical-supplement}(b). The $32$ training weights are $1/32$; ten initial training margins are negative. We use
\[
\eta=0.3178149016002142=1/\Lambda,\quad
\tau_*=0.489846798088871,\quad S=0.5648431242378853.
\]
At each $R\in\{5,10,20,40,80\}$, the run reaches $S$ and includes at least one GD update after entry. The ideal source-evaluation maximum is $0.03560724$, attained at $r=0.49297656$; the ideal endpoint value is $0.03363778$. At $R=80$, the observed normalized maximum is $0.03572499$, attained at $r=0.50679593$. The actual entry time is $0.50055847$, its position error is $0.0011856$, and the full-window error is $0.0195576$. The difference between these two spatial errors reflects, in part, the unshifted entry-time comparison. Evaluation on the held-out target digits is a separate measurement; it is not the source-evaluation curve shown here.

\paragraph{Entry locations for larger tasks.}
Table~\ref{tab:app-digits-tasks} gives the three larger training problems, all using uniform weights and $\eta=1/\Lambda$. Their initial directions are obtained from the complete source pairs by the same procedure. Figure~\ref{fig:app-numerical-supplement}(c) compares actual entry positions with the corresponding ideal entries at $R=10^2,10^3,10^4,10^5$. The errors at $R=10^5$ are respectively $9.95\times10^{-6}$, $4.67\times10^{-5}$, and $2.18\times10^{-5}$. These runs test the entry-location prediction; the figure does not report full continuation-window errors for the larger tasks.

\begin{table}[htbp]
\centering
\small
\caption{Larger digits tasks used for the entry-location comparison. The source margin is $\min_i z_i^{A\top}u_0$. The target split and feature map are specified above.}
\label{tab:app-digits-tasks}
\begin{tabular}{ccrrrrr}
\hline
Source & Target & Source size & Train/test & Source margin & $\eta$ & $\tau_*$\\
\hline
$3/8$ & $5/6$ & 357 & 254/109 & 0.20773 & 0.328481301706 & 6.473229321622\\
$4/9$ & $2/3$ & 361 & 251/109 & 0.37357 & 0.317649989551 & 14.787703491645\\
$0/1$ & $7/9$ & 360 & 251/108 & 0.59168 & 0.342935275073 & 16.407770420857\\
\hline
\end{tabular}
\end{table}

\begin{figure}[htbp]
\centering
\includegraphics[width=\linewidth]{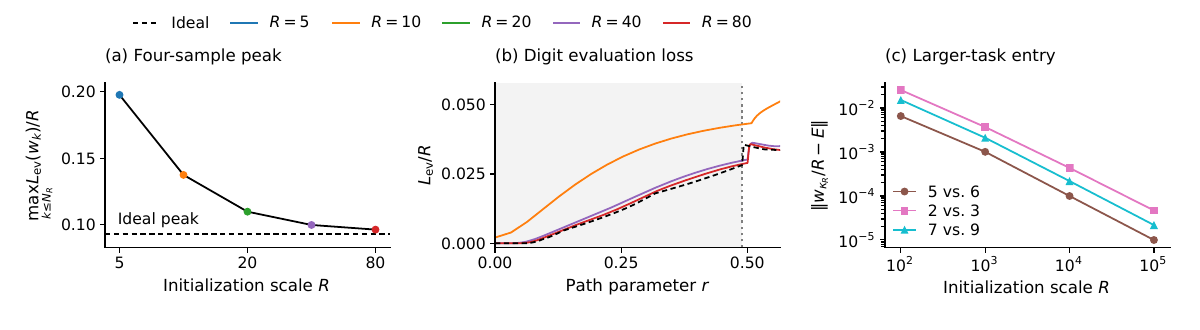}
\caption{\textbf{Additional functional and entry-location results.}
(a) The maximum evaluation loss divided by $R$ in the full-rank four-sample geometry G2 approaches the ideal value $553/(1440\sqrt{17})$; maxima include every GD iterate through the stopping iterate.
(b) Mean loss on all $357$ source images of digits $3/8$, divided by $R$, along GD trained on the fixed $32$ images of digits $5/6$. The black curve is the ideal evaluation profile, and shading marks the ideal correction interval. Colored curves connect saved nodes using each run's actual time parameterization.
(c) Entry-position errors $\|w_{\kappa_R}/R-E\|$ for the three larger target tasks in Table~\ref{tab:app-digits-tasks}. Each run stops at actual entry; this panel measures entry convergence rather than a completed post-entry path.}
\label{fig:app-numerical-supplement}
\end{figure}

\end{APPENDICES}
\end{document}